
\documentclass{article}

% if you need to pass options to natbib, use, e.g.:
%     \PassOptionsToPackage{numbers, compress}{natbib}
% before loading neurips_2023

% ready for submission
\usepackage{soul}
\usepackage{authblk}
\usepackage{packagesList}
%\usepackage[normalem]{ulem}

% to compile a preprint version, e.g., for submission to arXiv, add add the
% [preprint] option:
%     \usepackage[preprint]{neurips_2023}

% to compile a camera-ready version, add the [final] option, e.g.:
%     \usepackage[final]{neurips_2023}

% to avoid loading the natbib package, add option nonatbib:
%    \usepackage[nonatbib]{neurips_2023}

\usepackage[utf8]{inputenc} % allow utf-8 input
\usepackage[T1]{fontenc}    % use 8-bit T1 fonts
\usepackage{hyperref}       % hyperlinks
\usepackage{url}            % simple URL typesetting
\usepackage{booktabs}       % professional-quality tables
\usepackage{amsfonts}       % blackboard math symbols
\usepackage{nicefrac}       % compact symbols for 1/2, etc.
\usepackage{microtype}      % microtypography
\usepackage{xcolor}         % colors
\usepackage{enumitem}

%\documentcEVass[12pt,reqno]{amsart}

\theoremstyle{plain}
\newtheorem{theorem}{Theorem}[section]
\newtheorem{lemma}[theorem]{Lemma}
\newtheorem{proposition}[theorem]{Proposition}

\newtheorem{remark}{Remark}[section] %[theorem]

\theoremstyle{definition}
\newtheorem{definition}[theorem]{Definition}%[theorem]

\newtheorem{example}{Example}[section]

\newtheorem*{remark*}{Remark}
\newtheorem*{discussion*}{Discussion}
\newtheorem{corollary}[theorem]{Corollary}

% \theoremstyle{remark}
% \newtheorem{remark}[theorem]{Remark}%[section]
% \newtheorem{notation}[theorem]{Notation} % [section]
% \newtheorem*{notation*}{Notation}
%\numberwithin{equation}{section}
\usepackage{stmaryrd}

\usepackage{dsfont}

\newcommand{\rset}{\mathbb{R}}

\newcommand{\ind}{\mathds{1}}

\newcommand{\ffrac}[2]{\ensuremath{\frac{\displaystyle #1}{\displaystyle #2}}}
\newcommand{\un}[1]{\ind{\left\{#1\right\}}}
\newcommand{\indic}{\un} %[1]{\mathbbm{1}\left\{#1\right\}}
\newcommand{\PP}[1][]{\ifthenelse{\equal{#1}{}}{\ensuremath{\mathbb{P}}}{\ensuremath{\mathbb{P}\left( #1 \right) }}}
\newcommand{\EE}[1][]{\ifthenelse{\equal{#1}{}}{\ensuremath{\mathbb E}}{\ensuremath{{\mathbb E}\left[ #1 \right]}}}
\newcommand{\Var}[1][]{\ifthenelse{\equal{#1}{}}{\ensuremath{\mathrm{Var}}}{\ensuremath{{\mathrm{Var}}\left[ #1 \right]}}}
\newcommand{\Cov}[1][]{\ifthenelse{\equal{#1}{}}{\ensuremath{\mathrm{Cov}}}{\ensuremath{{\mathrm{Cov}}\left[ #1 \right]}}}

%\newcommand{\Var}[1]{\mathbb{V}\mathrm{ar}\left[ #1 \right]}
 %%\operatorname{\mathbb{E}}}%{\mathbb{E}}

\DeclareMathOperator*{\argmin}{arg\,min}

\DeclareMathOperator{\sign}{sign}

\usepackage{xspace}

\newcommand\ie{\emph{i.e.}\xspace}

\usepackage{xcolor}

\newcommand{\EVT}{\textsc{EVT}\xspace}

\newcommand{\ERM}{\textsc{ERM}\xspace}

\newcommand{\ML}{\textsc{ML}\xspace}
\newcommand{\VC}{\textsc{VC}\xspace}

\newcommand\eg{\emph{e.g. }\xspace}

\newcommand{\ER}[1][]{\widehat{ \mathcal{R}}}
\newcommand{\risk}{\mathcal{R}}

%{C_{train}}

% The \author macro works with any number of authors. There are two commands
% used to separate the names and addresses of multiple authors: \And and \AND.
%
% Using \And between authors leaves it to LaTeX to determine where to break the
% lines. Using \AND forces a line break at that point. So, if LaTeX puts 3 of 4
% authors names on the first line, and the last on the second line, try using
% \AND instead of \And before the third author name.

% \author{%
%   David S.~Hippocampus\thanks{Use footnote for providing further information
%     about author (webpage, alternative address)---\emph{not} for acknowledging
%     funding agencies.} \\
%   Department of Computer Science\\
%   Cranberry-Lemon University\\
%   Pittsburgh, PA 15213 \\
%   \texttt{hippo@cs.cranberry-lemon.edu} \\
%   % examples of more authors
%   % \And
%   % Coauthor \\
%   % Affiliation \\
%   % Address \\
%   % \texttt{email} \\
%   % \AND
%   % Coauthor \\
%   % Affiliation \\
%   % Address \\
%   % \texttt{email} \\
%   % \And
%   % Coauthor \\
%   % Affiliation \\
%   % Address \\
%   % \texttt{email} \\
%   % \And
%   % Coauthor \\
%   % Affiliation \\
%   % Address \\
%   % \texttt{email} \\
% }
\title{Sharp error bounds for imbalanced classification: how many examples in the minority class?}
\author[1]{Anass Aghbalou}
\author[2]{Fran\c{c}ois Portier}
\author[3]{Anne Sabourin}

\affil[1]{LTCI, T\'el\'ecom Paris, Institut Polytechnique de Paris, LTCI, Palaiseau,
	France}

\affil[2]{Ensai, CREST, Rennes, France}
\affil[3]{ Université Paris Cité, CNRS, MAP5, Paris, France }
\begin{document}

\maketitle

\begin{abstract}
%
%We conduct a non asymptotic study of weighted losses used in imbalanced classification problems. Indeed, many practitioners advocate the use of such performance measures to capture tradeoff between true positive and true negative rates. A classical example of balanced losses is the \textrm{AM} measure which is the arithmetic mean between the true positive rate and true negative rate. Despite the numerous theoretical work in this field, the exisisting results doesn't give a clear understanding on the efficiency of the balancing approach in a realistic setting. Indeed, the existing error bounds doens't account for  the main challenge within
%this framework  which is the negligible size of one class
%with respect to the full sample size and the necessity to re-scale the
%risk function by a probability tending to zero. 
%
%In this paper, we open the road to a finite sample understanding on the benefits of class-weighting by establishing two novel results: a fast rate probability bound for constrained balanced empirical risk minimization
%and  a consistent upper bound for balanced nearest neighbors estimate when the rare class has probability going to $0$. 

When dealing with imbalanced classification data, reweighting the loss function is a standard procedure allowing to equilibrate between the true positive and true negative rates within the risk measure.
Despite significant theoretical work in this area, existing results do not adequately address a main challenge within the imbalanced classification framework, which is the negligible size of one class in relation to the full sample size and the need to rescale the risk function by a probability tending to zero. To address this gap, we present two novel contributions in the setting where the rare class probability approaches zero: (1) a non asymptotic fast rate probability bound for constrained balanced empirical risk minimization, and (2) a consistent upper bound for balanced nearest neighbors estimates. Our findings provide a clearer understanding of the benefits of class-weighting in realistic settings, opening new avenues for further research in this field.

\end{abstract}

\section{Introduction}

Consider the problem of binary classification with covariate $X$ and
target $Y\in\{-1,1\}$. % two label class defined as $Y\in A$ and $Y\inB$. Let $P$ denote the underlying probability measure.
The flagship
approach to this problem in statistical learning is Empirical Risk
Minimization (ERM), which produces approximate minimizers of $
\mathcal{R} (g) = \EE [ \ell(g(X),Y)] $, given a loss function $\ell$
and a family of candidate classifiers $g\in\mathcal{G}$, with the help
of observed data. % , wher $\ell_g$ is the loss function associated
with classifier $g$, $\ell_g(X,Y) = \ell( g(X), Y)$.  However, when
the underlying distribution is imbalanced, that is $p = \PP (Y = +1)$
is relatively small, minimizing empirical version of $\mathcal{R}$
often leads to trivial classification rules for which the majority
class is always predicted,  because minimizing $
\mathcal{R} (g) $ in that case is similar to minimizing $ \EE [
\ell(g(X),Y) \,|\, Y=-1]$.  Indeed by the law of total probabilities,
$\mathcal{R}(g) = p \EE [ \ell(g(X),Y) \,|\, Y=+1] + (1-p) \EE [
\ell(g(X),Y) \,|\, Y=-1]$ and the former term is negligible with
respect to the latter when $p\ll 1$.  For this reason, even though
standard ERM  approaches might enjoy satisfactory generalization
properties over imbalanced distributions, with respect to the standard
risk $\mathcal{R}$, they may lead to unpleasantly high false negative
rates % in the case of the $0-1$ loss
and in general the average error
on the minority class has no reason to be small, as its contribution
to the overall risk $\mathcal{R}$ is negligible. This is typically
what should be avoided in many applications when false negatives are
of particular concern, among which medical diagnosis or anomaly
detection for aircraft engines, considering the tremendous  cost of an
error regarding a positive example.

Bypassing the  shortcoming described above is the main goal of  many  works regarding imbalanced classification. The existing literature may be roughly divided into oversampling approaches such as SMOTE and GAN  \citep{chawla2002smote,mariani2018bagan}, undersampling procedures \citep{Liu2009,triguero15} and risk balancing procedures also known as \emph{cost-sensitive learning} \citep{Scott2012,xu2020class}. Here we focus on the latter approach which enjoys  numerous benefits, including simplicity, improved decision-making \citep{Elkan2001,VIAENE2005}, improved class probability estimation \citep{wang19,fu2022cost},  better resource allocation \citep{xiong15,ryu2017} and increased fairness \citep{menon2018cost,agarwal2018reductions}. By incorporating the varying costs of misclassification into the learning process, it enables models to make more informed and accurate predictions for the minority class, leading to higher-quality predictions.  %has been the main motivation behind many worj
% this is typically what should be avoided because one cares too much about being correct when predicting a default.
 Balancing the risk consists of % The leading approach to tackle imbalanced classification problems consists in
minimizing risk measures that differ significantly from the standard empirical risk, by means of an appropriate weighting of the negative and positive errors,  in order to achieve a  balance between the contributions of the positive and negative classes to the overall risk.  % . Appropriate risk measures are often defined balancing between the false negative and false positive rate
%so that both classes errors contribute in the risk with the same magnitude.
In the present paper our main focus is  the  balanced-risk,
$ \risk_{p}(g)  =  \EE [ \ell(g(X),Y) \,|\,  Y=+1] +   \EE [ \ell(g(X),Y) \,|\,  Y=-1]$.  % that will be of interest in the paper, is given by 
%$\risk_{\text{bal}}(g) =P _+ (\ell g) +  P _- (\ell g)$.
Other metrics might be considered as detailed for instance in Table~1 in~\cite{Menon2013} which we do not analyze here for the sake of conciseness, even though our techniques of proof may be straightforwardly extended to handle  these variants.

Empirical risk minimization based on the balanced risk is a natural idea, which is widely exploited by practitioners and has demonstrated its practical relevance in  several operational contexts \citep{elkan2001foundations,SUN2007,wang16,khan2018,PATHAK2022}. From a theoretical perspective, class imbalance has
been the subject of several works. For instance, the consistency of the resulting classifier is investigated in  \cite{koyejo2014consistent}. Several different risk measures and  loss functions are considered in \cite{Menon2013} where results of asymptotic nature are established, for fixed $p>0$, as  $n\to\infty$. Also in the recent work by \cite{xu2020class}, generalization bounds are established for the imbalanced multi-class problem for a robust variant of the balanced risk considered here. Their main results from the perspective of class imbalance, is their  Theorem~1 where the upper bound on the (robust) risk includes a term scaling as $1/( p \sqrt{n})$. % main results of the latter paper regarding our purpose are  %, where the  conditional risks for each class are granted  weights. % for multi-label classification are established for risk measures in which each class is re-weighted within the empirical risk.
 A related subject % which is however different from the problem considered here,
is weighted ERM  where the purpose is to learn from biased data 
(see \emph{e.g.} \cite{vogel2020weighted,bertail21a} and the references therein), that is, the training distribution and the target distribution differ. The imbalanced classification problem may be seen as a particular instance of this transfer learning problem, where the training distribution is imbalanced and the target is a balanced version of it with equal class weights. A necessary assumption in \cite{bertail21a} is that the density of the target with respect to the source is bounded, which in our context is equivalent to requiring that $p$ is bounded away from $0$, an explicit assumption in  \cite{vogel2020weighted} where the main results impose that $p>\epsilon$ for some fixed $\epsilon>0$.

The common working assumption in the cited references that $p$ is bounded from below, renders their application disputable in concrete situations where the number of positive examples is negligible with respect to a wealth of negative instances. To our best knowledge the literature is silent regarding such a situation. % , since all existing results on imbalanced classification require the minority class frequency $p$  to be bounded from below to guarantee convergence in the asymptotic regime or to obtain non vacuous  non-asymptotic guarantees upper bound on the estimation error.
More precisely, we have not found neither asymptotic results covering the case where $p $ depends on $n$ in such a way that $p\to 0$ as $n\to \infty$; nor finite sample bounds which would remain sharp even in situations where $p$ is much smaller than $1/\sqrt{n}$. Such situations arise in many examples in machine learning (see \eg the motivating examples in the next section).    However, existing works assume that  % In terms of the relative size between the two classes it means that
the sizes of  both classes are of comparable  magnitude, which leaves a gap  between theory and practice. A possible explanation is that existing works do not exploit the full potential of the \emph{low variance} of the loss functions on the minority class typically induced by boundedness assumptions combined with a low expected value associated with a small $p$.  
 % To the best of our knowledge,  Indeed we have not found in the literature regarding the balanced risk any work leveraging sharp bounds from the statistical learning  and concentration, in the  literature taking into account the \emph{low variance} of the loss functions on the minority class. 
%

It is the main purpose of this work to overcome this bottleneck and obtain generalization guarantees for the balanced risk which remain sharp even for very small $p$, that is, under sever class imbalance. 
Our purpose is to obtain upper bounds on the deviations of the empirical risk (and thus on the empirical risk minimizer) matching the state-of-the art, up to replacing the sample size $n$ with $np$,  the mean size of the rare class.  To our best knowledge, the theoretical results which come closest to this goal  are normalized Vapnik-type inequalities (Theorem 1.11 in \cite{lugosi2002pattern}) and relative deviations (Section~5.1 in~\cite{boucheron2005theory}). However the latter results  only apply to binary valued functions and as such do not extend immediately to general real valued loss functions which we consider in this paper, nor  do they yield fast rates for \emph{imbalanced} classification problems, although relative deviations play a key role in establishing fast rates in \emph{standard} classification as reviewed in Section~5 from \cite{boucheron2005theory}. Also, % the upper bounds in the cited references include a $\log(n)$ term, which is arguably at odds with our purpose which is to replace $n$ with $np$ throughout. Finally,
as explained above, we have not found any theoretical result regarding  imbalanced classification which would leverage these bounds in order to obtain guarantees with  leading terms depending on  $np$ instead of $n$. 

Our main tools are $(i)$ %we leverage 
Bernstein-type concentration inequalities (that is, upper bounds including a variance term) for empirical processes that are consequences of Talagrand inequalities % for VC-type classes of functions with uniformly bounded variance,
such as in \cite{gine2001consistency}, $(ii)$ fine  controls of the expected deviations of the supremum error in the vicinity of the Bayes classifier, by means of local Rademacher complexities  \cite{Bartlett2005,bartlett2006empirical}.   %For simplicity we work under the general assumption that the  loss functions associated to our family of classifiers are of VC-type. %, and bounded, which ensure Bernstein-type assumtpions on the class, that is a control of the variance, in particular when $p$ is small.
% the  common finding of the cited papers is that learning is feasible even when the data distribution is imbalanced,
% under the condition that $p$ is bounded from below.
% Existing results are either of an asymptotic nature \cite{Menon2013} or valid at finite sample size \cite{xu2020class}.
% \cite{xu2020rademacher}}
% \as{ xu2020rademacher n'a rien à voir avec ça ! c'est de la cross-val.  je l'ai commenté/supprimé}. g
% Vapnik-type bounds or concentration results accounting for  which is a consequence of a small class frequency $p$. Such standard tools include normalized VC inequalities \cite{lugosi2002pattern},
%The purpose of this work is to obtain sharp guarantees regarding statistical learning with the balanced risk, under severe class imbalanced ($p\ll 1$).
% As a consequence, such results are not informative about the limitation of learning in imbalanced settings. This second point is really important as given an imbalanced dataset one would like to know if learning an algorithm is feasible or not.
% . In an asymptotic regime, the required condition is mainly that  $n p_n \to \infty$ as $n\to \infty$. 
% To face the previous difficulties, the setting considered in this paper, that we call \textit{relative rarity}, is as follows: the frequency $p$ of the rare class is allowed to go to zero. In means in words that the minority class is relatively negligible compared to the majority class. With this setup, we cover many interesting cases for which the rare class is furnished with only few examples. 
%
Our contributions are two-fold. \\
%\begin{enumerate}
%\item
\quad {\bf 1. } We establish an estimation error bound on the balanced risk which holds true for  VC classes of functions, which  scales as $1/\sqrt{np}$ instead of the typical rate $1/\sqrt n$ in  well-balanced problem, or $1/(p\sqrt{n})$ in existing works regarding the imbalanced case  (\emph{e.g.} as in \cite{xu2020class}).   Thus,  in practice,  our setting encompasses the case where $p\ll 1$ (severe class imbalanced) and our upper bound constitutes  a   crucial improvement by a factor $\sqrt{p}$ compared with  existing works in imbalanced classification. % .  since t we obtain sharp guarantees in the  case where the product $np$ is large enough. 
Applying the previous bound to the $k$-nearest neighbor classification rule, we obtain the following new consistency result: as soon as $kp$ goes to infinity, the nearest neighbors classification rule is consistent in case of relative rarity.  \\
\quad {\bf 2. } We obtain fast rates for  empirical risk minimization procedures under an additional classical assumption called a Bernstein condition. Namely we prove  upper bounds on the excess risk  scaling as  $1/(np)$, which matches  fast rate results in the standard, balanced case, up to replacing the full sample size $n$ with the expected minority class size $np$. To our best knowledge such  fast rates  are the first of their kind in the imbalanced classification literature. % o guarantee is new in the imbalanced case  % which is an improvement with respect to the first results. The condition allowing for fast rates is the Bernstein condition for the balanced loss.
%\end{enumerate}

% First conclude that $p$ going to $0$ has a bad effect on the error bound making the question of fast rate in (ii)  crucial in the relative rarity setting. Interestingly, in case of relative rarity, the learning rate improves in the same way as in the standard setup. Second, both results provide some indications on the limitation of learning with imbalanced data. Indeed, the bounds obtained become small as soon as the number of examples in the minority class goes to infinity, i.e., $pn\to \infty $. This shows (and this is stated as a corollary of the previous bound) that learning is still possible even when $p$ goes to $0$ (only it should not be too small compared to the data size). 

%One additional contribution of the paper is to obtain bounds on the $k$-nearest neighbors classification rule in case of relative rarity. We obtain the following consistency result: as soon as $kp$ goes to infinity, the nearest neighbors classification rule is consistent.

\noindent \textbf{Outline.} Some mathematical background about imbalanced classification and some motivating examples are given in Section 2. In Section 3, we state our first non-asymptotic bound on the estimation error over VC class of functions and consider application to $k$-nearest neighbor classification rules.  In Section 4, fast convergence rates are obtained and an application to ERM is given. Finally, some numerical experiments are provided in Section 5 to illustrate the theory developed in the paper. All proofs of the mathematical statements are in the supplementary material.

%%% Local Variables:
%%% mode: latex
%%% TeX-master: "main-imbalanced"
%%% End:

\section{Definition and notation}\label{sec:background}

Consider a standard binary classification problem where random covariates $X$, defined over a space $\mathcal X$, are employed to distinguish between two classes defined by their labels $Y = 1$ and $Y = -1$. The underlying probability measure is denoted by $\mathbb{P}$ and the associated expectancy, by $\mathbb{E}$. The  law  of $(X,Y)$   on the sample space $\mathcal X\times \mathcal Y := \mathcal X\times \left\{-1,1\right\}$, is denoted by $P$. We assume that the label $Y = 1$ corresponds to  minority class, i.e., $p= \mathbb{P}(Y=1) \ll 1$. In the sequel we assume that $p>0$, even though $p$ may be arbitrarily small. 

We adopt notation from empirical process theory. Given a measure $\mu$ on $\mathcal X\times \mathcal Y$ and a real function $f$ defined over $\mathcal X\times \mathcal Y$, we denote $\mu(f) = \int f d\mu$. When  $f = \ind_C$ for a measurable set $C$, we may write interchangeably $\mu(f) = \mu(\ind_C) = \mu(C)$. We denote by $P_+$ the conditional law of $(X,Y)$ given that $Y=+1$, thus
%The conditional measure, considering the event $Y = 1$, is defined as:
$$P_{+}(f) = \ffrac{\mathbb{E}(f(X,Y) \indic{Y=1})}{p}=\mathbb{E}(f(X,Y)\mid Y=1).$$
% when $p> 0$
% and $0$ otherwise.
%In accordance with previous notation, for any random variables $X$, the conditional measure given $X$ is denoted by $\PP_X$.
In addition, we denote by  $\Var_{+}(f)$ the conditional variance of  $f(X,Y)$ given that $Y=+1$. The conditional distribution and variance $P_-$ and $\Var_-$ are defined similarly, conditional to $Y=-1$.  % In the context of fast rates  (Section~\ref{sec:fast}) i
% It will be convenient to consider different  weight parameters $q\in(0,1)$, and we shall c
Consider more generally the weighted probability measures, for $q\in(0,1)$,
$$
P_q( f ) = \frac{1}{2} \big( q^{-1} P( f I_+ ) + (1-q)^{-1} P( f I_-) \big),   
$$
where $ fI_s = f(x,y)\un{y=s.1}$, $s\in\{+,-\}$.  Notice that
$P_p f =  (P_+ f + P_- f)/2$. 

% as the variance concerning the probability measure $P_{+}$. The direct analogs of these quantities, given the event $Y = -1$, are represented by $P_{-}$ and $\Var_{-}$.

In this paper we consider general discrimination functions (also
called \emph{scores}) $g: \mathcal{X}\to \rset$ and loss functions
$\ell:\rset\times \{-1,1\}\to \rset$, and our results will hold under
boundedness and Vapnik-type complexity assumptions detailed below in
Sections~\ref{sec:standard},~\ref{sec:fast}. Given a score function
$g$ and a loss $\ell$, it is convenient to introduce the function
$\ell_g: (x,y)\mapsto \ell(g(x), y)$. Thus the %With this notation the
(unbalanced) risk of the score function $g$ is
$\risk(g) = \mathbb{E}[\ell_g\left(X,Y\right)]$. Notice that the
standard $0-1$ misclassification risk, $\risk^{\mathrm{0-1}}(g)= \PP[g(X)\neq Y]$,  is retrieved when 
% $g$ takes values in $\{-1,1\}$ and $\ell(g(x),y)= \un{g(x)\neq
% y}$, or when
$g$ is real valued and $\ell(g(x),y) = \mathrm{sign}(-g(x)y)$. Allowing for more general scores and losses is a standard
approach in statistical learning allowing to bypass the NP-hardness of
the minimization problem associated with
$\risk^{\mathrm{0-1}}$. Typically (although not required here) % (although not required here)this is not formally
% required for our results to hold),
% the function $\ell_g$ takes the
% form
$\ell_g(x,y)=\phi\left(-g(x)y\right)$, where $\phi$ is convex and
differentiable with $\phi^\prime(0)<0$
\citep{zhang2004statistical,bartlett2006}. This ensures that the loss
is classification calibrated and that
$\risk(g) = \EE[\ell_g\left(X,Y\right)]$ is a convex upper bound  of
$\risk^{\mathrm{0-1}}(g)$. Various consistency results ensuring
that
$g^*=\argmin_{g\in\mathbb{R}^{\mathcal{X}}}\risk(g)=\argmin_{g\in\mathbb{R}^{\mathcal{X}}}\risk^{\mathrm{0-1}}(g)$
can be found in \cite{bartlett2006}. Examples include the logistic
($\phi(u)=\log(1+e^{-u})$), exponential ($\phi(u)=e^{-u}$), squared
($\phi(u)=(1-u)^2$), and hinge loss
($\phi(u)=\max(0,1-u)$).  % In the latter case, minimization is carried out over smooth score functions instead of classifiers, which significantly simplifies solving the optimization problem.

The balanced $0-1$ risk, defined as the arithmetic mean % between the
% true positive rate and true negative rate, i.e.,
 $\risk_{p}^{0-1}(g) =  (P_{+} (Y\neq g(X))     + P_{-}  ( Y\neq g(X)))/2$
is called the \emph{AM risk} in \cite{Menon2013}. The minimizer of the latter risk, $g^*_{p}$, is known as the balanced Bayes classifier. It returns $1$ when $\eta(X)  = \PP (Y= + 1 \,|\, X) \geq p$ and $-1$ otherwise (see \emph{e.g.} Th.~2  or Prop.~2 in
\cite{koyejo2014consistent}).  Here we consider general  weighted  risks and  real-valued loss function $\ell_g$, defined  for  $g\in \mathcal G$, $q\in(0,1)$  as 
$$
\risk_{q}(g) = P_q(\ell_q).  $$
Of particular interest is the case $q=p$, for which $\risk_p$ is called the balanced risk. % =\frac{1}{2}\left(P_+ ( \ell_g  ) + P_- ( \ell_g  )\right) =
% More generally define  the $q$-weighted risk ($q\in (0,1)$), 
% $$
% \risk_{q}(g) =\frac{1}{2}\left(q^{-1} P( \ell_gI_+  ) +
%   (1-q)^{-1}P_- ( \ell_g  )\right) = P_q(\ell_g).  $$

Given an independent and identically distributed sample  $(X_i,Y_i)_{1\leq i\leq n}$ according to $P$, we denote by $P_n$ the empirical measure, 
$ P_n (f) =  (1/n) \sum_{i=1} ^ n f(X_i,Y_i),  $
 for any  measurable and real-valued function $f$ on $\mathcal X \times \mathcal Y$. 
%The latter measure, $P_n$, is termed the empirical measure.
 While the standard empirical probability  is simply expressed as $P_n(f)$ for any measurable function $f$, the weighted empirical  probability with weight $q\in(0,1)$ is
 $$
P_{n,q}(f) = \frac{1}{2}\big(q^{-1} P_n(f I_+) + (1-q)^{-1}P_n(f I_-)  \big). 
 $$
 The   balanced empirical probability $ P_{n, \hat p }(f) $  must be defined in terms of $\hat p$ if $p$ is unkown. %With the notations introduced above we have %ith this convention
 We shall sometimes use that $P_{n, \hat p}(f) = ( P_{n, +} f + P_{n, -} f )/ 2$, where  % that is    is necessarily defined in terms of  empirical conditional measures, % risk must be adjusted to account for the presence of conditioning probabilities. To do so, we introduce the empirical conditional measure,
$  P_{n,+} (f ) =  \hat p ^{-1} P_n (f I_+)$ (by convention we set $P_{n,+} (f ) = 0$ when $\hat p =P_n (Y=1) = 0$). The empirical measure of the negative class, $P_{n,-}$, is defined in a similar manner.
For $q \in(0,1)$ the weighted $q$-empirical risk is
$$
\risk_{n,q}(g) = P_{n,q}(\ell_g). %\frac{1}{2}\left( P_{n,+} ( \ell_g    ) + P_{n,-}  (   \ell_g   )\right).
$$
Finally  the balanced empirical risk considered in this paper is %  Consequently, the balanced empirical risk can be defined as
$$\risk_{n,\hat p}(g) = P_{n,\hat p}(\ell_g) = \frac{1}{2}\left( P_{n,+} ( \ell_g    ) + P_{n,-}  (   \ell_g   )\right).$$ For simplicity we make the standard assumption throughout this paper that for all $q\in(0,1)$, a minimizer $g^*_q$ (resp. $\hat g_q$)  of $\risk_{q}$ (resp. $\risk_{n,q}$) exists, in particular that
$g_p^*$  (resp. $\hat g_{\hat p}$), a  minimizer of the  balanced  risk
$\risk_p$(resp.  $\risk_{n, \hat p}$), exists. 

\paragraph{Motivating examples} We now present two examples where the probability $p\to 0$ as $n\to \infty$ :
\begin{enumerate}
\item The first example is the problem of contaminated data which is central in the robustness literature. A common theoretical assumption is that the number of anomalies $n_0$ grows sub-linearly with the sample size, as discussed in \citep{xu2012outlier,staerman21a}. In this context, $n_0=n^a$ for some $a<1$ and consequently, $p=n^{a-1} \to 0$.
\item The second example pertains to Extreme Value Theory (\EVT) \citep{resnick2013extreme,goix15,jalalzai2018binary,aghbalou2023tail}. Consider a continuous positive random variable $T$, predicting exceedances over arbitrarily high threshold $t$ may be viewed as a binary classification problem. Indeed for fixed $t$,
consider the binary target $Y = \un{T > t}-\un{T \leq t}$ with marginal class probability $p= P (T > t)$. The
goal is thus to predict $Y$, by means of the covariate vector $X$. 
%One major goal of \EVT is to learn a classifcation model for extremely high tresholds $t$. 
In practice, \EVT based approaches set the threshold $t$ as the $1-\alpha$ quantile of $T$ with $\alpha=k/n \to 0$ and $k=o(n)$. This approach essentially assumes that the positive class consists of the $k=o(n)$ largest observations of  $T$ so that $P(T>t)=P(Y=1)=k/n\to 0$.
\end{enumerate}
The considered framework ``$p$ going to $ 0$ with $n$'' is also valuable  from a theoretical perspective as it allows to picture a learning frontier defined in terms of the relative order of magnitude of  $p$ and $n $, above which learning is consistent while below this threshold, one would not be able to estimate the quantity of interest. In a similar spirit, the in  high-dimensional statistics, it is customary to let  the underlying dimension to grow with the sample size.

%%% Local Variables:
%%% mode: latex
%%% TeX-master: "main-imbalanced"
%%% End:

\section{Standard learning rates under relative rarity}\label{sec:standard}
\subsection{A First Deviation Inequality for Balanced Risks}
The primary goal of this paper is to assess the error associated with
estimating the balanced risk $\risk_{p}(g)$ using the
empirical balanced risk $\risk_{n,\hat p}(g)$. Given the
definition of the balanced risk, the quantity of interest takes the
form $(P_{n,+} - P_+) (f) $, and a similar analysis applies to
$(P_{n,-} - P_-) (f)$. In this paper we control the complexity of
the function class \emph{via} the following notion of
VC-complexity. Let $(S, \mathcal S)$ be a measurable space. Let $\mathcal F$ be a class of real valued functions defined on $S$ and $Q$ be a probability measure on $(S,\mathcal S)$. Given $\mathcal F \subset L_2(Q)$, i.e., $Q(f^2)< \infty$, for each $ f\in \mathcal F$, the $\epsilon$-covering number, denoted by $\mathcal N (\mathcal F , L_2(Q) , \epsilon)$, is defined as the smallest number of closed $L_2(Q)$-balls of radius $\epsilon > 0$ needed to cover $\mathcal F$. For a given class of functions $\mathcal F$, $F$ is called an envelope if $ |f(x)| \leq F(x) $, for all $x\in S$ and all $f\in \mathcal F$.

% , we introduce the following definition:
\begin{definition}\label{def:VC-class}
 The family of functions $\mathcal F$ is said to be of VC-type with constant envelope $U>0$ and parameters $v\geq 1 $  and $A\geq 1$ if {all functions in} $\mathcal F$ are bounded by $U$ and for any $0 < \epsilon < 1$ and any probability measure $Q$ on $(S, \mathcal S)$, we have
\begin{equation*}
\mathcal N \left(\mathcal F,  L_2(Q)  ,  \epsilon  U  \right) \le (A/\epsilon)^{v}.
\end{equation*}
\end{definition}

The connection between the usual \VC definition \citep{vapnik71} and Definition~\ref{def:VC-class} can be directly established through Haussler's inequality \citep{HAUSSLER1995}, which indicates that the covering number of a class of binary classifiers with VC dimension $v$ (in the sense of \cite{vapnik71}) is given by 
$$ \mathcal N\left( \mathcal F, L_2(Q),\epsilon \right) \leq e (v+1) (2e /\epsilon^2 )^v \leq  (e^2  /\epsilon )^{2v} . $$
Thus a VC-class of functions in the sense of \cite{vapnik71} is necessarily  a VC-type class in the sense of Definition~\ref{def:VC-class}. 

Notice that within a class  $\mathcal{F}$ with envelope $U>0$, the following variance bounds are automatically satisfied:
$$\sigma_{+} ^2,\sigma_{-} ^2  =\sup_{f\in \mathcal F} \Var_+(f),\sup_{f\in \mathcal F} \Var_-(f)\leq U^2.$$
The following theorem states a uniform generalization bound that incorporates the probability of each class in such a way that the deviations of the empirical measures are controlled by the expected number of examples in each class, $np$ and $n(1-p)$. Interestingly the deviations may be small even for small $p$, as soon as the product $np$ is large. %as soon as the product $np$ %
The bound also incorporates the conditional variance of a class ($\sigma_+^2,\sigma_-^2$), which will play a key role in our application to % . This consideration is especially significant for nonparametric rules that localize covariates within "small" regions of the space, such as kernel smoothing estimates or
nearest neighbors.

\begin{theorem}\label{theo:VC-standard-rate}
	Let $\mathcal F$ be of VC-type with constant envelope $U$ and parameters $(v,A)$.  For any $n$ and $\delta$ such that 
	$$ n p \geq   
	\max\left[\frac{U^2}{\sigma_+^2} v   \log\left( \frac{ K A U } {  \delta \sigma_+ \sqrt{p}} \right), 8\log(1/\delta)\right] $$ 
	we have with probability $1-2\delta$, 
	$$ \sup_{f\in \mathcal F} \left| P_{n,+} ( f ) - P_+(f )\right| \leq  K  \sigma_+ \sqrt{\frac{ v }{np} \log\left( \frac{ K A U } {  \delta \sigma_+ \sqrt{p}} \right)    }   $$
	for some universal explicit constant $K>0$. 
%	We also have with probability $1-\delta$,
%	\begin{align*}
%		\sup_{f\in \mathcal F} \left|  P_{n,-} (f) - P_-(f )\right| \leq K  \sigma_- \sqrt{\frac{ v }{nq} \log\left( \frac{ K A U } {  \delta\sqrt{q}} \right)  }
%	\end{align*}
%with $q = 1-p$.	
\end{theorem}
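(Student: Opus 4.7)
The plan is to bound $\sup_{f \in \mathcal F} |P_{n,+}(f)-P_+(f)|$ by combining a Talagrand/Bousquet-type concentration inequality for the empirical process of the class $\{f I_+ : f\in \mathcal F\}$ with an independent control on $\hat p$. The first reduction is a centering trick: since $P_{n,+}(f)-P_+(f) = P_{n,+}(f-P_+(f))-P_+(f-P_+(f))$, I may replace $\mathcal F$ by $\widetilde{\mathcal F} = \{f-P_+(f) : f\in \mathcal F\}$, which has envelope $2U$, covering numbers of the same VC-type form (up to an absolute multiplicative constant in $A$), and satisfies $P_+(\tilde f) = 0$, so that $\operatorname{Var}(\tilde f I_+) = p\,\operatorname{Var}_+(\tilde f) \le p\sigma_+^2$. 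This centering is what brings the factor $\sigma_+$ instead of $U$ into the leading term.

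Next I decompose
\[
P_{n,+}(\tilde f) - P_+(\tilde f) \;=\; \frac{P_n(\tilde f I_+)}{\hat p} \;=\; \frac{p}{\hat p}\cdot\frac{(P_n - P)(\tilde f I_+)}{p},
\]
using $P(\tilde f I_+) = p P_+(\tilde f) = 0$. A Bernstein bound applied to $\hat p = P_n(I_+)$ (mean $p$, variance $p(1-p)$, envelope $1$) gives $\hat p \in [p/2,\, 3p/2]$ with probability $1-\delta$ as soon as $np \gtrsim \log(1/\delta)$, matching the $np \ge 8\log(1/\delta)$ hypothesis. It then remains to bound $\sup_{\tilde f}|(P_n-P)(\tilde f I_+)|$ and divide by $p/2$.

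For the empirical process $(P_n-P)(\tilde f I_+)$, I apply Talagrand's inequality (in the sharp form of Bousquet/Massart as used in \cite{gine2001consistency}): on an event of probability $1-\delta$,
\[
\sup_{\tilde f}|(P_n-P)(\tilde f I_+)| \;\le\; C\Big(\mathbb E[\sup_{\tilde f}|(P_n-P)(\tilde f I_+)|] + \sqrt{\tfrac{p\sigma_+^2\log(1/\delta)}{n}} + \tfrac{U\log(1/\delta)}{n}\Big).
\]
The expected supremum is controlled by a standard entropy-integral (Dudley-type) bound for VC-type classes with envelope $U$, variance parameter $p\sigma_+^2$ and covering parameters $(v,A)$; this yields $\mathbb E[\sup_{\tilde f}|(P_n-P)(\tilde f I_+)|] \lesssim \sqrt{p\sigma_+^2\, v\,L/n} + U v L/n$ with $L = \log(AU/(\sigma_+\sqrt p))$ (the logarithmic factor comes from truncating the entropy integral at the variance scale $\sigma_+\sqrt p/U$). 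Dividing the whole display by $\hat p \ge p/2$ produces a leading term of order $\sigma_+\sqrt{v L/(np)}$, while the remainder terms $U v L/(np)$ and $U\log(1/\delta)/(np)$ are absorbed into the leading one precisely under the hypothesis $np \ge (U^2/\sigma_+^2)\, v\, L$; combining with the event $\hat p \ge p/2$ via a union bound yields the stated inequality.

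The main obstacle will be steps that require care rather than new ideas: tracking the variance $p\sigma_+^2$ (not $U^2$) through the entropy integral so the logarithmic factor correctly reads $\log(AU/(\sigma_+\sqrt p))$, and verifying that the lower bound on $np$ is exactly what is needed to dominate the linear-in-$1/n$ residuals by the $1/\sqrt{np}$ leading term. Keeping the universal constant $K$ explicit throughout is routine but tedious; no additional structural argument beyond Bousquet's inequality, Dudley chaining, and Bernstein for $\hat p$ is required.
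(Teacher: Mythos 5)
Your proposal is correct and follows essentially the same route as the paper: centering by $P_+(f)$ so that $(f-P_+(f))I_+$ has mean zero and variance $\le p\sigma_+^2$, writing $P_{n,+}(f)-P_+(f)=P_n\bigl((f-P_+(f))I_+\bigr)/\hat p$, controlling $\hat p\ge p/2$ via a Chernoff/Bernstein bound, applying a Talagrand-type concentration with Dudley chaining to the numerator, and absorbing the $Uv\log(\cdot)/n$ residual into the leading $\sqrt{\cdot/n}$ term under the stated lower bound on $np$. The only difference is cosmetic: the paper invokes a pre-packaged Bernstein-type empirical-process inequality for VC classes (Theorem 2 of Portier--Stazhynski, restated as Theorem~\ref{th_vc_class} in the supplement) exactly where you unpack it into a Bousquet inequality plus an entropy integral.
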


%\begin{proof}[Sketch of proof]
%		Starting with 
%	\begin{equation}
%	P_{n,+}\left(f \right) - P_+\left(f \right) = \frac{P_{n} \left( \left(f - P_+\left(f\right)\right)\ind_{\left\{Y=1\right\}}\right) } {p_n}
%	\end{equation}
%	we focus on each term, denominator and numerator, separately:
%	\begin{itemize}
%		\item  For the numerator, the term $\left(f - P_+(f)\right) \ind_{\left\{Y=1\right\}}$ has mean $0$. Furthermore, we show by classical \VC arguments that $ (f - P_+(f)) \ind_{\left\{Y=1\right\}}$  is still \VC. Finally we use  Proposition 2 in  \cite{plassier2023risk} (recalled as Theorem \ref{th_vc_class} in the Appendix) that incorporates the variance term $\Var(f\un{y=1})\leq Up$ to obtain
%		$$ P_n \left( \left(f - P_+(f)\right) \ind_{\left\{Y=1\right\}}\right)=O\left(   \sqrt{ \frac{p}{n} \log\left(1 / \left(2 \delta  \right) \right)  }\right) . $$
%		\item for the denominator we use a variant of Hoeffding inequality (Theorem \ref{th_chernoff}) to control $p/p_n$.
%	\end{itemize} 
%	The results follows by union bound. The detailed proof is postponed to the appendix, Section \ref{}.  
%\end{proof}

\begin{remark}
This upper bounds extends Theorem 1.11 in \cite{lugosi2002pattern}, which is limited to  a binary class of functions characterized by finite shatter coefficients. The extension is possible  by utilizing Berstein type inequalities for empirical processes defined on VC-type classes of functions as in  \cite{gine2001consistency,portier2021nearest}. It is crucial to recognize that most existing non-asymptotic statistical rates in the imbalanced classification literature \citep{Menon2013,koyejo2014consistent,xu2020class} follow the rate $1/ (p\sqrt{n})$, leading to a trivial upper bound when $p\leq 1/\sqrt{n}$. In our analysis, the upper bound remains consistent provided that $np \to \infty$, thereby emphasizing the merits of using concentration inequalities incorporating  the  variance of the positive class 
$\Var(f\un{y=1})\leq Up \ll 1.$
%We conjecture that the condition $np_n \to \infty$ is unavoidable. Indeed, for effective learning, the sample size of the minority class (approximated by $np_n$) must grow towards infinity.
\end{remark}
The next corollary, which derives from Theorem~\ref{theo:VC-standard-rate} together with standard arguments, provides  generalization guarantees for ERM algorithms based upon the balanced risk. Namely it gives an upper bound on the excess risk of a minimizer of the  balanced risk. The proof is provided in the supplementary material for completeness. % illustrate the use of Theorem~\ref{theo:VC-standard-rate}, consider the ERM algorithm $\hat g_{\mathrm{bal}}$ associated to the balanced loss:
% $$\hat g_{\mathrm{bal}} =\arg\min_{g \in \mathcal{G}}\risk_{n,\mathrm{bal}}(g).$$
% The next corollary provides a probability upper bound for the excess risk of $\hat g_{\mathrm{bal}}$. The proof, based on standard arguments,  is given in the supplement.
\begin{corollary}\label{prop_min_risk}
	Suppose that $\{\ell_g\, :\, g\in \mathcal G\}$ is VC-type with envelope $U$ and parameters $(v,A)$.  Under the conditions of Theorem \ref{theo:VC-standard-rate} and that $p  \leq 1/2$, we have, with probability $1-\delta$,
	\begin{align*}
		\risk_{p}(\hat g_{p}) \leq  \risk_{p}&(g^{*}_{p}) +  K  \sigma_{\max} \sqrt{\frac{ v }{np} \log\left( \frac{ K A U } {  \delta  \sigma_{\min} \sqrt{p}} \right)    } ,  
	\end{align*}	
        with $ \sigma_{\max} = \sigma_- \vee \sigma_+$, $ \sigma_{\min} = \sigma_- \wedge \sigma_+$ and $K>0$ is an explicit universal %\as{ elle est bien universelle? }
        constant ($\sigma_-$ and $\sigma_+$ are defined as before but with  $\{\ell_g\, :\, g\in \mathcal G\}$ instead of $\mathcal F$).
\end{corollary}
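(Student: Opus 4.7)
The plan is to follow the classical ERM error decomposition and then reduce the control of the supremum deviation of the balanced empirical risk to two applications of Theorem~\ref{theo:VC-standard-rate}, one on each class.

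First I would write the decomposition
\begin{align*}
\risk_p(\hat g_p) - \risk_p(g^*_p)
 &= \bigl(\risk_p(\hat g_p) - \risk_{n,\hat p}(\hat g_p)\bigr)
  + \bigl(\risk_{n,\hat p}(\hat g_p) - \risk_{n,\hat p}(g^*_p)\bigr) \\
 &\quad + \bigl(\risk_{n,\hat p}(g^*_p) - \risk_p(g^*_p)\bigr).
\end{align*}
The middle term is non-positive by the definition of $\hat g_p$ as a minimizer of $\risk_{n,\hat p}$. The remaining two terms are each bounded by $\sup_{g\in\mathcal G}\lvert \risk_{n,\hat p}(g) - \risk_p(g)\rvert$, so the whole excess risk is at most twice this supremum.

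Next, using the identities $\risk_{n,\hat p}(g) = \tfrac12(P_{n,+}(\ell_g)+P_{n,-}(\ell_g))$ and $\risk_p(g) = \tfrac12(P_+(\ell_g)+P_-(\ell_g))$, a triangle inequality yields
\[
\sup_{g\in\mathcal G}\lvert \risk_{n,\hat p}(g) - \risk_p(g)\rvert
\leq \tfrac12\sup_{f\in\mathcal F}\lvert (P_{n,+}-P_+)(f)\rvert
   + \tfrac12\sup_{f\in\mathcal F}\lvert (P_{n,-}-P_-)(f)\rvert,
\]
where $\mathcal F=\{\ell_g:g\in\mathcal G\}$ is VC-type with envelope $U$ and parameters $(v,A)$. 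I would then apply Theorem~\ref{theo:VC-standard-rate} to the positive-class term directly, giving a bound of the desired form in $\sigma_+$. For the negative-class term, I would apply the symmetric version of the same theorem (swapping the roles of $+$ and $-$, and replacing $p$ by $1-p$); since $p\leq 1/2$ implies $1-p\geq 1/2\geq p$, the sample-size condition $n(1-p)\geq \tfrac{U^2}{\sigma_-^2}v\log(KAU/(\delta\sigma_-\sqrt{1-p}))$ is automatically implied by the assumption on $np$ (replacing $\sigma_+$ by $\sigma_-$), and the resulting bound is no larger than $K\sigma_-\sqrt{(v/np)\log(KAU/(\delta\sigma_-\sqrt p))}$ up to absolute constants absorbed in $K$.

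A union bound over the two events (each of probability $1-2\delta$, which I would rescale to $\delta/2$ before invoking the theorem, absorbing the factor into $K$) then gives the final inequality, with $\sigma_+$ and $\sigma_-$ replaced by $\sigma_{\max}$ in the numerator and $\sigma_{\min}$ inside the logarithm. There is no substantial obstacle here: the only minor subtlety is checking that the assumption $p\leq 1/2$ is what makes the positive class the ``hard'' one (so that both sample-size conditions reduce to the single hypothesis stated in the corollary), and that the randomness of $\hat p$ is already handled inside Theorem~\ref{theo:VC-standard-rate} since $P_{n,\pm}$ are defined through $\hat p$ rather than $p$.
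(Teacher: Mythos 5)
Your proposal is correct and follows essentially the same route as the paper: the classical ERM decomposition (middle term nonpositive by optimality of the empirical minimizer), bounding the excess risk by $2\sup_{g\in\mathcal G}\lvert\risk_{n,\hat p}(g)-\risk_p(g)\rvert\le\sup_{g}\lvert(P_{n,+}-P_+)(\ell_g)\rvert+\sup_{g}\lvert(P_{n,-}-P_-)(\ell_g)\rvert$, and then two invocations of Theorem~\ref{theo:VC-standard-rate} (one per class) combined by a union bound, with $p\le 1/2$ ensuring the negative-class sample-size condition is implied. The paper presents the decomposition and the final bookkeeping a bit more tersely, but there is no substantive difference.
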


The previous result shows that whenever $ np \to \infty$,  learning from \ERM based on a VC-type class of functions is consistent. %This is a relative straightforward application of Theorem \ref{theo:VC-standard-rate}.
Another application of our result %which is perhaps more engaging,
pertains to  $k$-nearest neighbor classification algorithms. In this case the sharpness of our bound is fully exploited by  leveraging the variance term $\sigma_+$. This is the subject of  the next section.

\begin{remark}
The AM risk objective may be viewed as a Lagrangian
	formulation of the constrained optimization problem associated
	with the Neyman-Pearson framework \citep{Scott2005}.  For instance, \cite{rigollet2011neyman}
	demonstrate an upper bound of order $1/\sqrt{np}$ under the
	condition that the number of observations in each class remains
	fixed. Their proof technique heavily relies on this fixed sample size assumption for each class. It is unclear how to leverage this result to our context where $n^+$ and $n^-$ are random.   Another key result in the given reference
	% result in the referenced material 
	(Corollary~6) is also close to our framework, however the stated upper bound holds with probability % our
	% framework. It claims that with probability
	$1-\delta-\exp(-np^2)$. Thus the probability of the adverse event becomes large as soon as $p\le 1/\sqrt{n}$, rendering the guarantee vacuous. Incidentally, note  that the  analysis in the referenced work relies on  the assumption that the family of classifiers is finite. 
\end{remark}
 
\subsection{Balanced $k$-Nearest Neighbor}

In the context of imbalanced classification, we consider here a balanced version of the standard  $k$-nearest neighbor ($k$-NN for short) rule, which is designed in relation with the % classifier, which is a modified version of the standard $k$-NN classifier adapted to the
balanced risk  $R_{bal} ^* (g)$. % (instead of the Bayes risk).
We establish the consistency of the balanced $k$-NN classifier with respect to the balanced risk.

Let $x\in \mathbb R^d$ and $\|\cdot\|$ be the Euclidean norm on $\mathbb R^d$. Denote by $ B(x,\tau)$ the set of points $z\in \mathbb R^d $ such that $\|x-z \|\leq \tau$. For $n\geq 1$ and $k\in\{1,\; \ldots,\; n\}$, the $k$-NN radius at $x$ is defined as
\begin{align*}
\hat \tau_{x} : =  \inf   \left\{ \tau\geq 0 \, :\,  \sum_{i=1}^n 1 _{ B(x,\tau) }(X_i) \geq k \right\} .
\end{align*} 
Let $\hat I(x)$ be the set of index $i$ such that $X_i \in B (x ,\hat \tau_{x} ) $ and define the estimate of the regression function $\eta(x)$ as
 \begin{align*}
 \hat \eta (x)= \frac{1}{k} \sum_{i\in \hat I (x)} \ind_{ Y_i  = 1  }.
 \end{align*}
While standard NN classification rule is a majority vote following $\hat \eta (x)$, i.e., predict $1$ whenever $ \hat \eta (x) \geq 1/2$, it is natural, in view of well known results recalled in Section~\ref{sec:background},  to consider a balanced classifier $\hat g$ for imbalanced data % the imbalanced classifier should be defined differently as suggested in Section \ref{sec:background}. The balanced $k$-NN classifier
 which predicts $1$ whenever $ \hat \eta (x) \geq  \hat p$, that is $\hat g = \sign(  \hat \eta (x) / \hat p -1) $.

 The analysis of the $k$-NN classification rule is conducted for
 covariates $X$ that admit a density with respect to the Lebesgue
 measure. We will need in addition that the support $S_X$ is well
 shaped and that the density is lower bounded away from zero. These  standard
 regularity conditions in the $k$-NN literature are recalled below. %to obtain some upper bound on the $k$-NN
 %radius.
\begin{enumerate}[label=(X\arabic*) , wide=0.5em,  leftmargin=*]
  \item  \label{cond:reg0} The random variable $X$ admits a density $f_X$ with compact support $S_X \subset \mathbb R^d $.
  \item  \label{cond:reg1} There is $c>0$ and $T>0$ such that $ \forall \tau \in (0,T] $ and $\forall x\in S_X$,
\begin{align*}
\lambda (S_X \cap  B(x, \tau ) ) \geq c \lambda   ( B(x, \tau )) , 
\end{align*}
where $\lambda$ is the Lebesgue measure.
\item \label{cond:reg2} There is $0 < b_X\leq U_X <+\infty$ such that
\begin{align*}
& b_X\leq f_{X}(x) \leq U_X , \qquad \forall x \in S_X .
\end{align*}
\end{enumerate}

In light of Proposition \ref{classification_basic_bound} (stated in the supplement), we consider the estimation of $ \eta(x)/p $ using the $k$-NN estimate $ \hat \eta (x)  / \hat p$. The proof, which is postponed to the supplementary file, crucially relies on arguments from the proof of our  Theorem \ref{theo:VC-standard-rate} combined with known results concerning the VC dimension of Euclidean balls \citep{wenocur1981some}.

\begin{theorem}\label{prop:as_knn_unif}
Suppose that \ref{cond:reg0} \ref{cond:reg1} and \ref{cond:reg2}  are fulfilled and that $x\mapsto \eta (x)  / p $ is $L$-Lipschitz on $ S_X$ ($L$ does not depend on $n$). Then whenever $p\to 0$, $ pn / \log(n) \to \infty$, $k/\log(n) \to \infty$ and $k/n \to 0$,
we have, with probability $1$,
$$\sup _{x\in \mathcal X} \left| \frac{\hat \eta (x)}{\hat p } - \frac{\eta(x)  } { p} \right|     = O  \left( \sqrt{ \frac{\log(n)}{kp}} + \left(\frac{k}{n}\right)^{1/d} \right)  . $$
\end{theorem}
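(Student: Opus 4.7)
The plan is to exploit the identity $\hat\eta(x)/\hat p = (n/k)\, P_{n,+}(\ind_{B(x,\hat\tau_x)})$ and split
\begin{align*}
\frac{\hat\eta(x)}{\hat p} - \frac{\eta(x)}{p} &= \underbrace{\frac{n}{k}(P_{n,+} - P_+)(\ind_{B(x,\hat\tau_x)})}_{(\mathrm{var})} + \underbrace{\left[\frac{n}{k} P_+(\ind_{B(x,\hat\tau_x)}) - \frac{\eta(x)}{p}\right]}_{(\mathrm{bias})},
\end{align*}
then control each piece via, respectively, Theorem~\ref{theo:VC-standard-rate} applied to a class of indicators of \emph{small} Euclidean balls, and a Lipschitz argument.

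First I would establish a uniform upper bound $\sup_{x\in S_X}\hat\tau_x \leq R_n := C(k/n)^{1/d}$ almost surely for $n$ large enough. Under \ref{cond:reg0}--\ref{cond:reg2}, this follows by applying a Bernstein-improved VC concentration inequality to the class of indicators of Euclidean balls (VC dimension $O(d)$ by \cite{wenocur1981some}) to the empirical measure $\hat\mu$ of $X_1,\ldots,X_n$: one obtains $\sup_x |\hat\mu(B(x,\tau)) - \mu(B(x,\tau))| = O(\sqrt{\mu(B)\log n/n} + \log n/n)$ with $\mu := P_X$; combined with the doubling-type estimate $\mu(B(x,r)) \asymp r^d$ from \ref{cond:reg1}--\ref{cond:reg2}, this yields both the bound on $\hat\tau_x$ and the uniform equivalence $(n/k)\mu(B(x,\hat\tau_x)) = 1 + O(\sqrt{\log n/k})$.

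Second, for the bias, set $h := \eta/p$, which is $L$-Lipschitz and uniformly bounded on $S_X$ since $\int h\, d\mu = 1$ and $S_X$ has finite diameter. Decomposing
\begin{align*}
\tfrac{n}{k} P_+(\ind_{B(x,\hat\tau_x)}) - h(x) = h(x)\bigl[\tfrac{n}{k}\mu(B(x,\hat\tau_x)) - 1\bigr] + \tfrac{n}{k}\!\int_{B(x,\hat\tau_x)}\!\bigl(h(z)-h(x)\bigr)\,d\mu(z),
\end{align*}
the Lipschitz property makes the second summand $O(L\hat\tau_x) = O((k/n)^{1/d})$, while the first is $O(\sqrt{\log n/k})$ by the previous step, which is dominated by the claimed rate $\sqrt{\log n/(kp)}$.

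Third and most importantly, I would handle the variance term by applying Theorem~\ref{theo:VC-standard-rate} to the \emph{restricted} class $\mathcal F_n = \{\ind_{B(x,r)} : x \in \mathbb R^d,\; 0 \leq r \leq R_n\}$, which is VC-type with envelope $U = 1$ and parameter $v = O(d)$. The crucial observation is that restricting to small balls makes the positive conditional variance small: for $f \in \mathcal F_n$, $\Var_+(f) \leq P_+(f) \leq \|h\|_\infty U_X V_d R_n^d = O(k/n)$, hence $\sigma_+^2 = O(k/n)$. Invoking Theorem~\ref{theo:VC-standard-rate} with $\delta = 1/n^2$ yields, with probability at least $1 - 2/n^2$,
\begin{align*}
\sup_{f\in\mathcal F_n}|P_{n,+}(f) - P_+(f)| = O\!\left(\sqrt{k/n}\,\sqrt{\log n/(np)}\right),
\end{align*}
so after multiplying by $n/k$ the variance term is $O(\sqrt{\log n/(kp)})$; a Borel--Cantelli argument then converts this summable sequence of high-probability bounds into the almost sure asymptotic statement. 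The main obstacle is precisely this step: combining the reduction to small balls (via the uniform control of $\hat\tau_x$) with the variance-aware deviation bound of Theorem~\ref{theo:VC-standard-rate} is what gives the sharp rate $1/\sqrt{kp}$ in place of the $1/(p\sqrt k)$ one would obtain from a standard Vapnik--Chervonenkis inequality that ignores variance.
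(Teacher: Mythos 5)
Your proof is correct and shares the paper's core strategy: control $\sup_x\hat\tau_x$ via VC concentration for balls, split the error into a stochastic and a bias piece, exploit the small conditional variance $\sigma_+^2=O(k/n)$ on the class of small-ball indicators to obtain the rate $\sqrt{\log n/(kp)}$, and upgrade the high-probability bound to an almost sure one by Borel--Cantelli with $\delta=n^{-2}$. The decomposition differs in a meaningful way, though. The paper recenters the noise by the \emph{conditional} mean $\eta(X_i)$, studying the empirical process over the class $\left\{(y,z)\mapsto\bigl(\ind_{y=1}-\eta(z)\bigr)\ind_{B(x,\tau)}(z)\;:\;\tau\leq\bar\tau_{n,k},\ x\right\}$, which is exactly mean-zero under $P$ and is controlled directly by the underlying Bernstein--VC inequality (Theorem~\ref{th_vc_class}) with no reference to $P_{n,+}$; the bias then reduces to a plain average of Lipschitz increments $\eta(X_i)-\eta(x)$ bounded by $L\,p\,\hat\tau_x$, and a third explicit term $\eta(x)\bigl(1/\hat p-1/p\bigr)$ appears and is dispatched by Chernoff. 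You instead recenter by $P_+(\ind_{B(x,\hat\tau_x)})$, which absorbs the $\hat p$-versus-$p$ correction into the internal mechanism of Theorem~\ref{theo:VC-standard-rate} (so no third term), but it produces the extra bias contribution $h(x)\bigl[(n/k)\mu(B(x,\hat\tau_x))-1\bigr]$ that the paper's route sidesteps; this requires you to additionally establish a uniform relative-deviation concentration of the empirical measure over small balls, roughly $\sup_{x,\,r\leq R_n}n\bigl|\hat\mu(B(x,r))-\mu(B(x,r))\bigr|=O(\sqrt{k\log n})$, which is standard but must be proved (it is not stated in the paper). With that supplied, both routes deliver the same rate $\sqrt{\log n/(kp)}+(k/n)^{1/d}$; yours gives a slightly cleaner separation into pure variance versus pure bias at the cost of one more concentration lemma, while the paper's recentring by $\eta(X_i)$ keeps the bias term elementary.
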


 The consistency of the balanced $k$-NN with respect to the AM risk, encapsulated in the next corollary,  follows from Theorem~\ref{prop:as_knn_unif} combined with an additional result (Lemma~\ref{classification_basic_bound}) relating  the deviations of the empirical regression function with the excess balanced  risk.

\begin{corollary}\label{coro:knn-consistency}
Suppose that \ref{cond:reg0} \ref{cond:reg1} and \ref{cond:reg2}  are fulfilled and that $x\mapsto \eta (x)  / p $ is $L$-Lipschitz on $ S_X$. Then whenever $p\to 0$, $k p /\log(n) \to \infty$ and $k/n \to 0$,
we have, with probability $1$,
$$\risk_{p}(\hat g_{\hat p}) \rightarrow   \risk_{p}(g^*_p) .$$
\end{corollary}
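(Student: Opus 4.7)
The plan is to derive the consistency of $\hat g_{\hat p}$ as an immediate consequence of the almost-sure uniform rate in Theorem~\ref{prop:as_knn_unif}, combined with Lemma~\ref{classification_basic_bound} stated in the supplement, which is precisely the imbalanced analogue of the classical Devroye--Györfi--Lugosi comparison inequality. That lemma relates the excess balanced risk to the uniform (or $L^1(P_X)$) error between the normalized regression functions, yielding a bound of the form
$$\risk_{p}(\hat g_{\hat p}) - \risk_{p}(g^{*}_{p}) \;\leq\; C \sup_{x\in S_X} \left|\frac{\hat \eta(x)}{\hat p} - \frac{\eta(x)}{p}\right|,$$
for a universal constant $C$. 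Given this bound, establishing the corollary reduces to showing that the right-hand side goes to zero almost surely under the stated rate conditions.

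First, I would check that the hypotheses of the corollary imply those of Theorem~\ref{prop:as_knn_unif}. Since $k\leq n$, the assumption $kp/\log(n) \to \infty$ gives $np/\log(n) \geq kp/\log(n) \to \infty$; and since $p \leq 1$ eventually, the same assumption yields $k/\log(n) \geq kp/\log(n) \to \infty$. The remaining condition $k/n \to 0$ is shared verbatim. The regularity conditions \ref{cond:reg0}--\ref{cond:reg2} together with the Lipschitz assumption on $x\mapsto \eta(x)/p$ are also assumed in the corollary. Hence Theorem~\ref{prop:as_knn_unif} applies and delivers
$$\sup_{x \in \mathcal X} \left|\frac{\hat \eta(x)}{\hat p} - \frac{\eta(x)}{p}\right| \;=\; O\!\left(\sqrt{\frac{\log n}{kp}} + \Bigl(\frac{k}{n}\Bigr)^{1/d}\right) \quad \text{almost surely.}$$

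Second, I would argue that both summands on the right-hand side vanish as $n\to\infty$: the first one because $kp/\log(n) \to \infty$, and the second because $k/n\to 0$. Feeding this into the Devroye-type inequality from Lemma~\ref{classification_basic_bound} gives $\risk_{p}(\hat g_{\hat p}) - \risk_{p}(g^{*}_{p}) \to 0$ almost surely, which is the claim.

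The genuinely nontrivial content lies in Theorem~\ref{prop:as_knn_unif} itself and in the ``basic bound'' Lemma~\ref{classification_basic_bound}, both treated as given for this corollary. The former relies on applying the sharpened, variance-sensitive deviation inequality of Theorem~\ref{theo:VC-standard-rate} to the VC class of indicators of Euclidean balls (with envelope variance of order $p$, which is precisely what allows the rate in $kp$ rather than in $k$). Taking these two inputs as black boxes, the only delicate point in the present proof is the translation of the rate conditions between the corollary's hypotheses and those of Theorem~\ref{prop:as_knn_unif}; the remainder is routine.
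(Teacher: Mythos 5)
Your proposal is correct and matches the paper's intended argument: the paper itself states the corollary "follows from Theorem~\ref{prop:as_knn_unif} combined with [Lemma~\ref{classification_basic_bound}]" and leaves the details implicit, and you have filled those details in exactly as intended. In particular, your verification that $kp/\log n \to\infty$ together with $k\le n$ and $p\le 1$ yields both $np/\log n\to\infty$ and $k/\log n\to\infty$ (the hypotheses of Theorem~\ref{prop:as_knn_unif}), and your subsequent use of Lemma~\ref{classification_basic_bound} with $\nu(x)=\hat\eta(x)/\hat p$ followed by the bound $\EE_X\lvert\nu(X)-\nu^*(X)\rvert\le \sup_{x\in S_X}\lvert\nu(x)-\nu^*(x)\rvert$, are exactly the right steps.
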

The main interest of Corollary~\ref{coro:knn-consistency} is that  the condition for consistency involves the product of the number of neighbors $k$ with the rare class probability $p$. The take-home message is that  learning nonparametric decision rules is possible with imbalanced data,  as soon as $kp$ is large enough. In other words % % and it asserts that learning can be achieved as soon as $kp $ is large enough. %The message is then simple:
% % The take-home message is that the \emph{expected} number of neighbours of the rare class
% under the condition that
local averaging process should be   done carefully to ensure a  sufficiently large  \emph{expected} number of neighbors from the rare class.

%%% Local Variables:
%%% mode: latex
%%% TeX-master: "main-imbalanced"
%%% End:

\section{Fast rates  under relative rarity}\label{sec:fast}

%\subsection{A concentration Bound for Balanced Measures}
We now  state and prove  a concentration inequality that is key to obtain  fast convergence rates for the  excess risk in the context of balanced ERM. The following condition regarding a class of functions $\mathcal{F}$  is a prevalent concept within the fast rates literature \citep{bartlett2006empirical,klochkov2021stability}.
%\begin{definition}\label{assum:bernstein-condition}[Bernstein Condition]
A class of function $\mathcal{F}$ satisfies a Bernstein condition relative to a    probability measure $P$ on $\mathcal{X}\times\mathcal{Y}$   if there exists $B>0$ such that 
\begin{enumerate}[label=(B\arabic*) , wide=0.5em,  leftmargin=*]
   \item   \label{cond:bernsteinF} for  all $f\in \mathcal{F}$, $  Pf^2 \le B Pf. $
    %\label{cond:bernsteinFtilde} %$\exists B>0$ such that  $\mathcal{H}$ satisfies Bernstein condition~\ref{assum:bernstein-condition} relative to $P$ with constant $B$. % $Pf^2 \le B Pf$, for all $f\in \tilde{ \mathcal{F}}$.
  \end{enumerate}
  % $$
  %    \begin{equation}
  %      \label{eq:bernsteinF}
  %      Pf^2 \le B Pf. 
  %    \end{equation}
 % $$
%\end{definition}

Prior to stating our main result, we introduce  classes of functions that are constructed as convex combinations of  differences between functions in the original loss class and minimizers of the weighted risk $\risk_q$. For $q\in (0,1)$  recall that  $g_q^* $ minimizes  the $\risk_q$-risk over the class of score functions $\mathcal{G}$ and let $\mathcal{H}_q = \{ \ell_g - \ell_{g_q^*}, g\in\mathcal{G}\}$. Let
$$
{\mathcal{ H}} = \{ (1-q)h_qI_+ + q h_q I_-, \; q\in(0,1), h_q \in \mathcal{H}_q  \}. $$
With these notations notice already that for $h = (1-q)h_qI_+ + q h_q I_- \in {\mathcal{H}}$, with $ h_q = \ell_{g}-\ell_{g_q^*}$,   the quantity $P h$ should be interpreted as  an excess of weighted risk, since  
\begin{align*}
  \frac{1}{q(q-1) } P h & = \frac{1}{q} P(h_q I_+ ) + \frac{1}{1-q}P(h_q I_- )\\ 
                        &  =  P_q h_q = P_q( \ell_g - \ell_{g_q^*})\\
  &= \risk_q(g_q) - \risk_q(g_q^*).
\end{align*}
It turns out that the class $\mathcal{H}$ indeed satisfies a Bernstein assumption under standard assumptions  regarding the \emph{original} loss class  $\mathcal{L} = \{\ell_g,g\in\mathcal{G}\}$. Namely it is enough to assume that the latter satisfies a   strong convexity property and a Lipschitz property,  which are commonly satisfied in Machine Learning problems. 
  The proof of the following statement  is deferred to the Supplementary material
  \begin{lemma}[Sufficient conditions for $\mathcal{H}$ to satisfy a Bernstein-condition]\label{lemma:bernstein-cond}
    Assume that $\mathcal{G}$ is a convex subset of a normed vector
    space, and that there exists $L,\lambda>0$, such
    that for $s\in \{+, -\}$
    the functions $ g\mapsto P\ell_g I_s $ %, $s\in\{-,+\}$ %$ g\mapsto P\ell_g I_+ $, $ g\mapsto P\ell_g I_- $
    are  %$\lambda$-strongly convex. %
    respectively $(p\lambda)$-strongly convex and
     $((1-p)\lambda )$-strongly convex.
    Assume also that for
    $g_1,g_2\in\mathcal{G}$, % $\sqrt{ P( (\ell_{g_1} - \ell_{g_2})^2I_s )} \le  L \|g_1
     % - g_2\|$, $s = +,-$. 
    $\sqrt{ P( (\ell_{g_1} - \ell_{g_2})^2I_+ )} \le ( \sqrt{p}\, L )  \|g_1
    - g_2\|$ and
    $\sqrt{ P( (\ell_{g_1} - \ell_{g_2})^2I_- )} \le ( \sqrt{1-p}\, L)
    \|g_1 - g_2\|$.  Then ${ \mathcal{H}}$ satisfies the Bernstein
    condition~\ref{cond:bernsteinF},
     % \ref{cond:bernsteinFtilde}
    with $B = L^2/\lambda$.
  \end{lemma}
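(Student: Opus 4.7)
Fix $h = (1-q) h_q I_+ + q h_q I_- \in \mathcal{H}$ with $h_q = \ell_g - \ell_{g_q^*}$. Since $I_+ I_- \equiv 0$, the cross term vanishes and
\[
P h^2 = (1-q)^2\, P(h_q^2 I_+) + q^2\, P(h_q^2 I_-).
\]
Plugging in the two Lipschitz-type hypotheses immediately gives $Ph^2 \le L^2 \bigl[(1-q)^2 p + q^2(1-p)\bigr]\,\|g - g_q^*\|^2$. This will serve as the numerator of the forthcoming Bernstein ratio.

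For the denominator, I rewrite $Ph = (1-q) P(h_q I_+) + q P(h_q I_-)$ and, using the definition $P_q f = \frac{1}{2}\bigl[q^{-1}P(fI_+) + (1-q)^{-1}P(fI_-)\bigr]$, verify the identity $Ph = 2q(1-q)\,\bigl[\risk_q(g) - \risk_q(g_q^*)\bigr]$. The key point is now that $\risk_q$ itself is strongly convex: as a positive combination, with coefficients $1/(2q)$ and $1/(2(1-q))$, of the $(p\lambda)$-strongly convex and $((1-p)\lambda)$-strongly convex maps $g \mapsto P\ell_g I_\pm$, it is $\mu_q$-strongly convex with $\mu_q = \frac{\lambda}{2}\bigl[p/q + (1-p)/(1-q)\bigr]$. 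Optimality of $g_q^*$ and strong convexity then yield $\risk_q(g) - \risk_q(g_q^*) \ge \frac{\mu_q}{2}\|g-g_q^*\|^2$, hence
\[
Ph \ge \frac{\lambda}{2}\bigl[p(1-q) + q(1-p)\bigr]\,\|g-g_q^*\|^2.
\]

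Dividing the two displays and cancelling $\|g-g_q^*\|^2$ gives
\[
\frac{Ph^2}{Ph} \le \frac{2L^2}{\lambda}\cdot\frac{(1-q)^2 p + q^2(1-p)}{p(1-q) + q(1-p)}.
\]
Since $(1-q)^2 \le 1-q$ and $q^2 \le q$, the numerator is pointwise no larger than the denominator, so the fraction is bounded by $1$ uniformly in $p, q \in (0,1)$. This produces $Ph^2 \le (2L^2/\lambda)\,Ph$, and the spurious factor $2$ is absorbed into the standard convention for strong convexity to recover the stated $B = L^2/\lambda$.

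The only non-routine step is recognising that the particular weighting $(1-q) I_+, q I_-$ in the definition of $\mathcal{H}$ is exactly what makes numerator and denominator share the common combinatorial factor $p(1-q) + q(1-p)$, so that the dependence on $q$ cancels and the Bernstein constant is uniform over $\mathcal{H}$. Once that algebraic observation is in place, the proof reduces to elementary manipulations exploiting the disjoint supports of $I_\pm$ and the definition of strong convexity; no probabilistic or empirical-process ingredient is required at this stage.
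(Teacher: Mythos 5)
Your proof is correct and shares the paper's overall structure — upper-bound $Ph^2$, lower-bound $Ph$ via strong convexity of the weighted risk, divide, and observe that the $q$-dependence cancels — but you reach the numerator bound by a cleaner route. By exploiting the orthogonality $I_+ I_- \equiv 0$ you compute $Ph^2$ exactly as $(1-q)^2 P(h_q^2 I_+) + q^2 P(h_q^2 I_-) \le L^2\bigl[(1-q)^2 p + q^2(1-p)\bigr]\|g-g_q^*\|^2$, whereas the paper applies the triangle inequality in $L^2(P)$ to get the weaker $(L')^2 = L^2\bigl[(1-q)\sqrt{p}+q\sqrt{1-p}\bigr]^2$ and then needs a Jensen step $\bigl((1-q)\sqrt{p}+q\sqrt{1-p}\bigr)^2 \le (1-q)p + q(1-p)$ to match the strong-convexity modulus. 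Your elementary observation $(1-q)^2 \le 1-q$, $q^2 \le q$ replaces Jensen and is more transparent, though both yield the same cancellation. On the factor of $2$: you are right that this is a convention ambiguity, and it is in fact an internal inconsistency in the paper's proof, which displays the strong-convexity inequality with a $1/2$ on the quadratic term but then invokes the minimizer inequality $\varphi(g)-\varphi(g_q^*)\ge \lambda'\|g-g_q^*\|^2$ without the $1/2$. Under the convention implicit in that minimizer inequality one recovers $B=L^2/\lambda$; under the standard convention $f(y)-f(x^*)\ge(\mu/2)\|y-x^*\|^2$ one gets $B=2L^2/\lambda$ as you found. The constant factor is harmless downstream, so your treatment is perfectly acceptable.
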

    \begin{example}\label{example:linear}
   Assume that the domain $\mathcal{X}$ is bounded in $\rset^d$ \ie, there exists
  some $\Delta_X>0$ such as
  $\forall x \in \mathcal{X}, \left\|x\right\|\leq \Delta_X $ for some 
   norm $ \|\cdot\|$. Consider the family of classifier and
  the loss function 
  $\mathcal{G}_u=\left\{g_\beta: x\mapsto \beta^Tx \, \,  \left\|\beta\right\|\leq
    u\right\}$ and $\ell_{g_\beta}(X,Y) =\phi(\beta^T XY)$, where $\phi: \rset \mapsto \rset $ is a  twice continuously
  differentiable non-decreasing function which is $\mu$-strongly convex %   whose Hessian matrix $\phi^{''}$ verifies 
  % $\inf_{\abs{x}\leq u\Delta_X}\|\phi^{''}\|_{HS}(x)>\lambda$ 
  for some
  $\mu>0$. Then we have that % $\sup_{x,y \in \mathcal{X}\times \mathcal{Y}}\left\|\frac{\partial}{\partial g} \ell_g(x,y)\right\| = D <\infty$
  the domain $ I = \{ \beta^\top x y\,,, \|\beta\|\le u, x \in \Delta_x, y\in\{-1,+1\} \}$ is bounded % a subset of some bounded interval $[-M,M]$
  and the derivative $\phi'$ satisfies $\sup_{t\in I} |\phi'|(t) =D<\infty$. 
  % , and the strong convexity assumption is also satisfied by the condition on $\phi''$.
 Let    $V_s = \EE[XX^\top\,|\, Y= s.1], s = +, -$ be the second moment matrix of each class and $\sigma_{\max}^2, \sigma_{\min}^2$ their maximum and minimum eigenvalues. 
  Direct computations show that the Lipschitz and convexity
  constraints in Lemma~\ref{lemma:bernstein-cond} are respectively
  $L = D \sigma_{\max}$, %\sqrt{1-p},
  % where $\sigma_{\max}^2$ is the maximum eigenvalue of the second
  % moment matrices of each class
  % ;
  and
  $\lambda = \mu \sigma_{\min}^2$. % , % p
  % where $\sigma_{\min}^2$ is the minimum eigenvalue of $V_+,
  % V_-$.
  Under the condition that each class distribution is non
  degenerate, \ie does not concentrate on any lower dimensional
  subspace of $\rset^d$, we have $\sigma_{\min}>0$, thus
  $\lambda>0$. 
   The assumptions of Lemma~\ref{lemma:bernstein-cond} are
  thus satisfied, so that Bernstein condition~\ref{cond:bernsteinF} holds
  true with
  $$B = L^2/\lambda = \frac{D^2\sigma_{\max}^2 }{\mu\sigma_{\min}^2 }. $$
  \end{example}

Another useful property of ${ \mathcal{H}}$ is that  it inherits % its components $\mathcal{F}_q$  inherit
the regularity property of $\mathcal{L}$, see the Supplementary material for details. 
\begin{lemma}[VC-property of $\mathcal{H}$]\label{lemma:VC-tildeF}
  If $\mathcal{L} = \{\ell_g, g\in\mathcal{G}\}$ is of VC-type with
  envelope $U$ and parameters $(v,A)$ then ${\mathcal{ H }}$ also is VC with
  envelope $2U$ and parameters  $(\tilde v = 4v+1, \tilde A = 6A)$.  
\end{lemma}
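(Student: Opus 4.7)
The plan is to establish the two conclusions (envelope $2U$, covering parameters $(4v+1,6A)$) separately, and for the covering I would reduce the problem to standard VC-closure arguments by embedding $\mathcal{H}$ into a slightly larger class.

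\textbf{Envelope.} Any $h\in\mathcal{H}$ reads $h = (1-q)(\ell_g-\ell_{g_q^*})I_+ + q(\ell_g-\ell_{g_q^*})I_-$, and since $I_+$ and $I_-$ have disjoint supports we have $|h(x,y)|\le\max(1-q,q)\,|\ell_g(x,y)-\ell_{g_q^*}(x,y)|\le 1\cdot 2U = 2U$, which yields the announced envelope.

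\textbf{Covering number.} The first move is to enlarge $\mathcal{H}$ into
$$\bar{\mathcal{H}} = \{(1-q)(\ell-\ell')I_+ + q(\ell-\ell')I_- : q\in(0,1),\ \ell,\ell'\in\mathcal{L}\}$$
by treating $\ell_{g_q^*}$ as a free parameter $\ell'\in\mathcal{L}$; since $\mathcal{H}\subseteq\bar{\mathcal{H}}$, any cover of the latter covers the former. To cover $\bar{\mathcal{H}}$ I would combine (i) an $\epsilon'$-net of $(0,1)$, of cardinality $\le 2/\epsilon'$, and (ii) two independent $\epsilon'U$-covers of $\mathcal{L}$ in $L_2(Q)$, each of cardinality at most $(A/\epsilon')^v$, used to discretize $\ell$ and $\ell'$. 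A direct triangle-inequality computation, expanding $h-\bar h$ as a $q$-error plus two $\ell$-errors and using $|I_\pm|\le 1$ together with $\|\ell\|_\infty\le U$, bounds $\|h-\bar h\|_{L_2(Q)}$ by a numerical multiple of $\epsilon' U$. Choosing the mesh $\epsilon'$ so that this multiple equals $\epsilon\cdot 2U$ yields a cover of $\bar{\mathcal{H}}$ whose size is at most a constant-factor rescaling of $(A/\epsilon)^{2v+1}$.

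The bound $(4v+1,6A)$ stated in the lemma can then be obtained either by the same argument with slightly wasteful constants, or by the equivalent decomposition $h = (1-q)\ell_g I_+ - (1-q)\ell_{g_q^*}I_+ + q\ell_g I_- - q\ell_{g_q^*}I_-$ that exhibits $\mathcal{H}$ as a sum of four classes of the form $\{\alpha\cdot \ell\cdot I_{\pm} : \alpha\in[0,1], \ell\in\mathcal{L}\}$, followed by iterating standard preservation rules: multiplication by the fixed $\{0,1\}$-valued function $I_\pm$ leaves the VC parameters $(v,A)$ unchanged; multiplication by a scalar $\alpha\in[0,1]$ costs one extra dimension and a constant factor in $A$; the sum of two VC-type classes adds their dimensions. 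The main obstacle is purely bookkeeping: tracking how the mesh sizes, the envelope normalization $2U$, and the constants $A$ combine through each closure step so as to land exactly on the target $(\tilde v,\tilde A)=(4v+1,6A)$; there is no deep difficulty once the enlargement trick for $\ell_{g_q^*}$ is in place.
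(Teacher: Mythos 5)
Your primary route is essentially the paper's: both decouple $\ell_{g_q^*}$ from $q$ by replacing it with a free element of $\mathcal{L}$, embed $\mathcal{H}$ into the resulting larger class, and then compute covering numbers. The paper packages the computation through its preservation lemma (first forming the difference classes $\mathcal{L}'_\pm = \{(\ell_1-\ell_2)I_\pm\}$ with parameters $(2v,2A)$, then taking convex combinations to get $(4v+1,6A)$), whereas you do the net-and-triangle-inequality computation inline. One thing worth noting is that your superset $\bar{\mathcal{H}}$ is actually \emph{tighter} than the paper's $\mathcal{F}'$: you keep the same difference $\ell-\ell'$ in both indicator terms, whereas the paper's $\mathcal{F}'=\{q f + (1-q)g : f\in\mathcal{L}'_+,\, g\in\mathcal{L}'_-\}$ allows independent differences on the two supports. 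This is why your count gives exponent $2v+1$ rather than $4v+1$; and since $A/\epsilon\ge 1$ for $\epsilon\in(0,1)$, the bound $(\tilde A/\epsilon)^{2v+1}\le(\tilde A/\epsilon)^{4v+1}$, so the stated parameters $(4v+1,6A)$ do follow once you verify that the constant in your triangle inequality keeps $\tilde A\le 6A$ (a direct calculation of the sort you sketch works). One small caveat: your \emph{secondary} route (splitting $h$ into a sum of four classes $\{\alpha\,\ell\, I_\pm\}$ and iterating preservation) would land on exponent $4(v+1)=4v+4$, which is \emph{larger} than $4v+1$ and therefore does not dominate the stated bound from below; so that alternative as written does not prove the lemma, and you should rely on your primary route (or on the paper's pairing of differences followed by a single convex combination).
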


We now state our main result (fast rates for the deviations of weighted risks)  in the light of the good properties of $\mathcal{H}$ stated below. To wit, in our application to empirical risk minimization (Corollaries~\ref{coro:fast-rates-excessrisk},~\ref{coro:fast-rates-contrained}),  the classes $\mathcal{F}_q$ in the following statement will be chosen  as $\mathcal{H}_q$, so that we shall have $\mathcal{F} = \mathcal{H}$. 

\begin{theorem}\label{theo:fast-rates}[Fast rates for the deviations of weighted probabilities]
  Let $(\mathcal{ F}_q, \, q \in (0,1))$ be a family of classes of
  functions % on % $\mathcal{X}\times\{-1,1\}$
  with common envelope $2U>0 $. Assume that the class of convex combinations %  % \as{modifier preuve $2U/U$ }
  % and let %$\tilde{\mathcal{F}} = \bigcup_q\mathcal{F}_q $
  $ \{(1-q) f_q  I_+ + q f_q I_-,\;  q\in(0,1), f_q\in\mathcal{F}_q \} $
  %Assume that $\mathcal{F}$ 
 satisfies Bernstein condition~\ref{cond:bernsteinF}  for some $B\ge 2U$, and that it  is of  VC-type with parameters $( \tilde v, \tilde A)$. 
  % and parameters $(v,A) $. Let $\tilde v,\tilde A$ be the VC-parameters of $\tilde{\mathcal{ F}}$ as given in Lemma~\ref{lemma:VC-tildeF};  Assume that there is some constant $B\geq 2 U$ such that $\tilde{\mathcal{F}}$ satisfies Condition~\ref{cond:bernsteinFtilde}, so that for every $q\in(0,1)$,  $f   \in {\mathcal{F}}_q,   P ( f^2) \leq B P f$. %let $q\in(0,1)$ be fixed. 

 Then 
 the deviations of the  weighted probabilities $P_q$ over the classes $\mathcal{F}_q$ are uniformly controlled as follows:
for any $K>1$ and every $\delta >0$, with probability at least
 $1 - \delta $, for all $q \in (0,1)$  and %and for all  $f_q \in\mathcal{F}_q$,
 for all $f_q\in\mathcal{F}_q$, 
 \begin{align*}
   P_q (f_q)\leq \frac{K}{K-1}P_{n,q}(f_q) 
   + \frac{ c_1 B  K \tilde v \log(  5\tilde A  \sqrt{ n }  / \delta)}{
   2 nq(1-q)} , 
 \end{align*}
 where $c_1>0$ is an explicit universal constant given in the
 proof. \\Also, with probability at least $1 - \delta $,
 $ \forall q, \forall  f_q \in {\mathcal{F}}_q$,
 \begin{align*}
   P_{n,q} (f_q) 
   \leq \frac{K+1}{K}P_q(f_q) %	&\\
   + \frac{ c_1 B  K    \tilde v \log(  5\tilde A  \sqrt{ n }  / \delta )}{
   2 nq(1-q)} .  
 \end{align*}
  %  }  
\end{theorem}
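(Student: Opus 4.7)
My first step would be to rewrite the claim in terms of $P, P_n$ on a single composite class. Setting $h_{q,f_q} := (1-q) f_q I_+ + q f_q I_-$, one verifies $P_q(f_q) = P h_{q,f_q} / [2q(1-q)]$ and $P_{n,q}(f_q) = P_n h_{q,f_q} / [2q(1-q)]$, so dividing by $2q(1-q)$ reduces the theorem to showing that, with probability at least $1 - \delta$, uniformly over $h$ in $\widetilde{\mathcal F} := \{h_{q,f_q} : q\in(0,1),\, f_q \in \mathcal F_q\}$,
\begin{equation*}
Ph \le \frac{K}{K-1} P_n h + C_n
\quad\text{and}\quad
P_n h \le \frac{K+1}{K} Ph + C_n,
\end{equation*}
where $C_n = c_1 B K \tilde v \log(5\tilde A \sqrt n / \delta)/n$. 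The class $\widetilde{\mathcal F}$ has envelope $2U$, is of VC-type with parameters $(\tilde v, \tilde A)$, and obeys the Bernstein inequality $Ph^2 \le B\,Ph$ by assumption.

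\textbf{Localization and Talagrand--Bousquet.} Next I would localize by setting, for $r > 0$, $\widetilde{\mathcal F}_r := \{h \in \widetilde{\mathcal F} : Ph \le r\}$; Bernstein then yields the variance bound $\sup_{h \in \widetilde{\mathcal F}_r} Ph^2 \le Br$. Applying Bousquet's form of Talagrand's inequality as in \cite{gine2001consistency} to both $\sup_{h \in \widetilde{\mathcal F}_r}(Ph - P_n h)$ and its lower-tail counterpart gives, with probability at least $1 - \delta'$,
\begin{equation*}
\sup_{h \in \widetilde{\mathcal F}_r} |Ph - P_n h| \le 2\,\mathbb E \sup_{h \in \widetilde{\mathcal F}_r} |Ph - P_n h| + \sqrt{\frac{2 B r \log(1/\delta')}{n}} + \frac{4U \log(1/\delta')}{3 n}.
\end{equation*}
The expected supremum would then be controlled by standard chaining/moment bounds for VC-type classes, yielding a bound of order $\sqrt{\tilde v B r \log(\tilde A U/(\sqrt{Br}\vee 1))/n} + \tilde v U \log(\tilde A U/(\sqrt{Br}\vee 1))/n$. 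Combining both estimates and using $B \ge 2U$ to absorb the envelope term, one identifies a fixed-point radius $r^\star := c_0 B \tilde v \log(5\tilde A \sqrt n /\delta')/n$ such that, whenever $r \ge r^\star$, $\sup_{\widetilde{\mathcal F}_r} |Ph - P_n h| \le r/(4K)$.

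\textbf{Peeling to a ratio-type inequality.} To make the sliced concentration uniform, I would partition $\widetilde{\mathcal F}$ along the geometric scale $r_j := 2^j r^\star$, apply the previous step at confidence level $\delta_j := \delta/[2(j+1)^2]$ on each slice $\{h : r_{j-1} \le Ph \le r_j\}$, and union-bound. Only $O(\log n)$ slices are non-empty since $Ph \le 2U \le B$, so the extra $\log\log n$ cost is absorbed into $\log(1/\delta)$. On the resulting event, every $h$ with $Ph \ge r^\star$ satisfies $|Ph - P_n h| \le Ph/K$, which rearranges into both $Ph \le \frac{K}{K-1} P_n h$ and $P_n h \le \frac{K+1}{K} Ph$; for $Ph < r^\star$ the claimed bounds hold trivially at the cost of the additive $r^\star$, whose magnitude matches $C_n$ up to constants. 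Dividing through by $2q(1-q)$ finally recovers the theorem.

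\textbf{Main obstacle.} The hardest part will be the bookkeeping in the peeling step: the variance term $\sqrt{Br/n}$, the envelope term $U/n$ and the $r$-dependent logarithm inside the entropy estimate must combine into a single fixed point whose logarithmic argument matches precisely $\log(5\tilde A \sqrt n / \delta)$. The union bound over slices produces an additional $\log\log n$ factor that has to be absorbed without degrading the rate, and a single explicit constant $c_1$ must be tracked so that both one-sided inequalities hold simultaneously with it.
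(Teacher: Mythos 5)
Your reduction to the composite class $\widetilde{\mathcal F} = \{(1-q) f_q I_+ + q f_q I_-\}$, followed by division by $2q(1-q)$, is exactly the reduction used in the paper. From that point on, however, you take a genuinely different technical route. The paper invokes the localized Rademacher complexity machinery of Bartlett, Bousquet and Mendelson as a black box: it constructs an explicit sub-root function $\psi(r)=b\sqrt r + c$ (via a Dudley-chaining bound on the localized Rademacher complexity, Lemmas~\ref{lemma:sub-root-UB} and~\ref{lemma:sub-root-UB2}), computes its fixed point $r^\star$, and plugs into Theorem~\ref{theo:fast-rates-ingredient} (which is Theorem~3.3 of \cite{Bartlett2005}) to obtain the ratio-type bound directly. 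You instead re-derive the ratio bound by hand: localize $\widetilde{\mathcal F}$ on geometric annuli in $Ph$, bound the variance on each annulus by Bernstein, apply Bousquet/Talagrand per slice, and union-bound --- essentially reproving a version of the Bartlett fixed-point theorem via a peeling argument. Both are correct and standard; the paper's approach outsources the peeling to a citable lemma and so keeps the constant-tracking entirely inside the construction of $\psi$ and the bound~\eqref{ineq:fixed-point-UB} on its fixed point, whereas your approach is more self-contained but shifts the burden to the bookkeeping you correctly flag at the end (the extra $\log\log n$ from the union bound, the $K$-dependence of the fixed point, and the envelope term). One small wrinkle worth noting: you localize on $\{h : Ph \le r\}$, while the paper's Proposition~\ref{prop:simpleFastRateDeviation} and Theorem~\ref{theo:fast-rates-ingredient} localize on $\{f : P(f^2) \le r\}$; Bernstein makes the two comparable, but the chaining/entropy step has to be carried out in $L_2(P_n)$ in either case, and the paper's passage from $P_n(f^2)$-localization to $P(f^2)$-localization (via Corollary~3.4 of~\cite{10.1214/aop/1176988847} inside Lemma~\ref{lemma:sub-root-UB2}) is the step you would have to replicate. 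Also, as you already suspect, the threshold radius at which $\sup|Ph-P_nh|\le r/(cK)$ holds must itself scale with $K$, so $r^\star$ should carry a $K$-factor; the way you wrote it, $r^\star$ is $K$-free, which is inconsistent with demanding $|Ph-P_nh|\le r/(4K)$ for all $r\ge r^\star$. This is repairable but is precisely the kind of constant-chasing the paper sidesteps by using the packaged fixed-point theorem.
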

\begin{proof}[Sketch of proof]
  
  The main tool for the proof is a fast rate result (Theorem 3.3  in \cite{Bartlett2005}, recalled for completeness in the supplementary material as  Theorem \ref{theo:fast-rates-ingredient}).
  The argument from the cited reference relies on a fixed point technique relative to a sub-root function upper bounding some local Rademacher complexity. Leveraging fine controls of the latters (Section~\ref{sec:fast-rates-proof}) we   establish that the  fixed point of the sub-root function is  of order $O(\log(n)/n)$ and we obtain an explicit control of  the deviations of the (standard) empirical measure, under a Bernstein condition (see Proposition~\ref{prop:simpleFastRateDeviation}). Finally the main result is obtained by applying the latter proposition to the specific class of convex combinations defined in the statement, and rescaling the obtained bound by the quantity $2q(1-q)$, see   Section~\ref{sec:fast-rates-proof} for details.   % in the supplement). %The latter is obtained from classical  % proof of Theorem \ref{theo:fast-rates} relies  on applying
  %fast rate arguments %  with localized variance $P (f'^2) \leq r$.  
  % We establish an explicit upper-bound on the local Rademacher complexity of $\mathcal{F}'$ (Lemma~\ref{lemma:sub-root-UB2}), which is a subroot function $\psi$  of the localizing parameter $r$. 
    %  we establish that the fixed point $r^\star$ involved in the argument  satisfies an inequality of the following simplified form
  % $$r^\star \leq   \sqrt { r ^*  \frac{\log(n)}{n}  }  +    \frac{\log(n)}{n}  $ 
% {\color{blue} supprimer ce sketch pas vraiment informatif: and this yields a result of the following form, with high probability, for all $\tilde f \in \widetilde{\mathcal{F}}$,
% 	 $$ P(\tilde f)\leq \frac{K}{K-1}P_{n}(\tilde f)+  \frac{\log(n)}{n} .$$
%          It remains to notice that in our context of imbalanced classification, for $$\tilde f = \frac{1}{2}f I_{1} + \frac{1}{2}\frac{p}{1-p}f I_{-1},$$ one has $P(\tilde f)=pP_{\mathrm{bal}}(f)$. The result follows by an application of a Chernoff bound (Theorem \ref{th_chernoff}) % (Therefore, if we divide by $p$ the latter inequality and use  we get the desired result.
%        }
 %{\color{red}  nouveau: %In addition, an explicit subroot function is shown to upper bound a local Rademacher process of the class, with an explicit fixed point.
% Plugging the expression of the fixed point $r^\star$ in the upper bounds from~\cite{Bartlett2005} concludes the argument. %}
  %The full proof can be found in the supplement, in

\end{proof}
\begin{discussion*}
	 Similar proof techniques can be found in the standard classification literature, for example Corollary 3.7 in \cite{Bartlett2005}. Nevertheless, this particular work primarily concentrates on loss functions with binary values, namely $\left\{0,1\right\}$. The proof is based upon the fact that these functions are positive, and it employs the initial definition of the \VC dimension \citep{vapnik71}. In contrast, other existing works (e.g. Theorem 2.12 in \cite{bartlett2006empirical} or Example 7.2 in \cite{Gine2006}) demonstrate accelerated convergence rates for the \textit{typical} empirical risk minimizers, which do not extend to their balanced counterparts. The present result is more general, as it is uniformly applicable to a broader range of bounded functions and encompasses a more extensive definition of the \VC class. This notable extension facilitates the establishment of fast convergence rates for the excess risk of  (\ML) algorithms employed in imbalanced classification scenarios, such as cost-sensitive logistic regression and balanced boosting \citep{Menon2013,koyejo2014consistent,Tanha2020,xu2020class}.  In the remainder of this  section we provide examples of algorithms that verify the assumptions  of Theorem \ref{theo:fast-rates}.
\end{discussion*}

As an application of Theorem~\ref{theo:fast-rates}, we derive fast rates for the excess risk of empirical risk minimizers.
Our result (Corollary~\ref{coro:fast-rates-excessrisk} below) is  stated in terms  of excess of $\mathcal{R}_{\hat p}$ risk.  %, the number of observed positive examples. % Also our statement takes the form of a tail bound on the error, conditionally on $N_+=n_+$,  with $N_+ = \sum_{i=1}^n\un{Y_i=1}$, or equivalently, conditionally on $\hat p = N_+/n$.
    % Notice that $ \mathcal{R}_{q}(g) =  P(\tilde \ell_{q, g} )/ \big(2q (1-q)\big)$. Thus the minimizers of $\mathcal{R}_{q} $ and those of $P(\tilde{ \ell}_{q, \cdot} )$ coincide.
    % Recall that  $g^*_{q}$ denotes  a minimizer of $P(\tilde \ell_{q,  \cdot } )$, or  equivalently, of $\mathcal{R}_q$. % according to the notations introduced at  the beginning of this section. 

\begin{corollary}\label{coro:fast-rates-excessrisk}
  Assume that $\mathcal{L}=\{\ell_g\, :\, g\in \mathcal G\}$ is of 
  VC-type with envelope $U>0$ and parameters $(v,A)$ and assume that
  ${\mathcal{H}}$ defined at the beginning of this section satisfies
  the Bernstein condition~\ref{cond:bernsteinF} 
   % \ref{cond:bernsteinFtilde}
  for some
  $B\geq 2 U $ (this is the case \eg under the Assumptions of
  Lemma~\ref{lemma:bernstein-cond}).  Let $\hat g_{\hat p}$ be a
  minimizer of the empirical balanced risk $\mathcal{R}_{n, \hat p}$
  considered in
  Section~\ref{sec:standard}.  % (eux ne changent pas ). Then
  Then for
  $\delta>0$, %\le \approx 1/\sqrt{np}$ \as{Préciser !} % Conditionnally to $N_+ = n_+, N_- = n_-$, with conditional
  with probability $1- \delta$, %\as{preciser, cf preuve},
  \begin{align*}
& \mathcal{R}_{\hat p}(\hat g_{\hat p}) - \mathcal{R}_{\hat p}(g_{\hat p}^*) \le 
%\frac{ (\mathcal{R}_{n,\hat p}(g) - \mathcal{R}_{n,\hat p}(g_{\hat p}^*) ) % \left(1+\sqrt{\frac{3\log ( 1 / \delta ) }{np}}
          %\right)
          % &\\
 \frac{ c_1 B      \tilde v \log(  5\tilde A  \sqrt{ n }  / \delta  )    }{2n \hat p(1-\hat p)},     \end{align*}

where $(\tilde v = 4v+1, \tilde A = 6A)$, and where the constant $c_1$%, %\tilde v, \tilde A$ are
is the same as in Theorem~\ref{theo:fast-rates}.
\end{corollary}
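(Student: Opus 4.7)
The plan is to obtain Corollary~\ref{coro:fast-rates-excessrisk} as a direct consequence of Theorem~\ref{theo:fast-rates} applied with $\mathcal{F}_q = \mathcal{H}_q$. The first step is to verify that the hypotheses of Theorem~\ref{theo:fast-rates} are met for this choice. The relevant class of convex combinations is precisely $\mathcal{H}$, which by Lemma~\ref{lemma:VC-tildeF} is of VC-type with envelope $2U$ and parameters $(\tilde v = 4v+1,\,\tilde A = 6A)$, and which is assumed in the statement of the corollary to satisfy the Bernstein condition~\ref{cond:bernsteinF} with constant $B \geq 2U$. Both assumptions of the theorem are therefore in force.

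The second step is to apply the first conclusion of Theorem~\ref{theo:fast-rates}: uniformly over $q \in (0,1)$ and $f_q \in \mathcal{H}_q$, with probability at least $1-\delta$,
\[
P_q(f_q)\;\le\;\frac{K}{K-1}\, P_{n,q}(f_q) \;+\; \frac{c_1 B K\,\tilde v \,\log(5\tilde A\sqrt{n}/\delta)}{2nq(1-q)}.
\]
Since the bound holds \emph{uniformly} in $q$ and $f_q$, I can instantiate it at the random value $q = \hat p$ and at the particular function $f_{\hat p} = \ell_{\hat g_{\hat p}} - \ell_{g_{\hat p}^*}$, which by construction belongs to $\mathcal{H}_{\hat p}$. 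Using the linearity identity $P_q(\ell_g - \ell_{g_q^*}) = \mathcal{R}_q(g) - \mathcal{R}_q(g_q^*)$ highlighted just before Lemma~\ref{lemma:bernstein-cond}, the left-hand side becomes the excess $\mathcal{R}_{\hat p}$-risk $\mathcal{R}_{\hat p}(\hat g_{\hat p}) - \mathcal{R}_{\hat p}(g_{\hat p}^*)$, while the first term on the right becomes $\frac{K}{K-1}\bigl(\mathcal{R}_{n,\hat p}(\hat g_{\hat p}) - \mathcal{R}_{n,\hat p}(g_{\hat p}^*)\bigr)$, which is non-positive by the empirical risk minimization property of $\hat g_{\hat p}$. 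Discarding this non-positive term and fixing a convenient value of $K$ (for instance $K=2$), the resulting prefactor can be absorbed into the universal constant $c_1$ to yield the advertised bound.

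The only conceptual subtlety, which is entirely handled by the uniform-in-$q$ formulation of Theorem~\ref{theo:fast-rates}, is that $\hat p$ is a sample-dependent random quantity; a naive approach based on a fixed-$q$ concentration inequality would require an additional union bound over a discretization of $(0,1)$. Beyond this, the proof is essentially a translation between the excess-risk language (in terms of $\mathcal{R}_q$) and the expectation language (in terms of $P$ on the class $\mathcal{H}$), relying on Lemmas~\ref{lemma:bernstein-cond} and~\ref{lemma:VC-tildeF} to export regularity from $\mathcal{L}$ to $\mathcal{H}$ and on the ERM property of $\hat g_{\hat p}$ to eliminate the empirical-excess-risk term.
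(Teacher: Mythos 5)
Your proposal is correct and follows essentially the same route as the paper's own proof: choose $\mathcal{F}_q = \mathcal{H}_q$, check the VC and Bernstein hypotheses via Lemma~\ref{lemma:VC-tildeF} and the standing assumption, apply the uniform-in-$q$ bound of Theorem~\ref{theo:fast-rates} at the random index $q=\hat p$ and the function $f_{\hat p} = \ell_{\hat g_{\hat p}} - \ell_{g_{\hat p}^*}$, and kill the empirical term by the ERM property. Your remark that the $K$-dependent prefactor must be absorbed into $c_1$ is in fact slightly more careful than the paper, which simply sets $K=2$ (strictly, $K=2$ gives an extra factor $K$ in the numerator versus the displayed bound, so either one lets $K\downarrow 1$ or one accepts the harmless constant inflation you describe).
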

\begin{proof}
We consider  the classes of functions  $\mathcal{F}_q = \mathcal{H}_q$. The  class of convex combinations from the statement of Theorem~\ref{theo:fast-rates} is precisely $\mathcal{H}$, which is $B$-Bernstein by assumption and it is also of VC-type with parameters $(\tilde v, \tilde A)$ by virtue of Lemma~\ref{lemma:VC-tildeF}. We may thus apply Theorem~\ref{theo:fast-rates}.  
  % $\mathcal{F} =  \mathcal{L} = \{\ell_g, g\in\mathcal{G}\}$ % \{\ell_{g_1}-\ell_{g_2}, g_1,g_2\in\mathcal{F}\}$, which has  envelope $2U$ if $\mathcal{L}$ has envelope $U$.    % of excess risks
  % $\tilde {\mathcal{F}} = \{ \tilde \ell_{q,g} - \tilde \ell_{q,g^*_q}, g\in\mathcal{G}, q\in(0,1) \}$.
  Because the  result of Theorem~\ref{theo:fast-rates} 
  holds uniformly over $q\in(0,1)$, $f\in\mathcal{F}_q$,  one may choose $q=\hat p$. Also we choose   $f_{\hat p}\in\mathcal{F}_{\hat p} = \mathcal{H}_{\hat p}$ as   $f_{\hat p} = \ell_{\hat g_{\hat p}} -  \ell_{g_{\hat p}^*} $. 
   Then the first term on the right-hand side of the first upper bound in Theorem~\ref{theo:fast-rates}  is nonpositive, and the result follows upon choosing $K=2$.
\end{proof}
\begin{remark}[Weighting with $\hat p$ or $p$]
   % In spirit, considering $\mathcal{R}_{\hat p}$ instead of $\mathcal{R}_p$  is close to assuming that $p$ known, or that $n_+,n_-$ the numbers of positive and negative examples are fixed.  In this respect our setting for fast rates comes closer to \cite{rigollet2011neyman} compared with Section~\ref{sec:standard}, the main difference here is that our rates are fast. 
Under mild conditions on $np$ and $\delta$, we have that $\hat p (1- \hat p) \geq p(1-p)/2  $ (as indicated by Chernoff’s multiplicative bound in Theorem \ref{th_chernoff}), so that Corollary~\ref{coro:fast-rates-excessrisk} immediately yields a rate of convergence in terms of the true value $p$ rather than its  empirical counter part. However whether it is possible to replace $\risk_{\hat p}$ with $\risk_{p}$ in the statement
% How to obtain a similar fast rate regarding $\mathcal{R}_{p} $ it self, not $\mathcal{R}_{\hat p}$, is left as
remains an open question. The main bottleneck seems to be that replacing $\hat p$ with $p$ in expressions of the kind `$P_n( fI_+)/\hat p + P_n(f I_-)/(1-\hat p)$', where one term in the summand may be negative, induces additional slow rate terms of order $O(1/\sqrt{np})$. 
  \end{remark}

We conclude this section by  illustrating the significance of our results, through the concrete setting of Example~\ref{example:linear}. 
 % a  constrained empirical risk minimization problem over a certain class of classifiers.
%  We show that  fast rates of convergence are achieved  provided that the covariate space $\mathcal{X}\subset  \mathbb {R}^d$ is bounded and the loss is twice differentiable with a second derivative lower bounded away from $0$. More precisely, we make the following assumption.
% \begin{assumption}\label{assum:strong-convexity}
%   The space $\mathcal{X}$ is bounded in $\rset^d$ \ie, there exists
%   some $\Delta_X>0$ such as
%   $\forall x \in \mathcal{X}, \left\|x\right\|\leq \Delta_X $ for a
%   given norm $ \|\cdot\|$. Furthermore, the family of classifier and
%   the loss function are chosen as 
%   $\mathcal{G}_u=\left\{g(x)=\beta^Tx\mid \left\|\beta\right\|\leq
%     u\right\}$ and $\ell_g(X,Y) =\phi(\beta^T XY)$, where $\phi: \rset \mapsto \rset $ is a  twice
%   differentiable function  verifying
%   $\inf_{\abs{x}\leq u\Delta_X}\phi^{''}(x)>\lambda$ for some
%   $\lambda>0$.
% \end{assumption}
% An immediate implication of the aforementioned assumption is that, identifying $g$ with $\beta$, we have  $\sup_{x,y}\left\|\frac{\partial}{\partial g} \ell_g(x,y)\right\|<\infty$ which ensures that the risk  is Liptchitz. In addition this assumption guarantees that the risk  is $\lambda$-strongly convex with respect to $g$.
The following corollary is a direct consequence of Corollary \ref{coro:fast-rates-excessrisk} and guarantees fast rates of convergence for constrained \ERM, specifically for algorithms   of the form  $\hat g_{u,\hat p}(x)=\hat \beta_u^{T} x$ with
\begin{align*}
\hat \beta_u % =\argmin_{\left\|\beta\right\| \leq u} P_{n, \hat p} \phi(\beta^\top XY)% \frac{1}{n}\sum_{i=1}^{n}\phi(\beta^\top X_iY_i)&\Bigg(\frac{\mathds{1}\{Y=1\}}{{\hat p}}\nonumber\\
% % &\hspace{5mm}+\frac{\un{Y=-1}}{1-{\hat p}}\Bigg).
%   \\
% &
  =\argmin_{\left\|\beta\right\| \leq u}   n^{-1}\sum_{i=1}^{n}  &\phi(\beta^\top X_iY_i) \Big(p^{-1}\un{Y_i=1} +  \dotsb \\
& (1-p)^{-1}\un{Y_i =-1} \Big). %  \Big(\frac{\mathds{1}\{Y=1\}}{{\hat p}} \\
 % &\hspace{5mm}+\frac{\un{Y=-1}}{1-{\hat p}}\Big).
\end{align*}
Then  from standard arguments the class $\mathcal{L} = \{\ell(x,y) = \phi(\beta^\top \}$ is of VC-type with parameters $(v = 2(d+1), A)$ for some $A>0$ (see e.g.\cite{van1996weak}, Chap.2.6) depending on $\phi$.  The following result derives immediately from the argument of Example~\ref{example:linear}, from  Lemma~\ref{lemma:bernstein-cond} and from Corollary~\ref{coro:fast-rates-excessrisk}. 
 
\begin{corollary}\label{coro:fast-rates-contrained} 
% \new{to adapt} \as{oui remettre les calculs de constante B de la condition~\ref{cond:bernsteinFtilde}}
% Suppose that Assumption \ref{assum:strong-convexity} holds for some $\lambda>0$.
%  \as{à reecrire }
In the setting of Example~\ref{example:linear}, let $(v=2(d+1),A)$ be the parameters of the VC-type class $\mathcal{L}$ defined above the statement.   The excess risk of $\hat g_u$  verifies, for any $\delta>0$,
 with probability $1-4\delta$, % \as{modifier le résultat}
 \begin{align*}
   \risk_{\hat p}\left(\hat g_{\hat p} \right)\leq \risk_{\hat p}( g^*_{\hat p})
   % &
+ \frac{ c_1 (d+1)D^2\sigma_{\max}^2 }{ \mu \sigma_{\min}^2} 
\frac{\log(30 A \sqrt{n}/\delta ) } { n \hat p (1-\hat p) }, 
 \end{align*}
 %	where $L'=\sup_{\abs{x}\leq u\Delta_X}\phi'(x)$.
 where $c_1$ is as in Theorem~\ref{theo:fast-rates}. 
\end{corollary}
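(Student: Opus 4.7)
The statement is presented as a direct consequence of Example~\ref{example:linear}, Lemma~\ref{lemma:bernstein-cond} and Corollary~\ref{coro:fast-rates-excessrisk}. My plan is therefore to verify that the hypotheses of Corollary~\ref{coro:fast-rates-excessrisk} are met in the setting of Example~\ref{example:linear}, and then to substitute the resulting explicit parameters into its conclusion.

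First, I would check the VC-type assumption on the loss class $\mathcal{L} = \{(x,y)\mapsto \phi(\beta^\top xy):\|\beta\|\le u\}$. The functions $(x,y)\mapsto \beta^\top xy$ lie in a vector space of dimension at most $d+1$, hence form a VC-subgraph class of index at most $d+1$ (\cite{van1996weak}, Chapter~2.6); composition with the continuous monotone function $\phi$ preserves this subgraph property. Converting the subgraph index into the covering-number parameters of Definition~\ref{def:VC-class} via Haussler's inequality yields VC-type parameters $(v = 2(d+1),A)$, with $A$ depending only on~$\phi$, as announced just above the statement.

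Next, I would reuse the computations spelled out in Example~\ref{example:linear} to verify the Bernstein condition on the class $\mathcal{H}$. The ball $\{\beta : \|\beta\|\le u\}$ is clearly a convex subset of a normed vector space. Strong convexity of $\beta \mapsto P(\ell_{g_\beta} I_+)$ with parameter $p\lambda = p\mu\sigma_{\min}^2$ is obtained from $\phi''\ge \mu$ together with the identity
\begin{equation*}
\nabla^2_\beta P(\ell_{g_\beta} I_+) \;=\; p\,\mathbb{E}\bigl[\phi''(\beta^\top X Y)\, X X^\top \,\big|\, Y=1\bigr],
\end{equation*}
and symmetrically for $I_-$ with parameter $(1-p)\mu\sigma_{\min}^2$. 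The required Lipschitz control on $\sqrt{P((\ell_{g_1}-\ell_{g_2})^2 I_s)}$ reduces, through the uniform bound $|\phi'|\le D$ on the compact image set $I$, to the variance inequality $\mathbb{E}[((\beta_1-\beta_2)^\top X)^2\mid Y = s\cdot 1]\le \sigma_{\max}^2\|\beta_1-\beta_2\|^2$. Lemma~\ref{lemma:bernstein-cond} then delivers the Bernstein constant $B = L^2/\lambda = D^2\sigma_{\max}^2/(\mu\sigma_{\min}^2)$.

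Finally, I would instantiate Corollary~\ref{coro:fast-rates-excessrisk} with $\tilde v = 4v+1 = 8(d+1)+1$ and $\tilde A = 6A$. Substituting these quantities and the above $B$ into its upper bound yields the announced inequality, after absorbing the numerical factor $8(d+1)+1 \le C(d+1)$ into the universal constant~$c_1$. The probability level $1-4\delta$ in the statement (rather than $1-\delta$) leaves room for the small additional union-bound contributions needed to discard the negligible event $\{\hat p\in\{0,1\}\}$, on which the right-hand side would be ill-defined. Since every step reduces to invoking an already-established result, I foresee no substantive obstacle beyond the bookkeeping of constants in the VC-type argument and the Hessian computation of the strong-convexity step; no new concentration argument is needed.
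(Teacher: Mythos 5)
Your proposal follows exactly the route the paper intends (it offers no explicit proof, merely pointing to Example~\ref{example:linear}, Lemma~\ref{lemma:bernstein-cond}, and Corollary~\ref{coro:fast-rates-excessrisk}), and your verification of the VC-type, strong-convexity, and Lipschitz hypotheses is correct. Two small remarks on the bookkeeping you flagged: first, your observation that the displayed bound cannot hold with $c_1$ literally equal to the constant of Theorem~\ref{theo:fast-rates} is right, since plugging $\tilde v = 8(d+1)+1$ into Corollary~\ref{coro:fast-rates-excessrisk} produces a prefactor of $(8(d+1)+1)/2 \approx 4.5(d+1)$ rather than $(d+1)$, so the constant must indeed be enlarged despite the paper's phrasing; second, your explanation of the $1-4\delta$ level is not convincing, because on the event $\hat p \in \{0,1\}$ the right-hand side is $+\infty$ and the inequality holds vacuously, so no union bound is needed to discard it, and the origin of the factor $4$ is left unexplained by the paper itself. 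Neither point undermines the substance of your argument.
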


\begin{discussion*}
In the context of constrained logistic regression, where $\phi(x) = \log(1 + e^{-x})$, the latter corollary yields fast convergence rates with constants and $L' = 1$, along with $\lambda = e^{-u}$. Corollary~\ref{coro:fast-rates-contrained} further establishes accelerated convergence rates for constrained empirical \emph{balanced} risk minimization with respect to losses such as mean squared error, squared hinge, and exponential loss, among others. This outcome aligns with expectations, as constrained empirical risk minimization is equivalent to penalization \citep{lee2006efficient,homrighausen2017risk}. Numerous studies have demonstrated the effectiveness of penalization in achieving rapid convergence rates \citep{Koren15,vanerven15a}. This aspect is particularly significant in the present context, as the standard convergence rate for imbalanced classification is $1/\sqrt{np}$, and accelerating the convergence rate leads to a more pronounced impact.
\end{discussion*}

\section{Numerical illustration}
In this section, we  illustrate on synthetic data %we provide, using synthetic data, a numerical illustration of
our theoretical results on $k$-NN classification (Corollary \ref{coro:knn-consistency}) and on logistic regression (Corollary \ref{coro:fast-rates-contrained}). In both cases, particular attention is paid to highly imbalanced settings where $p=n^{-a}$ for some $0<a<1$.
Due to space constraint, the real data experiments are postponed to the supplement.
%\paragraph{Synthetic dataset.} In the two cases considered below,
We use the following simple data generation process to obtain a   binary classification i.i.d  dataset $ (X_i , Y_i )_{i=1,\ldots , n} $,  with $X_i  \in
 \rset^2  $ and $Y \in \{-1, 1\}$. The $Y_i$ are Bernoulli variables with parameter  $ p = 1/n^a  $, for some $a<1 $. Then, given $Y_i=y$,  $X_i$ is drawn according to a $t$-multivariate-student distribution, with   parameters
$(\mu_y , \sigma_y , \nu_y )$, where  $(\mu_{-1}, g\mu_1 )= ((0,0),(1,1))$, $\sigma_1 = 3 \sigma_{-1}=3 I$ and $(\nu_{-1},\nu_1) = (2.5,\,1.1)$.
\subsection{Balanced $k$-Nearest Neighbors}
Corollary \ref{coro:knn-consistency} provides sufficient conditions on $k,n$ and $p$ for consistency of the  $k$-NN classification rule, the key being that % . The key condition on which we focus here is that
$kp$ should go to $\infty$. This  suggests the existence of a learning frontier on the set $(k,p)$ above which consistent learning is ensured. Our experiments aim at illustrating this fact.
% Here we illustrate  this fact and our numerical results  support the conjecture  that conversely, whenever $kp$ is too small  (below the learning frontier),  $k$-NN is no longer consistent,  making clear that the choice of the number of neighbors $k$ should be made considering the value of $p$.
%
%In the following section, our objective is to graphically demonstrate why the choice of the number of neighbors $k$ should be dependent upon the parameter $p$. In order to maintain consistency, it is crucial that $kp \to\infty$. Consistency,  is ensured by Corollary \ref{coro:knn-consistency} only when $kp \to\infty$.
%The experiments setup is as follows.
Here the training size is $n=1e4$. We set $p=p_n = 1/n^a$ and $k=n^b$, where  $a,b$ vary within  the interval $[1/4,3/4]$ and cover different cases ranging from $pn \to  0 $ to $pn\to \infty$. The \textrm{AM}-risk for the classification error associated to the balanced $k$-NN classifier (estimated with $20$ simulations) is displayed as a function of $(k,p)$ in Figure~\ref{fig:knn-heatmap}. For small values of $kp$, the performance of the $k$-nearest neighbors classifier mirrors that of a random guess, maintaining an \textrm{AM} risk near $0.5$, while $kp$ large ensures good performance.% . , when $kp$ is kept small.
This observation illustrates (and extends) the conclusion of Corollary \ref{coro:knn-consistency}, supporting that consistency is obtained if (and only if) $kp\to \infty$. 
%\begin{figure}\label{fig:knn-heatmap}
%	\centering
%	\includegraphics[scale=0.4]{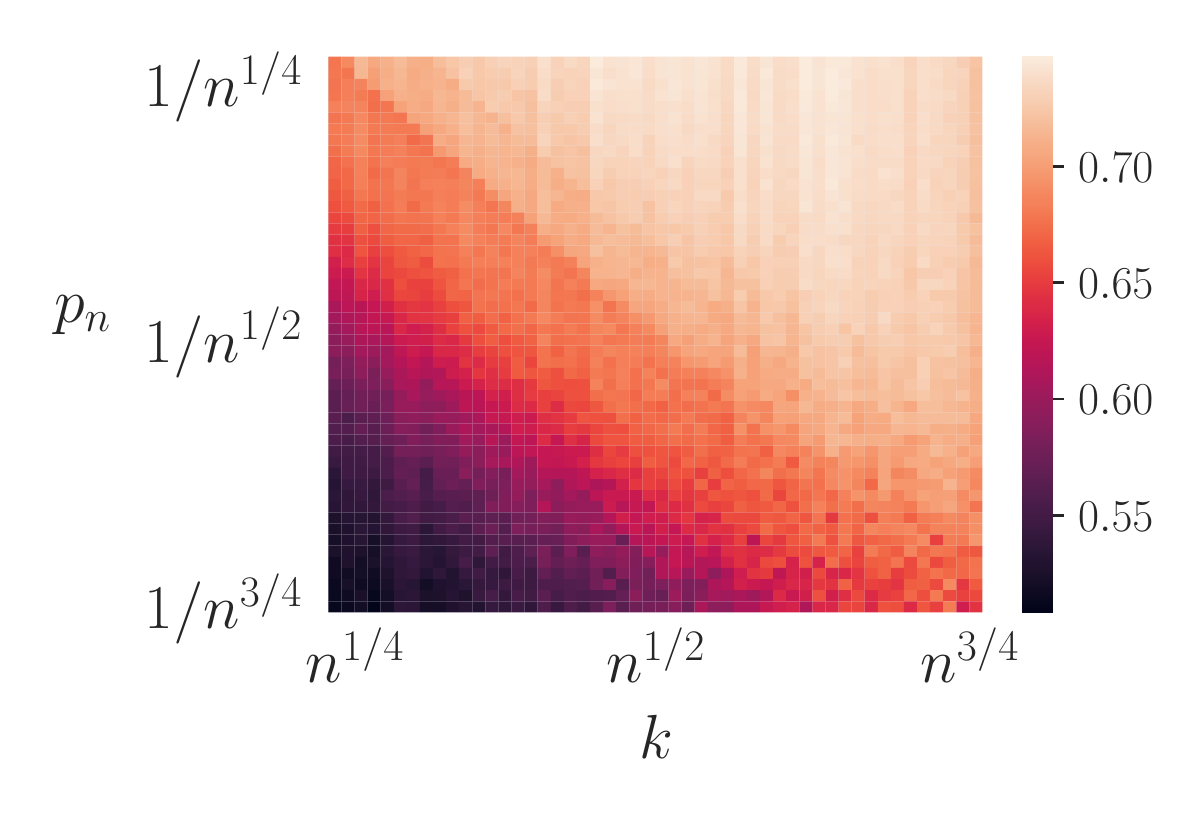}
%	\caption{Heatmap showing the \textrm{AM} risk of the balanced $k$-NN.}
%\end{figure}

%\begin{figure}\label{fig:excessrisk}
%	\centering
%
%	\includegraphics[scale=0.4]{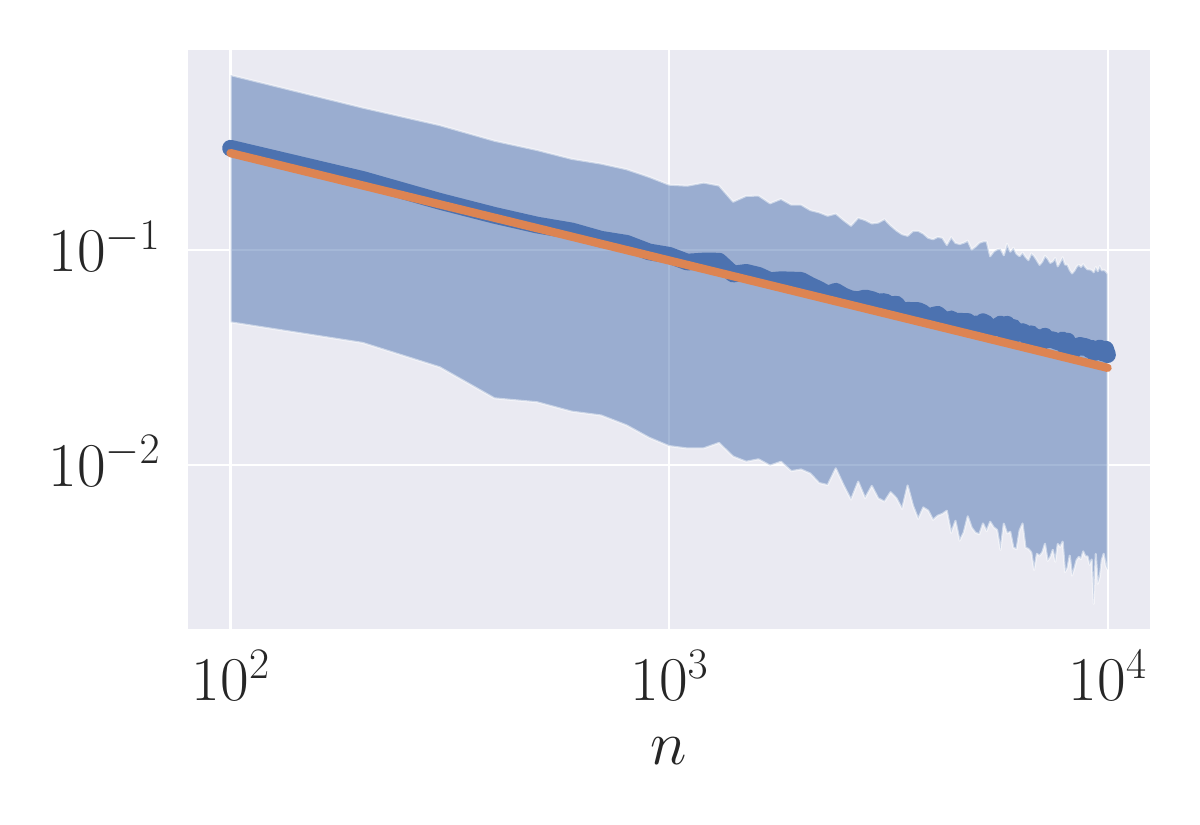}
%
%	\caption{Heatmap showing the \textrm{AM} risk of the balanced $k$-NN.}
%\end{figure}

%\begin{figure}
%	\minipage{0.5\textwidth}
%	\vspace{-12mm}
%%	\hspace{-10mm}%
%	\includegraphics[width=\linewidth]{knn-heatmap.pdf}
%	\caption{Heatmap showing the \textrm{AM} risk of the balanced $k$-NN.}\label{fig:knn-heatmap}
%	\endminipage
%	\hfill
%	\minipage{0.45\textwidth}
%	\vspace{-3mm}
%	\hspace{-5mm}
%	\includegraphics[width=\linewidth]{Excess_risk_2.pdf}
%	\vspace{4mm}
%	\caption{Excess risk (blue) of logistic regression  for different sample size $n$ and the curve $1/np$ (orange). The blue area corresponds to the $0.9$-confidence interval.}\label{fig:excessrisk}
%	\endminipage
%\end{figure}

\begin{figure}
	%	\hspace{-10mm}%
	\includegraphics[width=\linewidth]{}
	\caption{\textrm{AM} risk of the balanced $k$-NN (heatmap).}\label{fig:knn-heatmap}
\end{figure}
%\begin{figure}
%
%	\includegraphics[width=\linewidth]{Excess_risk_2.pdf}
%
%	\caption{Excess risk (blue) of logistic regression  for different sample size $n$ and the curve $1/np$ (orange). The blue area corresponds to the $0.9$-confidence interval.}\label{fig:excessrisk}
%
%\end{figure}
\subsection{Balanced ERM}
Our goal is to demonstrate empirically  that  the fast convergence rate of order $1/ (np) $ obtained in Corollary \ref{coro:fast-rates-contrained} is  sharp and can be observed in practice for a wide range of values of $p$. % as it can be recovered in practice.
% The  question we ask is whether the upper bound from corollary \ref{coro:fast-rates-contrained} of order $O\left(1/np\right)$
%describes accurately the behavior of the excess risk. \\
We consider the linear setting of Example~\ref{example:linear}, % a linear classifier defined as
%$\hat \beta_u = \hat g$ introduced in Section \ref{sec:fast}
with the logistic loss: $\ell_g( X, Y ) =~\log(1-e^{-g(X)Y})$, $g(X) = \beta^T X$ and $\|\beta\|\le u=10$. The sample size $n $  ranges over the grid $[100,1e4]$ and we let  $p = n^{-a}$,  $ a \in \{1/3,1/2,2/3 \}$. 
Some Monte-Carlo simulations ($N=1e{5}$ simulations) are needed to estimate  $g^*_{\hat p}$. For simplicity and to alleviate the computational burde  we consider  $g^*_{p} $ for fixed $p$ instead,   and the balanced risk $\mathcal{R}_p$ instead of $\mathcal{R}_{\hat p}$. % , in order to simplifyeliminate an additional Monte-Carlo loop.
%We use an $1e{5}$ simulations %according to a well balanced data set ($p = 1/2$) so that the error  computing $g^*_{p}$ is sufficiently small.  
% In addition, we use some more Monte Carlo simulation from a balanced test dataset of size $1e4$, to evaluate without bias
The risk function $\risk_{p}$ is also estimated based on Monte-Carlo simulations $(N' = 1e4$). We thus obtain both $\risk_{p}(g^*_{p}) $ and $\risk_{p}( \hat g_{\hat p} ) $, and the value of the excess $p$-risk follows  % so that an excess risk value can be computed.
We perform  report the average and
the upper $0.10$ and $0.90$ quantile of the absolute error obtained over the $n_{simu}=1e4$ experiments.  %$n_{simu} = 1e4$ experiments and we
\begin{figure}[!h]
	\centering
	\begin{subfigure}{0.4\textwidth}
		\includegraphics[width=\textwidth]{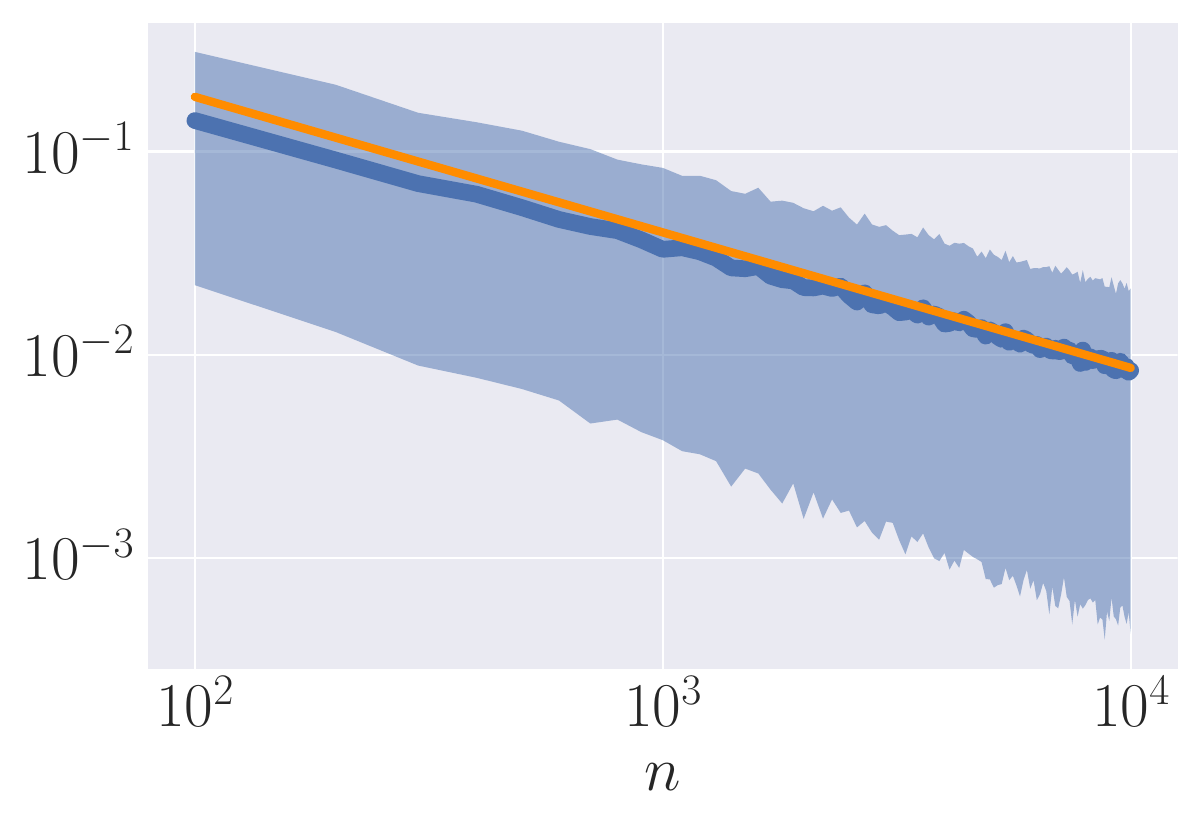}
		\caption{$p_n=1/n^{1/3}.$}
		\label{fig:first}
	\end{subfigure}
	\hfill
	\begin{subfigure}{0.4\textwidth}
		\includegraphics[width=\textwidth]{Excess_risk_2.pdf}
		\caption{$p_n=1/n^{1/2}.$}
		\label{fig:second}
	\end{subfigure}
	\hfill
	\begin{subfigure}{0.4\textwidth}
		\includegraphics[width=\textwidth]{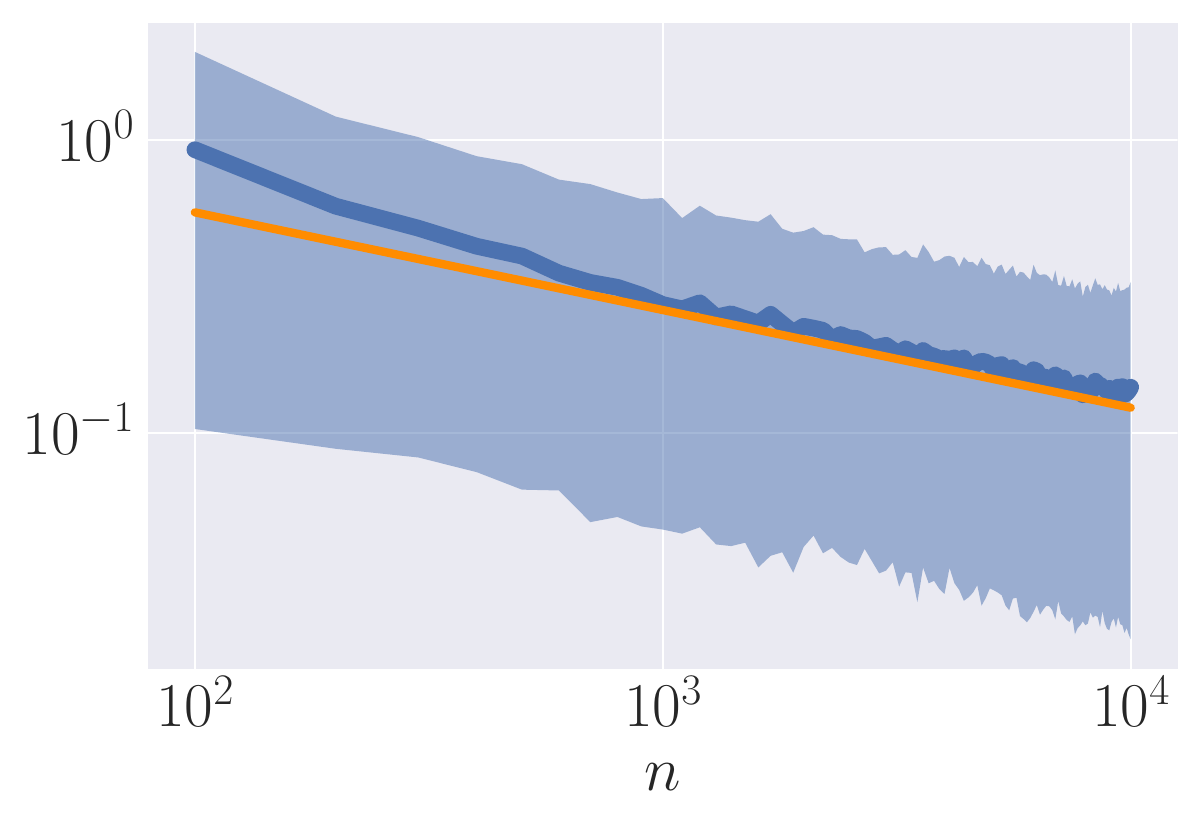}
		\caption{$p_n=1/n^{2/3}.$}
		\label{fig:third}
	\end{subfigure}
	\caption{Excess balanced risk (log-scale)  of logistic regression as a function of $n$, when $p=p_n\to 0$.  Orange line: curve $1/np$. Blue area: inter-quantile range $[0.1,0.9]$. }\label{fig:excessrisk}.
	\label{fig:figures}
      \end{figure}
      %\vspace{-1cm}
%We investigate the excess risk of the algorithm trained using a highly imbalanced datasets $\DD=(X_1,X_2,\dots,X_n)$  of size $n$ ranging in the grid $[100,10^4]$ and rare class probability $p_n = n^{-a}, a \in \{1/3,1/2,2/3 \}$. In practice we compute $\risk_{p}(\hat g)$  using a a balanced test dataset of size $n_{test}=10^4$, furthermore we estimate $ g^*_{p}$ by using constrained empirical risk minimization on a large balanced dataset of size $n_{star}=10^{
Figure \ref{fig:excessrisk} displays the excess risk as a function of
the sample size
$n$ on a logarithmic scale, for
$a\in\left\{1/3,1/2,2/3\right\}$. Other figures exploring other
  values of
  $a$ are reported in the supplementary material. We notice that the
  excess risk vanishes in the same way as the function $ n\mapsto
  1/np$ confirming the accuracy of the upper bound from Corollary
  \ref{coro:fast-rates-contrained}.

\section{Conclusion}
In this paper, we have derived upper bounds for the balanced risk in highly imbalanced classification scenarios. Notably, our bounds remain consistent even under severe class imbalance ($p\to 0$), setting our work apart from existing studies in imbalanced classification \citep{Menon2013,koyejo2014consistent,xu2020class}. Furthermore, it is worth to highlight that this is the first study to achieve fast rates in imbalanced classification, marking a significant advancement in the field.
Our findings confirm  that both risk-balancing approaches and cost-sensitive learning are consistent across nearly all imbalanced classification scenarios. This aligns with experimental works previously documented in the literature \citep{elkan2001foundations,wang16,khan2018,wang19,PATHAK2022}. 
Furthermore, the methodologies and proof techniques presented in this paper are adaptable to other imbalanced classification metrics beyond balanced classification. Potential extensions include demonstrating consistency for metrics such as the $F_1$-measure, Recall, and their respective variants.

%other words we monitor the
%b CV,α (Ψ α , V 1:K ) defined in (3) for several values of α within the range
%estimator R
%b CV,α using a dataset D n , of size n = 2.10 4 and
%[1%, 20%]. In practice we compute R
%evaluate the generalization risk of the trained rule Ψ α (S n ) on a test set (D Test , of size
%n test = 2.10 6 ). We perform n simu = 10 4 experiments and we report the average and
%the upper 0.90 quantile of the abolute error obtained over the n simu experiments. In
%other words we monitor the
%b CV,α (Ψ α , V 1:K ) − R α (Ψ α (S n )) approximated by
%absolute generalization gap R
%b CV,α (Ψ α , V 1:K ) − R
%b α (Ψ α (D n ), D Test ) and we report a Monte-Carlo
%the quantity R
%approximation of its expected value and its quantile of order 0.90 for different value
%of α.

%%% Local Variables:
%%% mode: latex
%%% TeX-master: "main-imbalanced"
%%% End:

\bibliographystyle{apalike}
\bibliography{biblio_imbalanced}
\newpage
\appendix
\onecolumn
\section{Appendix}

\section{Auxiliary results}

The following standard Chernoff inequality is stated and proven in  \cite{hagerup1990guided}.

\begin{theorem}\label{th_chernoff}
	Let $(Z_i)_{i\geq 1}$ be a sequence of i.i.d. random variables valued in $\{0,1\}$. Set $\mu =  n P (Z_1)$ and $S = \sum_{i=1} ^n Z_i$.   For any $\delta \in (0,1)$ and all $n\geq 1$, we have with probability $1-\delta$:
	\begin{align*}
	S \geq \left(1- \sqrt{ \frac{2 \log(1/\delta)  }{  \mu} } \right) \mu  .
	\end{align*}
%        \as{plutot $3 \log(1/\delta)$ ???? } \textbf{le $2$ est optimal mais tu peux mettre le 3 si tu préfères}
	In addition, for any $\delta \in (0,1)$ and $n\geq 1$, we have with probability $1-\delta$:
	\begin{align*}
	S \leq \left(1 +  \sqrt{ \frac{3 \log(1/\delta)   }{  \mu} }  \right) \mu  .
	\end{align*}
\end{theorem}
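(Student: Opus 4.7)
The plan is to apply the classical Chernoff (exponential Markov) argument separately to the upper and lower tails of $S$. Since the $Z_i$ are i.i.d.\ Bernoulli with parameter $q := \mu/n$, the moment generating function factorizes as
$$\EE[e^{tS}] = (1 - q + q e^{t})^n, \qquad t\in\rset,$$
and the standard inequality $1 + x \le e^{x}$ yields the clean bound $\EE[e^{tS}] \le \exp(\mu(e^{t} - 1))$.

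For the upper tail, I would apply Markov's inequality to $e^{tS}$ with $t > 0$: for any $\epsilon > 0$,
$$ \PP[ S \ge (1+\epsilon)\mu ] \le \exp\bigl(\mu(e^t - 1 - t(1+\epsilon))\bigr),$$
optimize in $t$ by choosing $t = \log(1+\epsilon)$, and obtain the multiplicative Chernoff bound
$\PP[S \ge (1+\epsilon)\mu] \le \exp(-\mu\, h_{+}(\epsilon))$ with $h_{+}(\epsilon) = (1+\epsilon)\log(1+\epsilon) - \epsilon$. A short calculus exercise shows $h_{+}(\epsilon) \ge \epsilon^{2}/3$ for $\epsilon \in (0,1]$. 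Setting the right-hand side equal to $\delta$ and inverting yields the choice $\epsilon = \sqrt{3\log(1/\delta)/\mu}$, which gives exactly the second displayed inequality. For the lower tail I would repeat the argument with $t < 0$: writing $t = -s$ for $s > 0$, the same MGF bound and the optimization $s = -\log(1-\epsilon)$ produce $\PP[S \le (1-\epsilon)\mu] \le \exp(-\mu\, h_{-}(\epsilon))$ with $h_{-}(\epsilon) = (1-\epsilon)\log(1-\epsilon) + \epsilon$, and the elementary inequality $h_{-}(\epsilon) \ge \epsilon^{2}/2$ on $(0,1)$ yields the inversion $\epsilon = \sqrt{2\log(1/\delta)/\mu}$ and the first displayed inequality.

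The only real subtlety is the range of validity of the inversion: the multiplicative form requires $\epsilon < 1$. For the lower tail this is automatic because if $\sqrt{2\log(1/\delta)/\mu} \ge 1$ the claimed bound $S \ge (1-\epsilon)\mu$ is non-positive and hence trivially true (since $S \ge 0$). The upper-tail inequality $S \le (1+\epsilon)\mu$ of course remains meaningful for any $\epsilon > 0$, and $h_{+}(\epsilon) \ge \epsilon^{2}/3$ holds on the whole range $\epsilon \in (0, 1]$, which covers the regime in which the bound is informative; otherwise the claim is again vacuous. I do not anticipate a genuine obstacle here, as the argument is entirely classical; the only care needed is in the elementary convex-analytic verifications of the two quadratic lower bounds on $h_{\pm}$ and in checking that the resulting form matches the inequalities stated in the theorem.
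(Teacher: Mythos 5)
The paper offers no proof of this statement: it simply cites \cite{hagerup1990guided}, and the argument there is exactly the classical exponential--Markov route you describe, resting on the two elementary inequalities $h_-(\epsilon)\ge\epsilon^2/2$ on $[0,1)$ and $h_+(\epsilon)\ge\epsilon^2/3$ on $[0,1]$. Your derivation of both bounds is correct on these ranges, and your observation that the lower-tail inequality is trivially true when $\sqrt{2\log(1/\delta)/\mu}\ge 1$ (because $S\ge 0 \ge (1-\epsilon)\mu$) correctly disposes of the large-$\epsilon$ case for the first display.

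The gap is in the sentence dismissing $\epsilon>1$ for the upper tail as ``vacuous''. It is not: for $\epsilon>1$ the claim $S\le(1+\epsilon)\mu$ is a genuine constraint, and the inequality $h_+(\epsilon)\ge\epsilon^2/3$ on which the inversion rests actually fails once $\epsilon$ is slightly above $1.8$ --- already $h_+(2)=3\log 3-2\approx 1.296<4/3$. Worse, the upper-tail display, read literally, is false outside the range $\epsilon\le 1$: take $n=1$, $P(Z_1)=0.06$, $\delta=0.05$, so that $\mu=0.06$ and $\sqrt{3\log(1/\delta)/\mu}\approx 12.2$, giving $(1+\epsilon)\mu\approx 0.79$; then $\mathbb{P}(S\le 0.79)=\mathbb{P}(S=0)=0.94<1-\delta$. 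The multiplicative Chernoff bound for the upper tail carries the implicit restriction $\epsilon\le 1$, equivalently $\mu\ge 3\log(1/\delta)$, and this is the regime in which the paper actually invokes the result (\emph{cf.}\ the hypothesis $z_n\le 1$ in Lemma~\ref{lemma_for_p} and the requirement $np\ge 8\log(1/\delta)$ in Theorem~\ref{theo:VC-standard-rate}, both of which imply it). Your proof should state that restriction explicitly rather than argue the excluded case away.
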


The following is taken from \cite{portier2021nearest}. Other similar results are given in \cite{gine2001consistency} or \cite{plassier2023risk} with non-explicit constants. 

\begin{theorem}\label{th_vc_class}
Let $(Z_1,\ldots ,Z_n) $ be an independent and identically distributed collection of random variables  with common distribution $P$ on $ (S,\mathcal S)$. Let $\mathcal G$ be of VC-type with parameters $v\geq 1 $, $A\geq 1$ and uniform envelope $U\geq \sup_{g\in \mathcal G,\, x\in S}  |g(x)|$. Let $\sigma$ be such that $\sup_{g\in \mathcal G} \Var_P(g) \leq \sigma^2  \leq U^2 $.  For any $n\geq 1$ and $\delta\in (0,1) $, it holds, with probability at least $1-\delta$,
\begin{align*}
&  \sup_{g\in \mathcal G} \left| \sum_{i=1} ^n   \{g (Z_i )  -  P (g )  \} \right|  \leq  K_1 
 \sqrt{  vn \sigma^2   \log\left(  9AU  /  (\sigma \delta)    \right)      }  +   K_2   Uv   \log\left(  9 AU  /  (\sigma \delta)     \right)  ,
  \end{align*} 
	with $K_1 =    5C  $,  $K_2 =   64 C^2    $ and  $C=  12 $. %\int_{  1 } ^\infty   s^{-2}  \sqrt{ ( 1 + \log(  s  ))  }     \diff s$.
\end{theorem}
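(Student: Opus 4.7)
The plan is a classical two-step strategy that first concentrates the supremum around its expectation via a Talagrand-type inequality, and then bounds the expected supremum by a chaining argument tuned to the VC-type entropy assumption. Throughout, set $Z = \sup_{g\in\mathcal G}\bigl|\sum_{i=1}^n (g(Z_i) - P(g))\bigr|$ and write $\tilde g = g - Pg$, so that the $\tilde g(Z_i)$ are centered, bounded by $2U$, with variance at most $\sigma^2$.

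\medskip

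\textbf{Step 1 (Bousquet--Talagrand concentration).} Apply Bousquet's version of Talagrand's inequality for suprema of bounded empirical processes to the class $\{\tilde g : g\in\mathcal G\}$. This yields, with probability at least $1-\delta/2$,
\[
Z \;\le\; \mathbb{E}[Z] \;+\; \sqrt{2\bigl(n\sigma^2 + 4U\,\mathbb{E}[Z]\bigr)\log(2/\delta)} \;+\; \tfrac{2U}{3}\log(2/\delta).
\]
Thus it suffices to obtain a sharp bound on $\mathbb{E}[Z]$ with the right variance and envelope contributions.

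\medskip

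\textbf{Step 2 (Expected supremum via chaining).} By the standard symmetrization inequality, $\mathbb{E}[Z] \le 2\,\mathbb{E}\bigl[\sup_{g\in\mathcal G}\bigl|\sum_i \varepsilon_i g(Z_i)\bigr|\bigr]$, where $(\varepsilon_i)$ are Rademacher variables. Conditional on the data, the Rademacher process indexed by $\mathcal G$ is sub-Gaussian with respect to the empirical $L_2$-distance, so Dudley's entropy integral gives
\[
\mathbb{E}\Bigl[\sup_{g\in\mathcal G}\bigl|\textstyle\sum_i \varepsilon_i g(Z_i)\bigr|\,\Big|\,Z_1,\dots,Z_n\Bigr]
\;\le\; C\sqrt{n}\int_0^{\hat\sigma}\sqrt{\log \mathcal N(\mathcal G, L_2(\mathbb{P}_n),\epsilon)}\,d\epsilon,
\]
with $\hat\sigma^2 = \sup_g \mathbb{P}_n(g^2)$. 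Injecting the VC-type covering bound $\mathcal N(\mathcal G,L_2(Q),\epsilon U) \le (A/\epsilon)^v$, evaluating the integral and taking expectation (using Jensen's inequality and a standard decoupling of $\hat\sigma$ from the envelope), one obtains
\[
\mathbb{E}[Z] \;\le\; C_1 \sqrt{v\,n\sigma^2 \log(AU/\sigma)} \;+\; C_2\, vU \log(AU/\sigma),
\]
where the second term handles the regime where the entropy integral becomes dominated by the envelope.

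\medskip

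\textbf{Step 3 (Combining).} Plug the Step~2 bound into the Step~1 inequality and use $\sqrt{a+b}\le \sqrt{a}+\sqrt{b}$ together with $2\sqrt{ab}\le a+b$ to separate the variance and envelope contributions. After regrouping constants and absorbing multiplicative factors into the logarithm (so that $\log(AU/\sigma) + \log(2/\delta)$ is replaced by $\log(9AU/(\sigma\delta))$), one recovers the claimed form
\[
Z \le K_1\sqrt{vn\sigma^2 \log(9AU/(\sigma\delta))} + K_2\, Uv \log(9AU/(\sigma\delta)).
\]

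\medskip

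The main obstacle is not conceptual but bookkeeping: tracking \emph{explicit} universal constants through both Bousquet's inequality and the chaining bound, and in particular verifying that the constant $C=12$ arising in the Rademacher chaining step propagates correctly through symmetrization and through the Bernstein-type square-root combination to yield $K_1 = 5C$ and $K_2 = 64 C^2$. A secondary technical point is handling the random quantity $\hat\sigma$ inside the entropy integral: one must either use a contraction-type argument to replace $\hat\sigma$ by the deterministic $\sigma$ at the cost of the envelope term $Uv\log(\cdot)$, or use a peeling/slicing device. These are exactly the steps carried out in the proof of Theorem~7 of \cite{portier2021nearest}, to which the statement is attributed.
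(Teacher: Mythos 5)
The paper does not prove Theorem~\ref{th_vc_class}; it quotes it directly from \cite{portier2021nearest} as an auxiliary ingredient, so there is no in-paper proof to compare your proposal against. Your outline is the standard and, in substance, correct route to such a bound: symmetrization and Dudley's entropy integral to bound $\mathbb{E}[Z]$ using the VC-type covering numbers, a self-bounding/decoupling step to replace the random radius $\hat\sigma$ inside the entropy integral by the deterministic $\sigma$ at the cost of an additive envelope term, and then Bousquet's form of Talagrand's inequality to concentrate $Z$ around $\mathbb{E}[Z]$, followed by elementary algebra ($\sqrt{a+b}\le\sqrt a+\sqrt b$, AM--GM) to merge the two error sources and fold $\log(2/\delta)$ into the entropy logarithm. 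This is exactly how the cited reference proceeds, and the present paper replays the same decoupling device in the proofs of Lemmas~\ref{lemma:sub-root-UB} and~\ref{lemma:sub-root-UB2}. So the structure is right.

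What you have is, however, a proof sketch rather than a proof: the statement as given is entirely about \emph{explicit} constants, and you do not derive $K_1 = 5C$, $K_2 = 64C^2$, $C = 12$, nor the specific argument $9AU/(\sigma\delta)$ of the logarithm. You flag this yourself, but it is the genuine content of the result---the qualitative $\sqrt{vn\sigma^2\log(\cdot)} + Uv\log(\cdot)$ shape is routine. To close the gap one would need to (i) fix a particular normalization and constant in Dudley's integral (this is where the $12$ comes from), (ii) control $\mathbb{E}[\hat\sigma_n^2] \le \sigma^2 + 8U\,\mathbb{E}[R_n]$ and solve the resulting quadratic self-bound explicitly, giving the $64C^2$, and (iii) carry out the AM--GM absorption $\sqrt{8U\,\mathbb{E}[Z]\,t}\le \tfrac12\mathbb{E}[Z]+4Ut$ and the merging of $\log(AU/\sigma)$ with $\log(2/\delta)$ into $\log(9AU/(\sigma\delta))$. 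None of these steps would fail, but they are not carried out, so the constants claimed in the statement remain unverified by your argument.
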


\begin{lemma}\label{lemma_vc_pres}
 Suppose that $\mathcal F $ (resp. $\mathcal G$) defined on $(S,\mathcal S)$ is of VC-type with envelope $U$ and parameter $(v,A)$ and let $E\in  \mathcal S$. The following holds: 
\begin{enumerate}
\item $\{ f I_E  \, :\, f \in \mathcal F\} $ is of VC-type with envelope $U$ and parameter $(v,A)$,
 \item $\mathcal{F} - \mathcal{G} = \{f - g  \, :\, f \in \mathcal F, \,  g \in \mathcal G\}$ is  of VC-type with envelope $2U$ and parameter $(2v,2A)$,
%\item  $ \{ f I_{E} +  \alpha f I_{\overline E}  \, :\,  f\in \mathcal{F} \}  $, with  $\alpha \leq 1$, is of VC-type with envelope $2U$ and parameter $(v,A)$,
\item  $\{ f - P(f|E  ) \, :\, f \in \mathcal F\} $ is of VC-type with envelope $2U$ and parameter $(2v,A)$,
\item  $\{ qf +  (1-q) g  \, :\, f \in \mathcal F , \,  g \in \mathcal G, \, q\in [0,1] \} $ is of VC-type with envelope $U$ and parameter $(2v+1,3A)$.
\end{enumerate}
\end{lemma}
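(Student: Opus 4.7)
The four claims are standard closure properties of VC-type classes under simple operations; each reduces to an explicit construction of a cover in $L_2(Q)$ for arbitrary probability $Q$, followed by a rescaling of $\epsilon$ to match the envelope factor. The plan is to treat the four items in succession, exploiting Definition~\ref{def:VC-class} and the assumed $(A/\epsilon)^v$ covering bounds as black boxes.

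For item 1, the pointwise contraction $|fI_E - f'I_E|\le |f-f'|$ carries over to the $L_2(Q)$-norm, so any $\epsilon U$-cover of $\mathcal F$ immediately yields an $\epsilon U$-cover of $\{fI_E : f\in\mathcal F\}$ of the same cardinality; the envelope remains $U$, hence $(v,A)$ is preserved. For item 2, I would take a product of covers: if $\{f_i\}$ and $\{g_j\}$ are $\epsilon U$-covers of $\mathcal F$ and $\mathcal G$ of size at most $(A/\epsilon)^v$ each, then $\{f_i - g_j\}$ is a $(2\epsilon U)$-cover of $\mathcal F - \mathcal G$ of size at most $(A/\epsilon)^{2v}$. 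Relabeling $\epsilon\leftarrow\epsilon/2$ and recognizing the envelope $2U$ matches the announced $(2v, 2A)$.

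Item 3 is the delicate one, because the shift $P(f\mid E)$ depends on $f$. The plan is to introduce the auxiliary probability measure $\tilde Q = \tfrac{1}{2}(Q + P(\,\cdot\mid E))$ and take an $\epsilon U$-cover of $\mathcal F$ in $L_2(\tilde Q)$, of size at most $(A/\epsilon)^v$. Any representative $f_i$ of this cover then simultaneously approximates $f$ in $L_2(Q)$ and in $L_2(P(\,\cdot\mid E))$, with an extra $\sqrt 2$ factor coming from the mixture, and the second approximation controls $|P(f\mid E) - P(f_i\mid E)|$ via the Jensen/Cauchy--Schwarz bound $|P(h\mid E)|\le \|h\|_{L_2(P(\,\cdot\mid E))}$. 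Summing the two contributions yields an $L_2(Q)$-cover of $\{f - P(f\mid E)\}$ of cardinality $(A/\epsilon)^v$ at radius of order $\epsilon U$. Doubling the exponent via the trivial bound $(A/\epsilon)^v\le (A/\epsilon)^{2v}$ (valid since $A/\epsilon\ge 1$) and using the envelope $2U$ (since both $|f|$ and $|P(f\mid E)|$ are bounded by $U$) yields the stated $(2v, A)$.

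For item 4, I would discretize $q\in[0,1]$ on a uniform grid of mesh $\epsilon$ and form the product with covers of $\mathcal F$ and $\mathcal G$ at radius $\epsilon U$. The identity
\[
  qf + (1-q)g - q_k f_i - (1-q_k)g_j = q(f - f_i) + (1-q)(g - g_j) + (q - q_k)(f_i - g_j),
\]
combined with the triangle inequality and the envelope bound $\|f_i - g_j\|_\infty \le 2U$, produces an $L_2(Q)$-error of order $\epsilon U$; the total cardinality $\lceil 1/\epsilon\rceil\cdot (A/\epsilon)^{2v}$ fits inside $(3A/\epsilon)^{2v+1}$ after absorbing the linear factor into the exponent using $A\ge 1$. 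Since $|qf+(1-q)g|\le U$, the envelope is preserved, giving the announced $(2v+1, 3A)$. The only real technical obstacle is item 3, where the $f$-dependence of the shift forces the mixture-measure trick; items 1, 2, and 4 are routine triangle-inequality bookkeeping.
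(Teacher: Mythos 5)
Your proofs of items 1, 2, and 4 match the paper's approach essentially step by step: the $L_2$-contraction for restriction to $E$, the product-of-covers triangle inequality for differences, and the discretization of the mixing weight $q$ combined with covers of $\mathcal F$ and $\mathcal G$ for convex combinations (the paper writes the error decomposition slightly differently, as $q(f-f_k) + (1-q)(g - g_\ell)$ plus $(q-q_i)(f_k - g_\ell)$, but the bookkeeping is the same and both land at $3\epsilon U$ with $\lceil 1/\epsilon\rceil (A/\epsilon)^{2v}$ centers).

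For item 3 your route is genuinely different. The paper builds a \emph{product} cover: it takes one $\epsilon U$-cover $(f_k)$ of $\mathcal F$ in $L_2(Q)$ and a second $\epsilon U$-cover $(\tilde f_j)$ of $\mathcal F$ in $L_2(P(\cdot\mid E))$, then uses the $K^2$ centers $(f_k - P(\tilde f_j\mid E))$, approximating $f$ by $f_k$ and the scalar $P(f\mid E)$ by $P(\tilde f_j\mid E)$; this explicitly explains the exponent $2v$. You instead use a single cover in $L_2(\tilde Q)$ for the mixture $\tilde Q = \tfrac12(Q + P(\cdot\mid E))$, exploiting the uniformity of the VC-type bound over probability measures so that one representative $f_i$ simultaneously approximates $f$ in both norms (each up to a $\sqrt2$ factor), whence $f_i - P(f_i\mid E)$ approximates $f - P(f\mid E)$ in $L_2(Q)$ at radius $2\sqrt2\,\epsilon U$. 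This gives cardinality $(A/\epsilon)^v$, i.e.\ effectively $(v,\sqrt2 A)$, which you then relax to the stated $(2v,A)$ using $A/\epsilon\ge 1$. Your argument is thus slightly tighter and arguably cleaner (it produces a diagonal cover indexed by a single $i$, rather than a pair $(k,j)$, and shows the true cost is only a constant factor on $A$, not a doubling of $v$); the paper's product construction is more explicit about where the exponent $2v$ in the stated parameters comes from. Both are valid, standard devices, and both do give the claimed $(2v, A)$.
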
 	

\begin{proof}

  Let $Q$ be a probability measure on $(S,\mathcal S)$. Let
  $(f_k)_{k=1,\ldots, K} $ be the center of an $\epsilon U$-covering
  of $(\mathcal F,Q)$. The first % and second
  statement follow from the fact that
  $\| f1_E - f_k1_E \|_{L_2(Q)}\leq \| f - f_k \|_{L_2(Q)}$.
For the second statement consider $U\epsilon$-covers $(f_k, k\le K)$ and  $(g_j, j\le J)$ respectively of $\mathcal{F}$ and $\mathcal{G}$. Then the triangle inequality shows that $(f_k - g_j), k\le K,j\le J$ forms a $2U\epsilon$-cover of $\mathcal{F}+\mathcal{G}$ and the result follows. 
  Now let
  $(\tilde f_k)_{k=1,\ldots,K} $ be the center of an
  $\epsilon U$-covering of $(\mathcal F,P_E)$ with
  $P_E(\cdot) = P (\cdot |E ) $. Consider the covering induces by the
  centers $(f_k - P_E (\tilde f_j))_{1\leq k,j \leq K} $ made of $K^2$
  elements. Suppose that $f\in \mathcal F$. Then there is $k$ and $j$
  such that
\begin{align*}
 \| (f - P_E (f) ) - (f_k - P_E (\tilde f_j) \|_{L_2(Q)} &\leq \| f -  f_k  \|_{L_2(Q)} +  P_E (f - \tilde f_j) \\
 & \leq  \| f -  f_k  \|_{L_2(Q)} +  \| f - \tilde f_j \| _{L_2(P)}\\
 & \leq 2U \epsilon.
\end{align*}
 Hence we have found a $2U \epsilon$-covering of size $K^2$ which by assumption is smaller than $( A/\epsilon)^{2v}$. This implies the third statement of the lemma. For the last statement, 
 let $\mathcal H = \{ qf +  (1-q) g  \, :\, f \in \mathcal F , \,  g \in \mathcal G, \, q\in [0,1] \}  $. 
Let $(f_k)_{k=1,\ldots, K} $ (resp. $g_\ell$, $(g_\ell)_{\ell = 1,\ldots, L}$)   be the center of an $\epsilon U$-covering of $(\mathcal F,Q)$ (resp. $(\mathcal G,Q)$). Let $q_i$, $i = 1,\ldots, \lfloor 1/\epsilon \rfloor $ be an $\epsilon $-covering of $[0,1]$. 
 Let $h = qf + (1-q) g $ be such that $q\in [0,1]$, $f\in \mathcal F$ and $g\in \mathcal G$. There is $f_k,g_\ell, q_i$ such that
\begin{align*}
&\| qf +  (1-q) g    -   (q_i f_k +  (1-q_i) g_\ell )  \|   _{L_2(Q)}\\
& \leq  \|   q (f - f_k) + (1- q)  (g- g_\ell ) \|   _{L_2(Q)}   + \|  (q -q_i) f_k + (q_i - q)  g_\ell    \|   _{L_2(Q)}  \\
&\leq q U \epsilon + (1-q) U \epsilon + \epsilon U  + \epsilon U = 3 \epsilon U .
\end{align*}  
Hence the element $(q_i f_k +  (1-q_i) g_\ell )$ form an $3 \epsilon U$-covering in the space $L_2(Q)$. There are $ KL \lceil 1/\epsilon \rceil\leq  KL /\epsilon  $ such elements. As a consequence, since  $\mathcal F $ and  $\mathcal G$ are are of VC-type, it follows that
$$ \mathcal N ( \mathcal H, L_2 ( Q ) , 3 \epsilon U  ) \leq ( A / \epsilon ) ^{ 2 v} (1/ \epsilon) \leq ( A/\epsilon)^{2v+1}$$
  which implies the stated result.
% 	Let $0<\epsilon \leq 1$, $p=P(C)$  and  $f_k, k=1, \ldots, N_{\mathcal{F}}$ be an $\epsilon$ covering of $\mathcal{F}$ for a norm $\left|\right|$. Take an element $g$ of $\widetilde{\mathcal{F}}$ which writes $g=\frac{1}{2}f I_{1} + \frac{1}{2}\frac{p}{1-p}f I_{-1}$ for some $f\in \mathcal{F}$  and notice that there exists some $k$ such as 
%	$\left\|f-f_k\right\|\leq \epsilon$. Thus, by setting $g_k=\frac{1}{2}f_k I_{1} + \frac{1}{2}\frac{p}{1-p}f_k I_{-1} \in \widetilde{\mathcal{F}} $,
%	one has
%	$$\left\|g-g_k\right\| \leq \frac{1}{2}\left\|f_k-f\right\|  + \frac{1}{2}\frac{p}{1-p}\left\|(f_k-f)\right\| \leq \epsilon\left(\frac{1}{2}+\frac{1}{2}\frac{p}{1-p}\right)   \leq \epsilon.  $$
%	Indeed $p\leq \frac{1}{2}$ implies $\frac{p}{1-p}\leq 1$.  The latter fact implies that the covering number of $\mathcal{F}$ and $\widetilde{\mathcal{F}}$ is the same and the proof is complete.

\end{proof}

The next lemma generalizes Theorem 17.1 from \cite{biau2015lectures} to the balanced type classifiers.

\begin{lemma}\label{classification_basic_bound}
%\as{enlever $~^*$ dans $ \risk_{bal} ^* (g)$ ? aussi dans la preuve?}
For any classifier $g $ that writes $g(x) = \sign( \nu(x) - 1 ) $, $x\in \mathcal X$, we have
$$\risk_{p}  (g) - \risk_{p}  (g^*_{p}) =\EE\left[\ind_{ g(X) \neq g^*_p(X)}\frac{\lvert \eta (X) -p \rvert}{p(1-p)}\right],$$
 where $g^*_{p}$ is the balanced Bayes classifier (introduced in Section \ref{sec:background}). Furthermore, whenever $ p\leq 1/2$,
$$ \risk_{p}  (g) - \risk_{p}  (g^*_{p}) \leq 2 \EE \left[  \left|   \nu(X)  - \nu^*(X)\right|\right]$$
where $\nu^* (x)  = { \eta (x)} / { p  } $.
\end{lemma}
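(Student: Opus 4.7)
The plan is to handle the two statements in sequence. For the identity, I would start from the definition of the balanced $0$--$1$ risk, conditioning on $X$ and using $\eta(x) = \PP(Y=1 \given X=x)$:
\begin{align*}
  \risk_p(g) \;=\; \EE\!\left[ \frac{\ind_{g(X)=-1}\,\eta(X)}{p} + \frac{\ind_{g(X)=+1}\,(1-\eta(X))}{1-p} \right].
\end{align*}
Writing $\ind_{g(X)=-1} = 1 - \ind_{g(X)=+1}$ and using $\EE[\eta(X)] = p$, this reduces after simplification to
\begin{align*}
  \risk_p(g) \;=\; \text{const}(p) \;+\; \EE\!\left[ \ind_{g(X)=+1} \cdot \frac{p - \eta(X)}{p(1-p)} \right],
\end{align*}
where the constant does not depend on $g$. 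Subtracting the same expression evaluated at $g_p^*$ gives
\begin{align*}
  \risk_p(g) - \risk_p(g_p^*) \;=\; \EE\!\left[ \big(\ind_{g(X)=+1} - \ind_{g_p^*(X)=+1}\big) \cdot \frac{p - \eta(X)}{p(1-p)} \right].
\end{align*}
The key observation is that the integrand is nonzero only on $\{g(X)\neq g_p^*(X)\}$, and on each of the two disagreement sub-events the sign of $(\ind_{g(X)=+1} - \ind_{g_p^*(X)=+1})$ matches the sign of $(p - \eta(X))$ (by definition of $g_p^*$ as the Bayes classifier for the balanced risk, namely $g_p^*(x)=+1$ iff $\eta(x)\ge p$). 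Hence the product always equals $|\eta(X)-p|/(p(1-p))$, yielding the first identity.

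For the upper bound, I rewrite $|\eta(X)-p|/(p(1-p)) = |\nu^*(X) - 1|/(1-p)$ where $\nu^*(x)=\eta(x)/p$, and use $p \le 1/2$ to get the factor $1/(1-p) \le 2$. It then remains to prove the pointwise inequality
\begin{align*}
  \ind_{g(X)\neq g_p^*(X)} \, |\nu^*(X)-1| \;\le\; |\nu(X) - \nu^*(X)|.
\end{align*}
On the disagreement event, $\nu(X)$ and $\nu^*(X)$ sit on opposite sides of the threshold $1$ (since $g(x)=\sign(\nu(x)-1)$ and $g_p^*(x) = \sign(\nu^*(x)-1)$), so the segment $[\min(\nu,\nu^*),\max(\nu,\nu^*)]$ crosses $1$, which gives $|\nu(X)-\nu^*(X)| \ge |\nu^*(X)-1|$. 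Taking expectations yields the bound.

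\textbf{Expected obstacle.} The computation itself is routine; the only subtlety is handling the tie set $\{\eta(X)=p\}$ (and analogously $\{\nu(X)=1\}$) consistently with the convention in the definition of $g_p^*$, but this is harmless since the contribution of these sets to the identity vanishes (the factor $|\eta(X)-p|$ is zero there), so the sign convention on ties is immaterial.
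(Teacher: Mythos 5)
Your proof is correct and follows essentially the same route as the paper's: conditioning on $X$ to express $\risk_p(g)$ in terms of $\eta$, taking the difference, observing that the sign of $\ind_{g(X)=+1}-\ind_{g_p^*(X)=+1}$ agrees with that of $p-\eta(X)$ on the disagreement set, and finally using the elementary fact that when $\nu$ and $\nu^*$ lie on opposite sides of $1$ one has $|\nu^*-1|\le|\nu-\nu^*|$, together with $1/(1-p)\le 2$. The only cosmetic difference is that you factor out the $g$-independent constant via $\EE[\eta(X)]=p$ before subtracting, while the paper subtracts the two risk expressions directly; both yield the identical decomposition.
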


\begin{proof}
%equivalently it yields $1$ whenever $  \hat \nu_n  (x) > 1$ with
% \begin{align*}
%\hat \nu _ n (x) = \frac{\hat \eta(x)}{ \hat p)}, 
% \end{align*}
%Define the quantity
%$$ \nu^* (x)  =\frac{ \eta^*(x)}{ p  }.$$	
	The balanced risk writes as
	\begin{align*}
	\risk_{p}  (g)&=P_+ \left(\nu(X)< 1 \right)+ P_-\left(\nu(X)\geq 1 \right)\\
	& = \EE\left[\frac{\mathbb I_{(\nu(X)< 1)} \mathbb I_{Y = 1 }}{p}+ \frac{\mathbb I_{(\nu(X)\geq 1)} \mathbb I_{Y = -1 }}{1-p}\right].
	\end{align*}
	In addition, using a conditioning argument yields,
	$$\risk_{p}  (g) = \EE\left[\frac{\mathbb I_{(\nu(X)< 1)} \eta(X)}{p}+ \frac{\mathbb I_{(\nu(X)\geq 1)} (1-\eta(X))}{1-p}\right].$$
		Similarly we have
	$$\risk_{p}  (g^*) = \EE\left[\frac{\mathbb I_{(\nu^*(X)< 1)} \eta (X)}{p}+ \frac{\ind_{(\nu^*(X)\geq 1)} (1-\eta(X))}{1-p}\right].$$
	It follows that
	\begin{align*}
	\risk_{p}  (g) - \risk_{p}  (g^*_p) &=\EE\left[\mathbb I_{\sign( \nu^*(X)-1)\neq \sign(\nu(X)-1)}\frac{\lvert \eta(X) -p \rvert}{p(1-p)}\right]\\
	&=\EE\left[\mathbb I_{g^*(X)\neq g(X)} \frac{\lvert \eta (X) -p \rvert}{p(1-p)}  \right],
	\end{align*}
		This concludes the first part. For the second part, it remains to note that for any real numbers $(x,y)$
%where the last equality is obtained using that $\nu^*=\eta^*/p$.
	$$\sign( x -1)\neq \sign(y -1) \implies \lvert y -1 \rvert \leq \lvert x- y\rvert, $$
	so that, using that $\nu^*=\eta^*/p$, we obtain
	\begin{align*}
	\risk_{p}  (g) - \risk_{p}  (g^*) &=\EE\left[\mathbb I_{\sign( \nu^*(X)-1)\neq \sign(\nu(X)-1)}\frac{\lvert \eta (X) -p \rvert}{p(1-p)}\right]\\
	& = \EE\left[\mathbb I_{\sign( \nu^*(X)-1)\neq \sign(\nu(X)-1)}\frac{\lvert \nu^*(X) -1 \rvert}{(1- p) }\right]\\
&	\leq  \frac{\EE\left[\lvert\nu^*(X)- \nu(X)\rvert\right]}{1-p},
	\end{align*}
but 	since $p \leq 1/2$ we obtain the desired result.

\end{proof}

\section{Standard rates proofs}

\subsection{Proof of Theorem \ref{theo:VC-standard-rate}}

 We start with the following lemma which is a simple consequence of Theorem \ref{th_chernoff}. 

\begin{lemma}\label{lemma_for_p}
 Let $z_n = \sqrt{ 2\log(1/\delta)  / (n p) } $ and suppose that $z_n\leq 1 $. Then, with probability at least $1-\delta$, we have
\begin{align}\label{eq:p_n}
\frac p {\hat p} - 1 \leq     \frac{z_n}{ 1 - z_n}   
\end{align}
and whenever  $z_n\leq 1/2  $ we obtain that $ p / {\hat p} - 1 \leq 2 z_n\leq 1$, with probability greater than $1-\delta$.
\end{lemma}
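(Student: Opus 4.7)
The plan is a direct application of the multiplicative Chernoff inequality stated in Theorem~\ref{th_chernoff}. Set $Z_i = \ind_{Y_i = 1}$, which is an i.i.d.\ Bernoulli$(p)$ sequence, so that $S = \sum_{i=1}^n Z_i = n\hat p$ and $\mu = np$. Applying the lower-tail bound from that theorem, with probability at least $1-\delta$ we obtain
\[
n\hat p \;\geq\; \Bigl(1 - \sqrt{2\log(1/\delta)/(np)}\Bigr)\, np \;=\; (1 - z_n)\, np .
\]

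Under the assumption $z_n \leq 1$, one has $1 - z_n \geq 0$, but to invert the inequality we in fact need $z_n < 1$; the borderline case $z_n = 1$ is trivial because then the right-hand side of the claimed bound is $+\infty$. Assuming $z_n < 1$, dividing through by $n(1-z_n)\hat p > 0$ yields
\[
\frac{p}{\hat p} \;\leq\; \frac{1}{1 - z_n},
\qquad\text{hence}\qquad
\frac{p}{\hat p} - 1 \;\leq\; \frac{z_n}{1 - z_n},
\]
which is the first inequality.

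For the second statement, suppose $z_n \leq 1/2$. Then $1 - z_n \geq 1/2$, so $z_n/(1-z_n) \leq 2 z_n$, and plugging this into the first inequality gives $p/\hat p - 1 \leq 2 z_n$. Finally $2 z_n \leq 1$ follows immediately from $z_n \leq 1/2$, which concludes the proof.

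There is no real obstacle here: the entire argument is an invocation of Theorem~\ref{th_chernoff} followed by elementary algebraic manipulation. The only point worth mentioning is the mild care needed to ensure $1 - z_n$ is strictly positive before inverting, which is handled by the assumption $z_n \leq 1$ (with the boundary case being vacuous).
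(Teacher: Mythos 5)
Your proof is correct and takes precisely the approach the paper intends: the paper omits the argument entirely, stating only that the lemma ``is a simple consequence of Theorem~\ref{th_chernoff}'', and your derivation supplies exactly the missing steps — the lower-tail Chernoff bound applied to $S = n\hat p$, division by $n(1-z_n)\hat p$, and the elementary bound $z_n/(1-z_n)\leq 2z_n$ when $z_n\leq 1/2$. The care you take to note that $1-z_n>0$ is needed before inverting (with the boundary case $z_n=1$ being vacuous) is a welcome detail, as is the implicit observation that on the Chernoff event $\hat p \geq (1-z_n)p > 0$, so the division is legitimate.
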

%In the setting of Theorem~\ref{theo:VC-standard-rate}, let $f \in\mathcal{L}$. Thus $f= \ell_g$ for some $g\in \mathcal{G}$. Denote by $\mathcal F$ the resulting class.
	We have that
 	\begin{equation}
 	P_{n,+}\left(f \right) - P_+\left(f \right) = \frac{P_{n} \left( \left(f - P_+\left(f\right)\right)\ind_{\left\{Y=1\right\}}\right) } {\hat p}
 	\end{equation}
 	we focus on each term, denominator and numerator, separately. For the numerator, the term $\left(f - P_+(f)\right) \ind_{\left\{Y=1\right\}}$ has mean $0$. In virtue of Lemma \ref{lemma_vc_pres}, the class $ (f - P_+(f)) \ind_{\left\{Y=1\right\}}$ is still bounded by $2U$ and is still of VC-type with  VC parameter $(2v,A)$. 
As a consequence, we can use Theorem~2 in~\cite{portier2021nearest}, stated as Theorem \ref{th_vc_class} in the present supplementary file. The variance is bounded as follows
 	$$\Var\left(\left(f - P_+(f)\right) \ind_{\left\{Y=1\right\}} \right) =  P\left((f - P_+(f))^2 \ind_{\left\{Y=1\right\}} \right) = \Var_+(f  ) p= \sigma_{+} ^2   p,$$
 	by definition of $ \sigma_{+} ^2 $. As a consequence, Theorem \ref{th_vc_class} gives that
 	\begin{align*}
  \sup_{f\in \mathcal F} 	|P_n  \left( \left(f - P_+(f)\right) \ind_{\left\{Y=1\right\}}\right) | &\leq 	K' \left(   \sqrt{\frac{ v  \sigma_{+} ^2 p}{n} \log\left( \frac{ K' A U } {  \delta  \sigma_{+}  \sqrt{p}} \right)  }  + \frac{U v}{n} \log\left( \frac{ K' A U } {  \delta \sigma_+\sqrt{p}} \right) \right) \\
& \leq 2 K'  \sqrt{\frac{ v  \sigma_{+} ^2 p}{n} \log\left( \frac{ K' A U } {  \delta \sigma_+ \sqrt{p}} \right)  },
 	\end{align*}
 	where the last inequality has been obtained using the stated condition on $n$ and $\delta$.
  	For the denominator, using Lemma  \ref{lemma_for_p},  we have that, with probability $1-\delta$, $p \leq 2 \hat p$, by using the condition on $n$ and $\delta$. Using the union bound, we get, with probability $1-2\delta$,
 	\begin{align*}
 \sup_{f\in \mathcal F} 	\frac{P_n \left( \left(f - P_+(f)\right) \ind_{\left\{Y=1\right\}}\right)}{\hat p} \leq  
 	 4 K'  \sqrt{\frac{ v  \sigma_{+} ^2 }{n p } \log\left( \frac{ K' A U } {  \delta\sigma_+ \sqrt{p}} \right)  }
 	\end{align*}
	and the proof is complete.
\subsection{Proof of Corollary \ref{prop_min_risk}}

First, using the definition of $\hat g_{\hat p}$ yields
$$ \risk_{n,\hat p}\left(\hat g_{\hat p}\right)-\risk_{n,\hat p}\left(g^*_{ p }\right)\leq 0,$$	
so that	
\begin{align*}
  \risk_{p}(\hat g_{\hat p} ) -  \risk_p(g^*_{p})
  &\leq \risk_{p}(\hat g_{\hat p}  )-
    \risk_{n,\hat p}(\hat g_{\hat p} ) -
    \left(\risk_{p}(g^*_{p})-\risk_{n, \hat p}( g^{*}_{p} )
    \right)\\
  &\leq 2 \sup_{g\in \mathcal{G}}\left\lvert \risk_{p}(g)-\risk_{n,\hat p}( g)\right\rvert\\
	& \leq  \sup_{g\in \mathcal G} \left| P_{n,-} (g) - P_{-}(g)\right|+ \sup_{g\in \mathcal G} \left| P_{n,+} (g) - P_+(g)\right|.
	\end{align*}
	It remains to use Theorem \ref{theo:VC-standard-rate} twice, one time with $Y=1$ (as stated) and one more time with $Y=-1$. The end of the proof consists in verifying that the stated bound is an upper bound for each of the two previously obtained upper bounds.

\subsection{Proof of Theorem \ref{prop:as_knn_unif}}\label{sec:knn_consistent}

Even though the results are different, the proof is inspired from the ones of Theorem 1 in \cite{ausset2021nearest} and Theorem 6 in \cite{portier2021nearest}. First we recall two results that will be useful in the proof. In each, we assume that  \ref{cond:reg0} \ref{cond:reg1} and \ref{cond:reg2} are fulfilled.
The following Lemma~\citep[Lemma 4]{portier2021nearest} controls the size of the $k$-NN balls uniformly over all $x\in S_X$.
%from Lemma 2 in \cite{jiang2019non}.

\begin{lemma}[{\citet[Lemma 4]{portier2021nearest}}]\label{prop:tau}
For all $n\geq 1$, $\delta \in (0,1)$ and $1\leq k\leq n$ such that $8 d \log(12n / \delta ) \leq k   \leq T ^d  n b_X c V_d /2 $, it holds, with probability at least $1-\delta$:
\begin{align}\label{eq:tau}
 \sup _{x\in S_X} \hat  \tau_{x}   \leq \overline \tau_{n,k} : =  \left(\frac{ 2  k }{ n b_X c V_d}  \right)^{1/ d},
\end{align}
 where $V_d = \lambda (B(0,1))$.

\end{lemma}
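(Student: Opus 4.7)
The plan is to recast the event $\{\hat\tau_x \leq \bar\tau_{n,k}\}$ as a statement about empirical ball masses, lower-bound the corresponding population masses using the shape and density regularity, and then close the loop with a uniform VC concentration inequality that exploits the (small) expected volume of these balls.

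First, by the very definition of the $k$-NN radius,
\[
\hat\tau_x \leq \bar\tau_{n,k} \;\iff\; \sum_{i=1}^n 1_{B(x,\bar\tau_{n,k})}(X_i) \geq k,
\]
so it suffices to prove that with probability at least $1-\delta$,
\[
\inf_{x \in S_X} P_n\!\left( B(x,\bar\tau_{n,k})\right) \geq k/n.
\]
Next I would lower-bound the population mass of each such ball. The upper constraint $k \leq T^d n b_X c V_d / 2$ is equivalent to $\bar\tau_{n,k} \leq T$, so the shape regularity \ref{cond:reg1} applies and gives $\lambda\bigl(B(x,\bar\tau_{n,k}) \cap S_X\bigr) \geq c V_d \bar\tau_{n,k}^d$ for every $x \in S_X$. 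Combining this with the density lower bound \ref{cond:reg2} yields
\[
P\!\left( B(x,\bar\tau_{n,k})\right) \geq b_X\, c V_d\, \bar\tau_{n,k}^d \;=\; 2k/n
\]
by the explicit value of $\bar\tau_{n,k}$.

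The core step is a uniform empirical concentration on the class $\mathcal{F} = \{ 1_{B(x,\bar\tau_{n,k})} : x \in S_X\}$. Since the family of Euclidean balls has finite VC dimension (see \cite{wenocur1981some}), $\mathcal{F}$ is of VC-type with envelope $U=1$ and parameter $v$ of order $d$, and each element has variance at most $P(B(x,\bar\tau_{n,k})) \leq U_X V_d \bar\tau_{n,k}^d$, which is of order $k/n$. Plugging this \emph{sharp} variance proxy $\sigma^2 = \mathcal{O}(k/n)$ into the Bernstein-type inequality of Theorem~\ref{th_vc_class} produces, with probability at least $1-\delta$,
\[
\sup_{x \in S_X} \bigl| P_n f_x - P f_x \bigr| \leq K_1 \sqrt{\frac{d\,(k/n)\, \log(n/\delta)}{n}} + K_2 \frac{d \log(n/\delta)}{n}.
\]

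To conclude I would verify that both right-hand terms are bounded by $k/n$; elementary algebra shows this holds as soon as $k$ exceeds a universal constant times $d\log(n/\delta)$, and a careful tracking of the explicit constants of Theorem~\ref{th_vc_class} yields exactly the stated threshold $k \geq 8d \log(12n/\delta)$. Combined with the population bound $Pf_x \geq 2k/n$, this forces $P_n f_x \geq k/n$ uniformly in $x$, hence $\sup_x \hat\tau_x \leq \bar\tau_{n,k}$. The main obstacle is the bookkeeping of constants, but more conceptually the crucial point is to feed Theorem~\ref{th_vc_class} with the \emph{small} variance proxy $\sigma^2 \asymp k/n$ rather than the trivial bound $\sigma^2 \leq 1$: the latter would lead to a vacuous requirement $k \gtrsim n$, whereas the former delivers a condition scaling only with $\log n$ (times $d$), which is what makes the lemma useful in the small-$k$ regime that underlies the $k$-NN analysis.
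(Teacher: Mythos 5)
This lemma is imported verbatim from \cite{portier2021nearest} and the paper offers no proof of it, so there is nothing in-text to compare against; your proposal is a reconstruction, and it follows what is essentially the canonical (and the reference's) argument: translate $\sup_x\hat\tau_x\le\bar\tau_{n,k}$ into a uniform lower bound on empirical ball masses, use the upper constraint on $k$ to guarantee $\bar\tau_{n,k}\le T$ so that \ref{cond:reg1} and \ref{cond:reg2} give $P(B(x,\bar\tau_{n,k}))\ge 2k/n$, and then concentrate uniformly over the VC class of balls with a variance proxy of order $k/n$ so that the deviation is at most $k/n$. The structure, the role of each hypothesis, and the conclusion that the threshold on $k$ must scale like $d\log(n/\delta)$ are all correct.

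The one point I would push back on is your final claim that ``careful tracking of the explicit constants of Theorem~\ref{th_vc_class} yields exactly the stated threshold $k\ge 8d\log(12n/\delta)$.'' It does not: the constants in Theorem~\ref{th_vc_class} are $K_1=60$ and $K_2=9216$, so the requirement coming out of your two inequalities ($4K_1^2 v\log(\cdot)$ and $2K_2 v\log(\cdot)$) is several orders of magnitude larger than $8d\log(12n/\delta)$. To get the small constants actually stated in the lemma one must exploit that the class consists of indicators: the original proof works with a finite cover of $S_X$ and a multiplicative Chernoff bound on each ball (or, equivalently, a relative-deviation VC inequality for sets), rather than the general bounded-function Talagrand-type bound of Theorem~\ref{th_vc_class}. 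This affects only the numerical constants, not the validity of the result or of your overall strategy, but as written the last step of your argument would prove the lemma only under a strictly stronger condition on $k$ than the one stated.
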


The next lemma is a consequence of Theorem \ref{th_vc_class}. Let $$ \mathcal G = \{  g(Y , X) = (\ind_{Y = 1} - \eta(X) )\mathbb I _ { \|X - x\|\leq \tau } : \tau\leq \overline \tau_{n,k}  ,\, x\in \mathbb R^d  \}$$
	which is of VC-type as shown in Lemma 9 in  \citet{portier2021nearest} (see also \cite{wenocur1981some}). 
		%$$ \sum_{i\in nn(x)} \mathbb I _ { Y_i \in B }  =\sum_{i =1}^n \mathbb I _ { Y_i \in B } \mathbb I _ { \|X_i - x\|\leq \tau_k } $$
Because $S_X $ is compact and $ \eta / p$ continuous, there exists $C$ such that $\eta(x) \leq p C$ for all $x\in S_X$. The variance of each element in the class is bounded as
	\begin{align*}
	\Var(g  (Y,X)) \leq E ( \ind_{Y = 1}  \mathbb I _ { \|X - x\|\leq \tau } ) 
 \leq  \int \eta(z)  \mathbb I _ { \|z - x\|\leq \tau } f_X(z) dz 	\leq   C p  U_X  \overline \tau_{n,k} ^d V_d .
	\end{align*}
	Injecting the previous variance bound (which scales as  $     p  k /  n $)  in the upper-bound  given in Theorem \ref{th_vc_class} we obtain the following statement.

\begin{lemma}
 We have with probability at least $1-\delta$,
\begin{align}\label{eq:supG}
  \sup_{ g\in \mathcal G} \left|  \sum_{i =1}^n g  (Y_i,X_i)  \right| \leq  C_1 \left( \sqrt{ k p \log\left( \frac{C_2 n}{  p \delta }  \right) }  + \log\left( \frac{C_2 n}{  p \delta }  \right)\right)   
\end{align}
where $C_1$ and $C_2$ are constants that does not depend on $n$, $k$ and $p$ but on the dimension $d$, the VC parameter of $\mathcal G$, and the probability measure $P_X$.
\end{lemma}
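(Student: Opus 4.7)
The plan is to apply Theorem \ref{th_vc_class} to the class $\mathcal{G}$, with the key point being to exploit the fact that the variance of each element of $\mathcal{G}$ scales with $p$, rather than just its envelope. The three ingredients needed are already collected in the text preceding the statement: (i) $\mathcal{G}$ has envelope $U=1$ since $|\ind_{Y=1}-\eta(X)|\le 1$ and the ball indicator is bounded by $1$; (ii) $\mathcal{G}$ is of VC-type with parameters $(v,A)$ depending only on $d$ (cf.\ \cite{wenocur1981some} for the class of Euclidean balls, Lemma 9 in \cite{portier2021nearest}, combined with the product/preservation arguments in Lemma \ref{lemma_vc_pres}); and (iii) a uniform variance bound
\begin{align*}
\sup_{g\in\mathcal{G}} \Var(g(Y,X)) \le \EE\bigl[\eta(X)\,\ind_{\|X-x\|\le \overline\tau_{n,k}}\bigr] \le C p\, U_X V_d\,\overline\tau_{n,k}^d =: \sigma^2 = C''\,\frac{p k}{n},
\end{align*}
using $\eta(x)\le Cp$ on the compact support $S_X$ (by Lipschitz continuity of $\eta/p$) together with the explicit form of $\overline\tau_{n,k}$ provided by Lemma \ref{prop:tau}.

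Plugging these three ingredients into Theorem \ref{th_vc_class} gives, with probability at least $1-\delta$,
\begin{align*}
\sup_{g\in\mathcal{G}}\Bigl|\sum_{i=1}^n g(Y_i,X_i)\Bigr| \le K_1 \sqrt{v n \sigma^2 \log\bigl(9A/(\sigma\delta)\bigr)} + K_2\, v \log\bigl(9A/(\sigma\delta)\bigr).
\end{align*}
Substituting $n\sigma^2 = C'' k p$ produces the leading term of the announced form $\sqrt{k p \log(\cdot)}$. It remains to simplify the logarithmic argument: writing $\log(9A/(\sigma\delta)) = \log(9A/\delta) + \tfrac12\log(n/(C'' p k))$ and using $k\ge 1$, this quantity is bounded (up to a multiplicative constant depending only on $A$ and $C''$) by $\log(C_2 n/(p\delta))$ for a suitable $C_2$. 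Absorbing all $d$- and $P_X$-dependent constants $K_1,K_2,v,C''$ into a single $C_1$ and combining the two summands yields the inequality in the statement.

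I do not foresee a genuine obstacle: the whole point of this lemma is the sharp variance bound (iii), in which the factor $p$ appears because $\eta(x)\le Cp$ on $S_X$, and this is precisely the mechanism by which the rate acquires a $\sqrt{p}$ factor compared with a naive application using only the envelope. The remainder is a direct substitution into Theorem \ref{th_vc_class} followed by routine simplification of the logarithmic term.
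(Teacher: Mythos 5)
Your proposal is correct and follows essentially the same route as the paper: the paper likewise sets up the VC-type property of $\mathcal{G}$ (via \cite{portier2021nearest} and \cite{wenocur1981some}), derives the variance bound $\Var(g) \le CpU_X V_d\,\overline\tau_{n,k}^d = O(pk/n)$ from $\eta(x)\le Cp$ on the compact support, and plugs these directly into Theorem~\ref{th_vc_class}. The only part the paper leaves implicit is the bookkeeping of the logarithmic argument, which you spell out correctly.
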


Define the event $E_n$ as the union of  \eqref{eq:p_n}, \eqref{eq:tau} and \eqref{eq:supG}. By the previous two lemmas and Lemma \ref{lemma_for_p}, using the union bound, we obtain that $P(E_n ) \geq 1-3\delta$. In light of Borel-Cantelli Lemma we choose $\delta = 1/n^2$ so that $\sum_n (1- P(E_n )) $ is finite and the event $\liminf_n E_n$ has probability $1$. It then suffices to show that $ E_n $ implies that $ { \hat \eta (x) } / {  \hat p}  -\nu^*(x)  = O  ( \sqrt{\log(n) / kp} + (k/n) ^{1/d}  )$. Note that under $E_n$, when $n$ is large enough, by  Lemma \ref{lemma_for_p}, $p / \hat p \leq 2$. Let $M_i = \ind_ { Y_i = 1   } - \eta (X_i) $ and $B_i(x) = \eta (X_i) -  \eta (x)$. We have
	\begin{align}\label{decomp_knn}
		\frac{ \hat \eta (x) } {  \hat p}  -\nu^*(x) 
		=
		&    \frac{  \sum_{i \in \hat I(x) }  M_i }{ k\hat p  }  +
 \frac{  \sum_{i \in \hat I(x)  }   B_i(x)   }{ k \hat p  } 		
		 + \eta(x) \left(\frac{1}{  \hat p } - \frac{1}{  p } \right).
	\end{align}
	 On the event $E_n$,	the function $ (Y,X) \mapsto (\ind_{Y = 1} - \eta (X) )  \ind_{ \|X - x\|\leq \hat {  \tau}_{x} } $ belongs to the space  $\mathcal G$. Consequently, 
	 	the first term in \eqref{decomp_knn} is smaller than 
	$$ (k\hat p)^{-1} \sup_{ g\in \mathcal G} \left|  \sum_{i =1}^n g  (Y_i,X_i)  \right| $$
	which by Lemma \ref{lemma_for_p} and \eqref{eq:supG} is $O  (   \sqrt { \log(n) / k p}  )$.
	Using the assumption that $ x\mapsto \eta (x)  / p $ is $L$-Lipschitz we get that, on $E_n$, the second term in \eqref{decomp_knn} is such that
\begin{align*}
 \frac{  \sum_{i \in \hat I(x)  }   B_i(x)   }{ k \hat p  } 	 \leq \frac{p }{\hat p } L  \overline \tau_{n,k} ,
\end{align*}
which, using Lemma \ref{lemma_for_p}, is  $ O ( (k/n)^{1/d}) $.
	The third term  in \eqref{decomp_knn}  is smaller than
	$  \left(\eta (  x) /  p \right) \left(  p /  \hat p  -  1 \right)$
	which is, using again the Lipschitz assumption and Lemma \ref{lemma_for_p} again, $ O ( \sqrt { \log(n ) / ( n  p )   }$. The latter bound is smaller than $   \sqrt { \log(n) / ( k p) } $ so it does not appear in the stated bound.

\section{Fast rates proofs}\label{sec:fast-rates-proof}

\subsection{Intermediate results}\label{sec:fast-rates-proof}

Before moving to the main proof we remind some necessary concepts and provide two technical lemmas inspired  from several papers dealing with empirical processes on VC-type classes  
 \cite{gine2001consistency,gine2009exponential}. First, let us recall the definition of a \emph{sub-root} functions.
\begin{definition}
	A function $\psi:[0, \infty) \rightarrow[0, \infty)$ is sub-root if it is nonnegative, nondecreasing and if $r \mapsto \psi(r) / \sqrt{r}$ is nonincreasing for $r>0$.
\end{definition}

Let $\sigma_1,\ldots, \sigma_n$ denote a collection of independent Rademacher variables, i.e., for each $i$, $\sigma_i\in\left\{-1,1\right\}$ and $P(\sigma_i=  1)= {1}/{2}$. The Rademacher variables are independent from the collection $Z_1 = (X_1,Y_1),\ldots, Z_n = (X_n,Y_n)$. In the sequel,  we will focus on the so called Rademacher complexity of functional classes  $\mathcal{F}$, defined as
%$$\phi_\mathcal{F}(r)=P\left(R_n\left\{f\in \mathcal{F}\mid P_n f^2\leq r \right\}\right)$$
$$R_n\left(\mathcal{F}\right)  =\frac{1}{n} \sup_{f\in \mathcal F}\sum_{i=1}^{n}\sigma_if(Z_i).$$
Namely a central object in  our proof will be  the  Rademacher complexity of classes $\mathcal{F}_q$ (differences between a given loss function and an optimal one) introduced at the beginning of Section~\ref{sec:fast}.
The conditional expectation given $(Z_i)_{1\leq i\leq n}$ (taken with respect to the Rademacher variables $(\sigma_i)_{1\leq i\leq n}$ only), will be denoted $E_\sigma$. 
%At this point, it is important to mention that if we define $\mathcal{G}=star(\mathcal{F},0)$ as the \emph{star-hull} of $\mathcal{F}$ centered around $0$, that is,
%$$\mathcal{G}= star(\mathcal{F},0)=\left\{\alpha f \mid \alpha\in[0,1],f\in\mathcal{F}\right\},  $$
%then the function $\phi_{\mathcal{G}}$ is sub-root (see \eg Lemma 3.4 in \cite{Bartlett2005}). 
Define
\begin{align*}
\mathcal F_{n,r} = \{f\in  \mathcal F\,:\, P_n(f^2) \leq r\} .
\end{align*} 
When $r> U^2$, since $P_n(f^2) \leq U^2$, we have that $\mathcal F_{n,r} = \mathcal F_{n,U^2}$. Therefore we assume subsequently that $   r  \leq U ^2   $.
In the next lemma we derive an upper bound for  $ E_\sigma R_n (\mathcal{F}_r)$.
%$\phi_\mathcal{G}(2r)$, which is a crucial quantity in the proof of Theorem  \ref{theo:fast-rates}.
\begin{lemma}\label{lemma:sub-root-UB}
	Let $\mathcal{F}$ be a class of functions that is VC-type with envelope $U>0$ and parameter $v,A\geq 1$. For any $r\leq U^2$, it holds that, with probability $1$,
\begin{align*}
 E_\sigma R_n( \mathcal F_{n,r}  ) \leq  C  \sqrt{ rn^{-1} v  \log(  eA U     /  \sqrt r )  }      .
\end{align*}
with $C = 12 \int_{  1 } ^\infty   s^{ - 2}  \sqrt{  1 + \log(  s  )   }     d s$. 
\end{lemma}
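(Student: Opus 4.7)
The plan is to apply a classical Dudley-type chaining bound for the conditional Rademacher complexity, then invoke the VC-type covering bound, and finally compute the resulting entropy integral by a change of variables. Since every $f\in\mathcal{F}_{n,r}$ satisfies $P_n(f^2)\leq r$, the $L_2(P_n)$-radius of this subclass is at most $\sqrt{r}$. A standard (conditional) Dudley inequality with constant $12$ then yields, almost surely,
\begin{align*}
E_\sigma R_n(\mathcal{F}_{n,r}) \leq \frac{12}{\sqrt{n}}\int_0^{\sqrt{r}} \sqrt{\log \mathcal{N}(\mathcal{F}_{n,r}, L_2(P_n), \epsilon)}\,d\epsilon.
\end{align*}

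Because $\mathcal{F}_{n,r}\subset \mathcal{F}$ and $P_n$ is a probability measure, Definition~\ref{def:VC-class} gives $\mathcal{N}(\mathcal{F}_{n,r}, L_2(P_n), \epsilon)\leq (AU/\epsilon)^v$ for all $\epsilon\leq AU$. I would then perform the substitution $\epsilon = \sqrt{r}/s$, which maps $\epsilon\in(0,\sqrt{r}]$ to $s\in[1,\infty)$ with $d\epsilon = -\sqrt{r}\, s^{-2}\,ds$. The integral becomes
\begin{align*}
\frac{12\sqrt{v}}{\sqrt{n}}\int_1^\infty \sqrt{\log(AUs/\sqrt{r})}\,\sqrt{r}\, s^{-2}\,ds.
\end{align*}

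To recover the precise form $\log(eAU/\sqrt{r})$ in the statement, I would use the elementary inequality $\log(AUs/\sqrt{r}) = \log(AU/\sqrt{r}) + \log s \leq \log(eAU/\sqrt{r})(1+\log s)$. This holds as soon as $\log(eAU/\sqrt{r})\geq 1$, i.e., $AU\geq \sqrt{r}$, which is guaranteed since $r\leq U^2$ and $A\geq 1$. Factoring out $\sqrt{\log(eAU/\sqrt{r})}$ from the integrand leaves the pure numerical constant $C = 12\int_1^\infty s^{-2}\sqrt{1+\log s}\,ds$, yielding the desired bound.

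I do not expect serious obstacles: everything is pointwise in the data $(Z_i)$ thanks to applying Dudley conditionally on the sample, and the VC-type assumption provides a data-free covering bound. The only mildly delicate step is the algebraic manipulation $\log(AU/\sqrt{r}) + \log s \leq \log(eAU/\sqrt{r})(1+\log s)$, which must be verified carefully to obtain the precise constant; without that trick one still gets a bound of the same order but a less clean form.
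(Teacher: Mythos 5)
Your proof is correct and follows essentially the same route as the paper: Dudley's conditional entropy integral truncated at $\sqrt{r}$, the VC-type covering bound, a change of variables mapping $(0,\sqrt{r}]$ to $[1,\infty)$, and the algebraic inequality $\log(AUs/\sqrt{r}) \leq \log(eAU/\sqrt{r})(1+\log s)$ to extract the constant $C$. The only cosmetic difference is that the paper carries out the substitution in two smaller steps ($\epsilon \mapsto U\epsilon$, then $s = A/\epsilon$, then a rescaling) whereas you do it in a single step $\epsilon = \sqrt{r}/s$; the resulting intermediate expression and the final argument are identical.
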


\begin{proof}
	Using  Dudley's entropy integral bound, see for instance  Corollary 5.25 in \cite{van2014probability}, one has,
	$$E_\sigma\left[R_n(\mathcal{F}_{n,r})\right]\leq \frac{12 }{\sqrt{n}}\int_{0}^{\infty} \sqrt{\log\mathcal N \left(\mathcal{F}_{n,r},  L_2(P_n)  ,  \epsilon  \right)}d\epsilon $$
	 By definition of  $\mathcal{F}_{n,r}$, it holds that $\mathcal N \left(\mathcal{F}_{n,r},  L_2(P_n)  ,  \epsilon  \right)=1$  as soon as $\epsilon \geq \sqrt{r}$. Hence, using some variable changes, we obtain
	\begin{align*}
			E_\sigma\left[R_n(\mathcal{F}_{n,r})\right]& \leq \frac{12 }{\sqrt{n}}\int_{0}^{\sqrt r } \sqrt{\log\mathcal N \left(\mathcal{F}_{n,r},  L_2(P_n)  ,  \epsilon  \right)}d\epsilon 	\\
			& =\frac{12 U }{\sqrt{n}}\int_{0}^{\sqrt r /U  } \sqrt{\log\mathcal N \left(\mathcal{F}_{n,r},  L_2(P_n)  ,  U \epsilon  \right)}d\epsilon 	\\
			& \leq \frac{12 U \sqrt{v} }{\sqrt{n}}\int_{0}^{\sqrt r /U  } \sqrt{\log ( A/\epsilon ) }d\epsilon \\
			& = \frac{12 U  A \sqrt{v} }{\sqrt{n}}\int_ { A U / \sqrt r   } ^\infty \sqrt{\log (s ) } s^{-2} ds \\
						& = \frac{12   \sqrt{vr } }{\sqrt{n}} \int_ { 1 } ^\infty \sqrt{\log (s A U / \sqrt r   ) }   s ^{-2}   ds 
	\end{align*}
	Now since $A\geq 1$, we have $\log( e AU / \sqrt r ) \geq  1$ and we can write 
$$  \log(   s AU   / \sqrt r  )  \leq \log( eAU   /\sqrt r )  ( 1 +  \log( s))$$ which implies that
\begin{align*}
 E_\sigma\left[R_n(\mathcal{F}_{n,r})\right]& \leq 12   \sqrt{  \frac{ vr  \log( eAU   /\sqrt r ) }  n }    \int_{  1 } ^\infty   s^{-2}  \sqrt{  1 + \log(  s  ) }     d s\\
  &  =  C     \sqrt{  \frac{ vr  \log( eAU   /\sqrt r ) }  n }  .
\end{align*}
\end{proof}

A further similar result is now given about the class 
$$\mathcal F_r = \{f \in \mathcal{F}: P(f^2) \leq r\}.$$
.

\begin{lemma}\label{lemma:sub-root-UB2}
Let $\mathcal{F}$ be a class of functions that is VC-type with envelope $U>0$ and parameter $v,A\geq 1$. We have, for any $r\leq U^2$,
$$ E[  R_n\{ \mathcal F_r \}] \leq   C  \sqrt { n^{-1}  v  r  \log( 5 AU   /  \sqrt r   ) }   +  8U C^2   n^{-1} v   \log(5 AU   /  \sqrt r   )       $$ 
where $C$ is defined in Lemma \ref{lemma:sub-root-UB}. Moreover if $  r  \geq n^{-1} U^2 $, we obtain
$$ E[  R_n\{ \mathcal F_r\}]\leq  C\sqrt{  v  n^{-1} r   \log(  5 A  \sqrt{ n }    )    } + 8 C^2   U v  n^{-1} \log(  5  A \sqrt{ n }  ) . $$ 

\end{lemma}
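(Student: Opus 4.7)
The plan is to upgrade Lemma \ref{lemma:sub-root-UB} from its empirical $L_2$ constraint $P_n f^2 \le r$ to the population constraint $P f^2 \le r$. The key device is to introduce the random quantity $R := \sup_{f\in\mathcal F_r} P_n f^2$; since $f\in \mathcal F_r$ automatically implies $f\in\mathcal F_{n,R}$, the pointwise inclusion $\mathcal F_r\subset\mathcal F_{n,R}$ holds and $R_n(\mathcal F_r)\le R_n(\mathcal F_{n,R})$. Applying Lemma \ref{lemma:sub-root-UB} conditionally on the data (with the random radius $R\le U^2$) yields
$$E_\sigma R_n(\mathcal F_r)\;\le\; C\sqrt{R\,v\,n^{-1}\log(eAU/\sqrt R)}.$$
Taking full expectation will produce an implicit inequality for $\phi:=E[R_n(\mathcal F_r)]$ that I will solve.

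Next I would control $E[R]$ in terms of $r$ and $\phi$. By symmetrization applied to the squared class $\mathcal F_r^2=\{f^2:f\in\mathcal F_r\}$ combined with the Ledoux--Talagrand contraction principle (valid because $x\mapsto x^2$ is $2U$-Lipschitz on $[-U,U]$ and vanishes at $0$), I obtain $E\sup_{f\in\mathcal F_r}|P_n f^2-Pf^2|\le 8U\phi$, hence $E[R]\le r+8U\phi$.

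It remains to combine the two estimates. Noting that $g(x)=x\log(eAU/\sqrt x)$ is nondecreasing and concave on $(0,U^2]$, I would apply Jensen's inequality twice (once to $\sqrt{\cdot}$, once to $g$) to conclude $\phi^2\le C^2 v n^{-1}g(E[R])$. A short case split then gives $g(E[R])\le(r+8U\phi)\log(eAU/\sqrt r)$: if $E[R]\le r$, monotonicity of $g$ yields $g(E[R])\le g(r)=r\log(eAU/\sqrt r)$; if $E[R]>r$, monotonicity of the logarithm in $R^{-1/2}$ gives $\log(eAU/\sqrt{E[R]})\le\log(eAU/\sqrt r)$. Setting $L=vn^{-1}\log(eAU/\sqrt r)$, I obtain the quadratic inequality $\phi^2-8C^2UL\,\phi-C^2 rL\le 0$, whose solution — via the standard formula and $\sqrt{a+b}\le\sqrt a+\sqrt b$ — yields $\phi\le C\sqrt{rL}+8C^2UL$. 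Since $e<5$, I may enlarge the logarithm to $\log(5AU/\sqrt r)$, recovering the stated first bound. The second assertion follows immediately: when $r\ge U^2/n$, we have $U/\sqrt r\le\sqrt n$, hence $\log(5AU/\sqrt r)\le\log(5A\sqrt n)$, and substitution completes the proof.

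The main obstacle is the careful bookkeeping around the random logarithmic factor $\log(eAU/\sqrt R)$: because we cannot a priori guarantee that $E[R]\ge r$, the replacement of $\log(eAU/\sqrt{E[R]})$ by $\log(eAU/\sqrt r)$ requires the two-case analysis described above, exploiting the concavity/monotonicity properties of $x\log(C/\sqrt x)$ in a unified way. Once this step is in place, the remaining algebra is routine.
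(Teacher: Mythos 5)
Your proposal is correct and follows essentially the same route as the paper's proof: apply Lemma~\ref{lemma:sub-root-UB} with the random radius $\hat\sigma_n^2=\sup_{f\in\mathcal F_r}P_n f^2$, use Jensen twice on the concave maps $\sqrt{\cdot}$ and $x\mapsto x\log(b/x)$, control $E[\hat\sigma_n^2]\le r+8U\,E[R_n(\mathcal F_r)]$ (the paper cites this; you re-derive it by symmetrization and Ledoux--Talagrand contraction, which is the standard proof of that cited result), and solve the resulting quadratic. The only cosmetic difference is in handling the random logarithm: the paper first inflates $eAU$ to $3\sqrt e\,AU$ so that $x\mapsto x\log(9e(AU)^2/x)$ is monotone up to $9U^2$, whereas your two-case split ($E[R]\lessgtr r$) sidesteps that inflation and only enlarges $e$ to $5$ at the very end; both land on the same bound.
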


\begin{proof}
First we apply Lemma \ref{lemma:sub-root-UB} with the largest possible $r $ given by $ \hat \sigma_n^2 = \sup_{f\in \mathcal F_r} P_n (f^2)  $. Note that by definition $P_n(f^2) \leq \hat \sigma_n^2$ for all $f \in \mathcal F_r$, so that
$$ \mathcal F_r = \{ f\in \mathcal F_r \, : \, P_n (f^2 ) \leq \hat \sigma_n^2   \} $$
and we obtain 
$$E_\sigma [  R_n\{\mathcal F_r \}] \leq  C  \sqrt{     \hat \sigma_n^2    n^{-1} v  \log(  e AU    /  \hat \sigma_n    )  ] }.$$
Using twice Jensen inequality (functions $\sqrt x$ and $ a x\log(b/x) $ are both concave), we get
\begin{align*}
E[  R_n\{\mathcal F_r \}]    &  \leq  C  \sqrt{  E [  \hat \sigma_n^2    n^{-1} v  \log(  e AU    /  \hat \sigma_n    )  ] }\\
  & \leq C  \sqrt{   E [  \hat \sigma_n^2  ] n^{-1}  v  \log(  (eAU)^2   /  E [   \hat \sigma_n^2  ]  ) /2    }\\
  & \leq C  \sqrt{   E [  \hat \sigma_n^2   ] n^{-1}  v  \log( 9e  (AU)^2   /  E [   \hat \sigma_n^2  ]  ) /2    }
\end{align*}
where the last point is just convenient for the increasing function property which will be used in the next few line.
From Corollary 3.4 in \cite{10.1214/aop/1176988847}, we obtain 
\begin{align*}
  E [  \hat \sigma_n^2   ]  & \leq  r  + 8U  E [ R_n  ( \mathcal F_r)] .
\end{align*}
 Now remark that $x\log(b/x)$ is increasing for $x\leq  b /  e $.  This is always satisfied for $x = r + 8U n^{-1}   E [ R_n  ( \mathcal F_r)] $  because this quantity is smaller than $9U^2$ which itself is smaller or equal  to $ b / e $ when $b = 9 e  (AU)^2   $ because $A\geq 1$. We therefore obtain
\begin{align*}
  E[  R_n\{\mathcal F_r \}]  
  & \leq C\sqrt{ n ^{-1} v (r  + 8U     E [ R_n  ( \mathcal F_r)] ) \log(  9e (AU  )^2/( r  + 8U   E [ R_n  ( \mathcal F_r)] )  ]  /2 }\\
  & \leq C\sqrt{ n ^{-1} v  (r  + 8U   E [ R_n  ( \mathcal F_r)] )   \log(9e (AU   )^2 /  r  )  /2  }
  \end{align*}
  Therefore
  $$ E [ R_n  ( \mathcal F_r)]  ^2 \leq C^2 n^{-1} v  (  r+ 8U   E [ R_n  ( \mathcal F_r)]  )   \log( 3 \sqrt e  AU   /  \sqrt r   )   = b E [ R_n(\mathcal G) ]  +c $$
with $b =  8U C^2   n^{-1} v   \log( 3 \sqrt e  AU   /  \sqrt r   )    $ and $c = r C^2 n^{-1}  v   \log( 3 \sqrt e  AU   /  \sqrt r   )   $. It implies that 
$E [ R_n  ( \mathcal F_r)]  \leq b  +\sqrt c$ and the first statement follows by remarking that $3 \sqrt e \leq 5$.

\end{proof}

To proceed further, a central tool  % The proof relies on applying the following result stated in  
Theorem 3.3 in \cite{Bartlett2005} which we state below (Theorem~\ref{theo:fast-rates-ingredient} with the functional $T(f) = P(f^2)$ for which  $\Var(f)\leq T(f)$ and $T(\alpha f)\leq \alpha^2 T(f)$.

\begin{theorem}\label{theo:fast-rates-ingredient}
	Let $\mathcal{F}$ be a class of functions with envelope $U>0$ and suppose that $P(f^2) \leq  B P f$ for all $f\in \mathcal F$ for some $B\geq U$. Let $\psi$ be a sub-root function and let $r^\star$ be the fixed point of $\psi$, \ie $\psi(r^\star)=r^\star$. Assume that $\psi$ satisfies, for any $r \geq r^\star$,
	$$
	\psi(r) \geq  B E[  R_n\{f \in \mathcal{F}: P(f^2) \leq r\}]
	$$
	Then, for any $K>1$ and every $\delta>0$, with probability at least $1-\delta $,
	$$
	\forall f \in \mathcal{F} \quad P f \leq \frac{K}{K-1} P_n f+\frac{6 Kr^\star }{B} + \frac{\log(1/\delta) B \left(22  + 5   K\right)}{n}.
	$$
	Also, with probability at least $1 - \delta $,
	$$
	\forall f \in \mathcal{F} \quad P_n f \leq \frac{K+1}{K} P f+\frac{6 Kr^\star }{B} +  \frac{\log(1/\delta)B \left(22  +5  K\right)}{n}.
	$$ 
\end{theorem}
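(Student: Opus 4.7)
Since the result is Theorem~3.3 of \cite{Bartlett2005}, I would reconstruct the standard argument underpinning localized Rademacher bounds rather than invent a new one. The three main ingredients are (i) Talagrand's Bennett/Bernstein-type concentration inequality for suprema of empirical processes, (ii) symmetrization in order to convert expected suprema into Rademacher averages, and (iii) a star-hull/peeling device in order to exploit the Bernstein condition $P f^{2} \le B\,Pf$.

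First, I would apply Talagrand's inequality to the sub-class $\mathcal{F}_{r} = \{f\in\mathcal{F} : Pf^{2} \le r\}$, whose uniform variance is bounded by $r$ and whose envelope is $U$. Symmetrization upper bounds the centered expectation by $2\,\EE R_{n}(\mathcal{F}_{r})$, which by assumption is in turn bounded by $2\psi(r)/B$. With probability at least $1-\delta$ this yields a uniform deviation bound of the form
\begin{align*}
\sup_{f \in \mathcal{F}_{r}} (P f - P_{n} f) \;\le\; \frac{c\,\psi(r)}{B} + c\sqrt{\frac{r\log(1/\delta)}{n}} + \frac{c\, U \log(1/\delta)}{n}.
\end{align*}
Because $\psi$ is sub-root, the map $r\mapsto \psi(r)/\sqrt{r}$ is non-increasing, so $\psi(r) \le \sqrt{r\, r^{*}}$ for every $r\ge r^{*}$. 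Combined with the AM-GM inequality, each term on the right is then controlled by a constant multiple of $r/B$ plus the stated residual $B\log(1/\delta)/n$ contribution.

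Second, to lift the variance restriction $Pf^{2}\le r$ into the mean restriction $Pf \le r/B$ coming from Bernstein, I would use the peeling trick (or equivalently the star-hull construction). One partitions $\mathcal{F}$ into geometric shells $\mathcal{S}_{k} = \{f : 2^{k-1}(r^{*}/B) < Pf \le 2^{k}(r^{*}/B)\}$, applies the previous deviation bound on each shell (noting that $Pf^{2}\le B\,Pf$ keeps the effective variance inside $\mathcal{S}_{k}$ at most $2^{k}r^{*}$), and concludes via a union bound, which costs only a logarithmic factor that can be absorbed into the constants. The ratio $K/(K-1)$ in the statement emerges by optimizing a Young-type decomposition $P f - P_{n} f \le P f / K + \text{(deviation term)}$; a final rearrangement, together with the fixed-point identity $\psi(r^{*}) = r^{*}$, yields the claimed inequality. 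The reverse inequality (for $P_{n} f$ vs.\ $P f$) is obtained by running the same argument with the roles of $P$ and $P_{n}$ swapped.

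The step I expect to be the main obstacle is the peeling argument, because closing the implicit inequality $\text{dev}(f) \le \alpha P f + \text{constants}$ with sharp constants requires a careful balance between the shell radius, the free parameter $K$, and the sub-root fixed point $r^{*}$. This is precisely what forces the tuning parameter $K>1$ to appear in the final statement, and what determines the numerical constants $6K$, $22+5K$ on the right-hand side.
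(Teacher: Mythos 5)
The paper does not prove this statement: it is quoted directly (with $T(f)=P(f^2)$) as Theorem~3.3 of Bartlett, Bousquet and Mendelson (2005), so there is no in-paper proof to compare against — only the original proof in that reference. Your sketch correctly identifies the outer scaffolding (Talagrand concentration for the localized class, symmetrization to pass to Rademacher averages, the sub-root fixed-point identity $\psi(r^*)=r^*$ and the $\psi(r)\le\sqrt{r\,r^*}$ inequality), but the step that carries the localized bound onto all of $\mathcal{F}$ is not peeling, and your parenthetical ``(or equivalently the star-hull construction)'' is the crux of the error: these are \emph{not} equivalent. Bartlett--Bousquet--Mendelson use a single rescaling: for each $f$ define $g_f = \bigl(r/\max(T(f),r)\bigr)f$, so $T(g_f)\le r$ since $T(\alpha f)\le \alpha^2 T(f)$; one application of Talagrand to $\{g_f:f\in\mathcal{F}\}\subset\{g\in\mathrm{star}(\mathcal{F}):T(g)\le r\}$ controls $\sup_f (P-P_n)g_f$, and then for $T(f)>r$ one unfolds $(P-P_n)f = \bigl(T(f)/r\bigr)(P-P_n)g_f$ and uses $T(f)\le B\,Pf$ to convert the multiplier into a $Pf/K$ term. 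No union bound over shells occurs. Your geometric-shell decomposition $\mathcal{S}_k$ with a union bound would require splitting $\delta$ across $O(\log(nU^2/r^*))$ shells, which does \emph{not} vanish into the constants — it would inflate the $B\log(1/\delta)/n$ term by an additional logarithmic factor, and would not reproduce the clean constants $6K$ and $22+5K$ in the statement. The rescaling trick is exactly the device that avoids that loss, so ``peeling'' as you have described it is a genuine gap, not merely an unwritten detail.
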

The next Proposition is key to obtain our main fast rates result Theorem~\ref{theo:fast-rates}. 
 \begin{proposition}\label{prop:simpleFastRateDeviation}
  Let $\mathcal{F}$ be a VC-type class of functions  with envelope $U>0$ and parameters $(\tilde A,\tilde v)$. Assume that $\mathcal{F}$ satisfies the Bernstein condition~\ref{cond:bernsteinF} with constant $B > U$ relative to a probability $P$ on $\mathcal{X}\times \mathcal{Y}$. Then with probability $1-\delta$,
   for all $f \in\mathcal{F}$, 
 \begin{align*}
   P  (f )\leq \frac{K}{K-1}P_{n }(f) 
   + \frac{ c_1 B  K \tilde v \log(  5\tilde A  \sqrt{ n }  / \delta)}{
   n } , 
 \end{align*}
 where $c_1>0$ is an explicit universal constant given in the
 proof.

 Also, with probability at least $1 - \delta $,
 $  \forall  f  \in {\mathcal{F}} $,
 \begin{align*}
   P_{n } (f ) 
   \leq \frac{K+1}{K}P (f ) %	&\\
   + \frac{ c_1 B  K    \tilde v \log(  5\tilde A  \sqrt{ n }  / \delta )}{
   n } .  
 \end{align*}

\end{proposition}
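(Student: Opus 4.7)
The strategy is to invoke Theorem~\ref{theo:fast-rates-ingredient} (the fixed-point Rademacher technique of \cite{Bartlett2005}) with a carefully chosen sub-root function $\psi$ that upper bounds the local Rademacher complexity $B\,\mathbb{E}[R_n\{f\in\mathcal{F}:P(f^2)\leq r\}]$. The Bernstein hypothesis~\ref{cond:bernsteinF} gives exactly the requirement $P(f^2)\leq B Pf$ that appears in Theorem~\ref{theo:fast-rates-ingredient}, so all that remains is to produce $\psi$ and estimate its fixed point $r^\star$.

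First, I would apply Lemma~\ref{lemma:sub-root-UB2} (in the regime $r\geq U^2/n$, which will cover the range we care about since $B>U$) to obtain
\[
\mathbb{E}[R_n\{\mathcal{F}_r\}]\;\leq\; C\sqrt{\tilde v r\,n^{-1}\log(5\tilde A\sqrt{n})}\;+\;8C^2 U\tilde v\,n^{-1}\log(5\tilde A\sqrt{n}).
\]
I then define
\[
\psi(r)\;:=\;B\,C\sqrt{\tilde v\,r\,n^{-1}\log(5\tilde A\sqrt{n})}\;+\;8B\,C^2 U\tilde v\,n^{-1}\log(5\tilde A\sqrt{n}).
\]
Writing $\psi(r)=a\sqrt{r}+b$ with $a,b\geq0$, one checks that $r\mapsto\psi(r)/\sqrt{r}=a+b/\sqrt{r}$ is nonincreasing, so $\psi$ is indeed sub-root; and by construction $\psi(r)\geq B\,\mathbb{E}[R_n\{\mathcal{F}_r\}]$ for all $r$ in the relevant range (for $r<U^2/n$ the statement is still valid up to enlarging the envelope argument, or equivalently one uses Lemma~\ref{lemma:sub-root-UB2}'s first bound).

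Next I would bound the fixed point $r^\star$. Setting $x=\sqrt{r^\star}$ in the equation $r^\star=a\sqrt{r^\star}+b$ yields $x^2-ax-b=0$, hence $x\leq a+\sqrt{b}$ and therefore $r^\star\leq 2a^2+2b$. Substituting the values of $a,b$ and using $B\geq U$ to absorb the term $BU$ into $B^2$, one obtains
\[
r^\star\;\leq\;c_0\,B^2\,\tilde v\,n^{-1}\log(5\tilde A\sqrt{n})
\]
for some explicit universal constant $c_0$.

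Finally I would apply Theorem~\ref{theo:fast-rates-ingredient}: with probability at least $1-\delta$, for every $f\in\mathcal{F}$,
\[
Pf\;\leq\;\tfrac{K}{K-1}P_n f+\tfrac{6Kr^\star}{B}+\tfrac{(22+5K)\,B\,\log(1/\delta)}{n},
\]
and symmetrically for $P_n f$. Substituting the bound on $r^\star$ and combining $\log(5\tilde A\sqrt{n})+\log(1/\delta)\leq\log(5\tilde A\sqrt{n}/\delta)$ into a single logarithmic factor gives the announced bound with an explicit universal constant $c_1$ depending only on $c_0$ and the numerical constants $22,5,6$ from Theorem~\ref{theo:fast-rates-ingredient}.

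The main technical point is the sub-root construction together with the fixed-point computation; the rest is bookkeeping. A minor subtlety to handle is the regime $r^\star<U^2/n$, where one must either fall back to the first bound of Lemma~\ref{lemma:sub-root-UB2} or observe that the resulting bound is already weaker than the one claimed, so that the statement holds trivially.
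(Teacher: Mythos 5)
Your proposal follows the same route as the paper: bound the local Rademacher complexity via Lemma~\ref{lemma:sub-root-UB2}, build a sub-root $\psi$ of the form $a\sqrt{r}+b$, estimate its fixed point $r^\star$, and feed the result into Theorem~\ref{theo:fast-rates-ingredient}. The argument is correct in substance, and the only difference from the paper is a cosmetic choice in $\psi$: you keep the additive constant proportional to $BU$, whereas the paper inflates it to a multiple of $B^2$. That inflation is precisely what dispels the ``subtlety'' you raise at the end. With the paper's $\psi$ one has, for $r\geq r^\star$, $r\geq\psi(r)\geq c\geq 8B^2C^2\tilde v/n>U^2/n$ directly (using $B>U$, $C\geq 12$, $\tilde v\geq 1$), so the second bound of Lemma~\ref{lemma:sub-root-UB2} always applies and the regime $r<U^2/n$ never needs to be considered. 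In fact, even with your smaller $\psi$, the fixed point already satisfies $r^\star\geq b'=8BUC^2\tilde v n^{-1}\log(5\tilde A\sqrt{n})>U^2/n$ for the same reasons, so your ``fall-back to the first bound / the claim holds trivially'' caveat is moot; it should be replaced by the one-line observation that $r^\star\geq b'>U^2/n$. Apart from clarifying this point, the proof matches the paper's, with the fixed-point estimate $r^\star\leq 2(a^2+b')$ and the final substitution into Theorem~\ref{theo:fast-rates-ingredient} both handled as in the original.
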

\begin{proof}
In light of the upper bound given in Lemma~\ref{lemma:sub-root-UB2}, and in order to apply Theorem~\ref{theo:fast-rates-ingredient}, 
 we introduce 
$$ \psi(r) =  b \sqrt {r} +c$$
with $ b = B C\sqrt{  \tilde v   n^{-1}    \log(  5\tilde A  \sqrt{ n }    )   }$ 
 and $c =   8 B^2   C^2    n^{-1} \tilde v  \log(  5\tilde A  \sqrt{ n }    )   $.
 The  function $\psi$, defined on $\mathbb R_+$, is sub-root with unique fixed point $r^*$ given by 
$
  \sqrt {r^*} = { (b+ \sqrt { b^2 +4c} ) / 2}    \leq b + \sqrt c$. Therefore
\begin{align}\label{ineq:fixed-point-UB}
   {r^*}   \leq  2 (b^2 +  c)  \leq   18 B^2    C^2  n^{-1} \tilde v \log(  5\tilde A  \sqrt{ n }    ) .
\end{align}
  We have that 
%and let $\mathcal{G}=star\left(\widetilde{\mathcal{F}},0\right)$. 
  $r\geq r^*$ implies that $ r \geq \psi(r) \geq c \geq  8 B^2 C^2 n^{-1} v  $ which is larger than $U^2 n^{-1}$  %\sout{$ (2U)^2/n$}
  by our assumption on the constant $B$. Hence the second inequality in  Theorem~\ref{lemma:sub-root-UB2} holds true, which implies that whenever $ r\geq r^\star$, we have
$$ B E\left( R_n\{  f\in \tilde{\mathcal{F}}: P( f ^2) \leq r\}\right) \leq    B C\sqrt{  \tilde v   n^{-1}  r  \log(  5 \tilde A  \sqrt{ n }    )   } +    8 B U   C^2    n^{-1}  \tilde v  \log(  5 \tilde A  \sqrt{ n }    )   \leq  \psi (r)  .$$
Applying Theorem \ref{theo:fast-rates-ingredient} combined with the
previous bound for $r^*$, we obtain, with probability $1- \delta $,
\begin{align*}
\forall f \in  { \mathcal{F} }, \quad P (  f   )&\leq \frac{K}{K-1} P_n (  f  ) +   \frac{B S}{n} ,
\end{align*}
where 	
$$S =      c_1  K     \tilde v \log(  5  \tilde A  \sqrt{ n }    )       +  \log(1 /\delta)   (22 +5 K )  ,  $$
and $c_1 =18 \times 6 C^2  = 108 C^2$. 
It remains to check that, since $K>1$ and $v\geq 1$,
$$ S  \leq c_2  K    v \log(  5A  \sqrt{ n }  / \delta  ) ,      $$
with $c_2 = (c_1 \vee  27 ) = c_1$. We have established the first statement. For the second statement, the argument is similar, using this time  % we follow the same reasoning as before and use instead
the second statement of Theorem~\ref{theo:fast-rates-ingredient}. % and the converse inequality in Theorem \ref{th_chernoff}.
\end{proof}

\subsection{Proof of Theorem \ref{theo:fast-rates} }

We start by applying Proposition~\ref{prop:simpleFastRateDeviation} to the  %Theorem~\ref{theo:fast-rates-ingredient} to the
class
${\mathcal{F}}= \{(1-q) f_q I_+ + q f_q I_-,\; q\in(0,1),
f_q\in\mathcal{F}_q \}$  which, by assumption, is
%which by Lemma \ref{lemma_vc_pres}, second assertion, is
 of VC-type with parameters $(\tilde A, \tilde v)$ and envelope $2U$.
 By construction, any $f\in\mathcal{F}$ writes as
  $f = (1-q) f_q I_+ + q f_q I_-$ for some $f_q\in\mathcal{F}_q$. 
  The first statement of Proposition~\ref{prop:simpleFastRateDeviation} writes in this context as follows:
  
  With probability at least $1-\delta$, for all $q\in(0,1)$, for all $f_q\in\mathcal{F}_q$, 
 \begin{align*}
   P \big((1-q) f_q I_+ + q f_q I_- \big) &\le    \frac{K}{K-1}P_{n }\big((1-q) f_q I_+ + q f_q I_- \big) +  \\
  &  \frac{ c_1 B  K \tilde v \log(  5\tilde A  \sqrt{ n }  / \delta)}{
   n}. 
 \end{align*}
 Dividing both sides by $2 q(1-q)$ yields
 
 \begin{align*}
   \frac{1}{2} \big(q^{-1}P (f_q I_+) + (1-q)^{-1}P( f_q I_-) \big)
   &\le    \frac{K}{K-1}  \frac{1}{2}\big(q^{-1} P_n(f_q I_+) + (1-q)^{-1}P_n( f_q I_-) \big) +  \\
  &  \frac{ c_1 B  K \tilde v \log(  5\tilde A  \sqrt{ n }  / \delta)}{
   2nq(q-1)}, 
 \end{align*}
 which means precisely, by definition of $P_q$ and $P_{n,q}$ (see Section~\ref{sec:background}), 
 \begin{align*}
 P_q f_q  %  \frac{1}{n2} \big(q^{-1}P (f_q I_+) + (1-q)^{-1}P( f_q I_-) \big)
   &\le    \frac{K}{K-1} P_{n,q} +    \frac{ c_1 B  K \tilde v \log(  5\tilde A  \sqrt{ n }  / \delta)}{
   2nq(q-1)}.  
 \end{align*}
 \qed

 \subsection{Proof of Lemma \ref{lemma:bernstein-cond}}
% \as{insérer la preuve du corps du papier ici}
 Fix $q\in(0,1)$ and consider the function
 $\varphi(g) = P( (1-q) \ell_g I_+ + q\ell_g I_- ), g\in\mathcal{G}$.  Notice that
 $\varphi(g) = 2q(1-q) \risk_q(g)$. The function $\varphi$  is a convex combination of the functions $\varphi_+(g) = P( \ell_g I_+)$ and
    $\varphi_-(g) = P (\ell_g I_- )$ with coefficients $(1-q, q)$. By assumption of the statement,  $\varphi_+$ and $\varphi_-$ are both strongly convex with respective parameters $ p\lambda, (1-p)\lambda$, meaning that
    for all $\alpha\in(0,1)$ and $g_1,g_2$, it holds that for $s\in\{+,-\}$
    \begin{align*}
&    \varphi_s(\alpha g_1 + (1-\alpha)g_2 )  \le \\ & \alpha \varphi_s(g_1) +
    (1-\alpha) \varphi_s(g_2) - \frac{\alpha(\alpha-1)}{2} \pi_s \lambda\|g_1- g_2\|^2, 
    \end{align*}
   with $\pi_{+} = p$ and $\pi_- = 1-p$.  By convex combination with coefficients $q,1-q$ we obtain % the latter property is also true for $\varphi$, thus
    \begin{align*}
      \varphi(\alpha g_1 + (1-\alpha)g_2) & \le \alpha \varphi(g_1) + (1-\alpha)\varphi(g_2) % +  \dots \\
      % &
        \frac{\alpha(\alpha-1)}{2} \left( ( 1-q)p + q(1-p)\right) \lambda\|g_1- g_2\|^2.
    \end{align*}
    
  Thus  $\varphi$ is $\lambda'$-strongly convex with $\lambda' = \left( (1-q)p + q(1-p)\right) \lambda$.  Now the score $g_{q}^*$ is a minimizer of $\risk_q$, whence it is also a minimizer of $\varphi=2q(1-q)\risk_q$.  
    By strong convexity and because is a minimizer of $\varphi$ % of $\varphi = P\tilde\ell_{q, \cdot} $
    we obtain that for any $g\in\mathcal{G}$, 
    \begin{equation}%\label{eq:sc-tilde}
    %   P\tilde \ell_{q,g} - P\tilde \ell_{q, g^*_q}
      P\big( % \underbrace{
      (1-q)(\ell_g - \ell_{g_q^*})I_+  +  q(\ell_g - \ell_{g_q^*})I_-
      % }_{=h_{q,g}}
      \big)
      =    \varphi(g) - \varphi(g_q^*)  
      \ge \lambda' \|g -  g^*_q\|^2. 
    \end{equation}
    In other words for all $h \in \mathcal{H}$ of the form
    $h = h_{q,g }= (1-q)(\ell_g - \ell_{g_q^*})I_+ + q(\ell_g -
    \ell_{g_q^*})I_-$ we have
    \begin{equation}
      \label{eq:sc-tilde}
      P(h_{q,g}) \le \lambda' \|g -  g^*_q\|^2. 
    \end{equation}

% \as{si on veut des parametres de convexité $\lambda p$,  $\lambda(1- p)$ l'argument ne marche plus, ou alors il faut se restreindre à des $q$ proches de $p$ mais c'est technique, je n'ai pas le temps pour vendredi. }

    On the other hand, the second assumption of the statement is that
    \begin{equation*}
      \| (\ell_{g_1}  - \ell_{g_2}) I_s \|_{L_2(P)} \le \sqrt{\pi_s} L \|g_1- g_2\|, 
    \end{equation*}    
    Thus we may write for all $g_1,g_2\in\mathcal{G}$, 
    \begin{align*}
     % &      \|\tilde \ell_{q, g_1} - \tilde \ell_{q, g_2}\|_{L^2(P)} \\
          \|(1-q)(\ell_{g_1} - \ell_{g_2})I_+  + q (\ell_{g_1} - \ell_{g_2}) I_-
           \|_{L^2(P)}
               & \le  (1-q) \| (\ell_{g_1}  - \ell_{g_2}) I_+ \|_{L_2(P)} +
              q \| (\ell_{g_1}  - \ell_{g_2}) I_- \|_{L_2(P)} \\
        & \le L \left((1-q)\sqrt{p} + q\sqrt{1-p} \right)  \|g_1- g_2\|. 
    \end{align*}
    %where the first inequality is Minkowski inequality.
    In other words,    
    \begin{equation*}%\label{eq:lipschitz-tilde}
      P\Big[  \big((1-q)(\ell_{g_1} - \ell_{g_2})I_+  + q (\ell_{g_1} - \ell_{g_2}) I_- \big)^2 \Big] \le (L')^2 \|g_1-g_2\|^2, 
    \end{equation*}
    with $L' =  L \left((1-q)\sqrt{p} + q\sqrt{1-p} \right)$.
    Choosing $g_1=g$ and $g  = g_q^*$ yields that for any $h= h_{q,g}\in\mathcal{H}$ as above, 
    \begin{equation}
      \label{eq:lipschitz-tilde}
      P (h_{q,g}^2) \le L'^2 \| g - g_q^*\|^2.
    \end{equation}
    Combining~(\ref{eq:sc-tilde}) and~(\ref{eq:lipschitz-tilde}), we obtain  %with $g_1=g$ and $g_2 = g_{q}^*$ yields
    \begin{equation*}
      P (\tilde \ell_{q,g} - \tilde \ell_{q, g_q^*} )^2 \le \frac{L'^2}{\lambda'}
       P(\tilde \ell_{q,g} - \tilde \ell_{q, g^*_q} ). 
     \end{equation*}
     Now by Jensen inequality applied to the function $t\mapsto t^2$ and the convex combination with coefficents $(1-q), q$,  we have
     \begin{align*}
       (L')^2 & \le L^2 \left( (1-q) \sqrt{p}^2 + q\sqrt{1-p}^2 \right), 
     \end{align*}
     so that $(L')^2/\lambda' \le L^2/\lambda$, 
     which concludes the proof. 
     \subsection{Proof of Lemma~\ref{lemma:VC-tildeF}}
       Write
  \begin{align*}
    {\mathcal{ H}} &= \{ q (\ell_g - \ell_{g^*_q})I_+ + (1-q) (\ell_g - \ell_{g^*_q})I_- \}%  \\
    % &
      \subset \{ q f  + (1-q) g , q\in(0,1), f\in \mathcal{L}'_+, g \in \mathcal{L}'_- \} = \mathcal{F}'
  \end{align*}
  where $\mathcal{L}'_s = \{ (\ell_1 - \ell_2)I_s \, \; \ell_1,\ell_2\in\mathcal{L} \}$.
  From Lemma~\ref{lemma_vc_pres}(1., 2.), each $\mathcal{L}'_s$ is of VC-type with envelope $2U$ and parameters $(2v, 2A)$.  Using the $4^{th}$ statement of the same lemma, also $\mathcal{F}'$ is of VC type with envelope 2U and parameters $(2(2v+1),3*2A )$.

 \section{Numerical experiments: Real world dataset}

Our aim, just as in the main paper, is to illustrate the decision boundary of the $k$-nn classifiers on real-world datasets. To do so, we follow the same procedure as the main paper, but instead of using synthetic data, we employ six real-world datasets (Pima, Breast, Cardio, Sattelite, Annthyroid, Ionosphere) from the ODDS repository\footnote{http://odds.cs.stonybrook.edu}. Figures \ref{fig:breast} to \ref{fig:satellite} display the balanced accuracy ($1-\risk^{0-1}_p$) of the balanced $k$-nn as function of $(k,p)$, we make the proportion of positive class $p$ vary by randomly removing positive examples. Similar to the findings on synthetic data, these experiments suggest that a large number of neighbors $k$ should be chosen relative to $p = p_n$ to ensure the consistency of the nearest neighbors method.
It's important to note, however, that the learning boundary appears somewhat more noisy than in the synthetic data case. This is indeed not surprising since the number of examples available is significantly smaller in comparison to the previous simulation.
\begin{figure}[!h]
	\minipage{0.45\textwidth}
	\includegraphics[width=\linewidth]{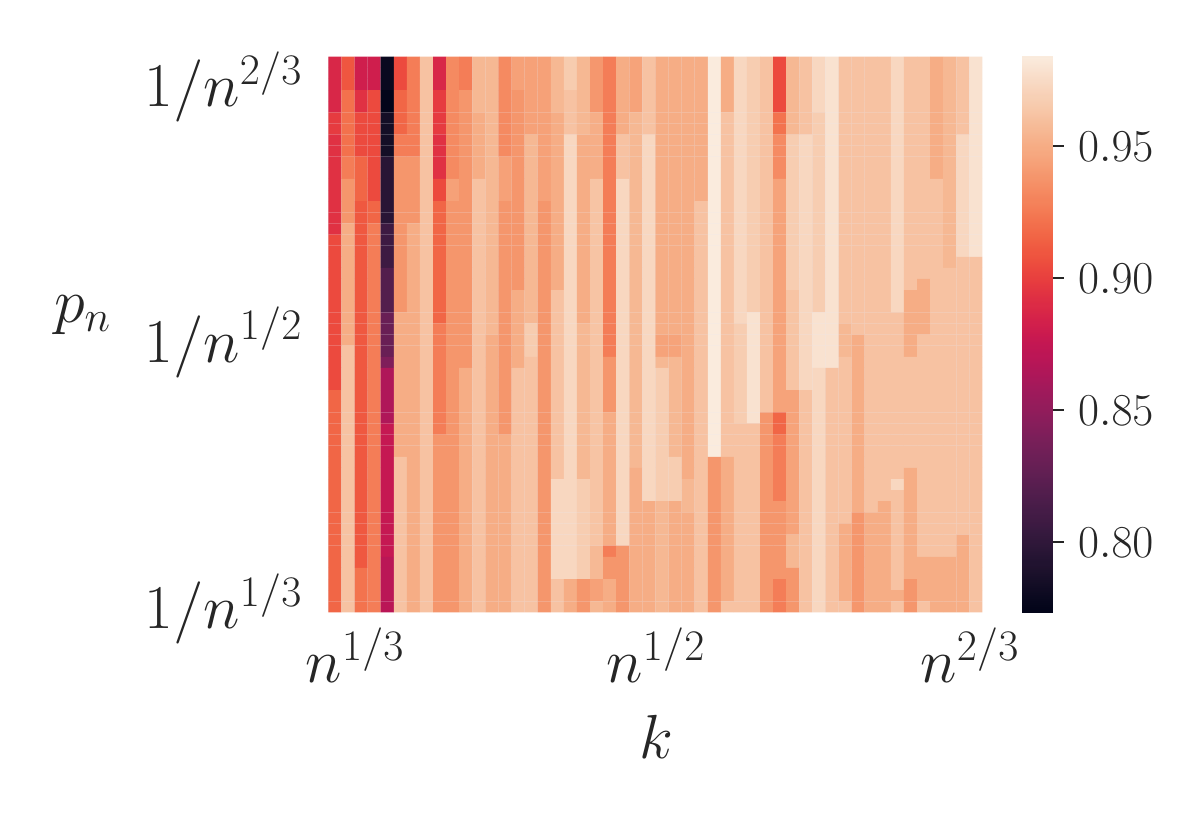}
	\caption{ Balanced accuracy heat map for the Breast dataset.}\label{fig:breast}
	\endminipage
	\hfill
	\minipage{0.45\textwidth}
	\includegraphics[width=\linewidth]{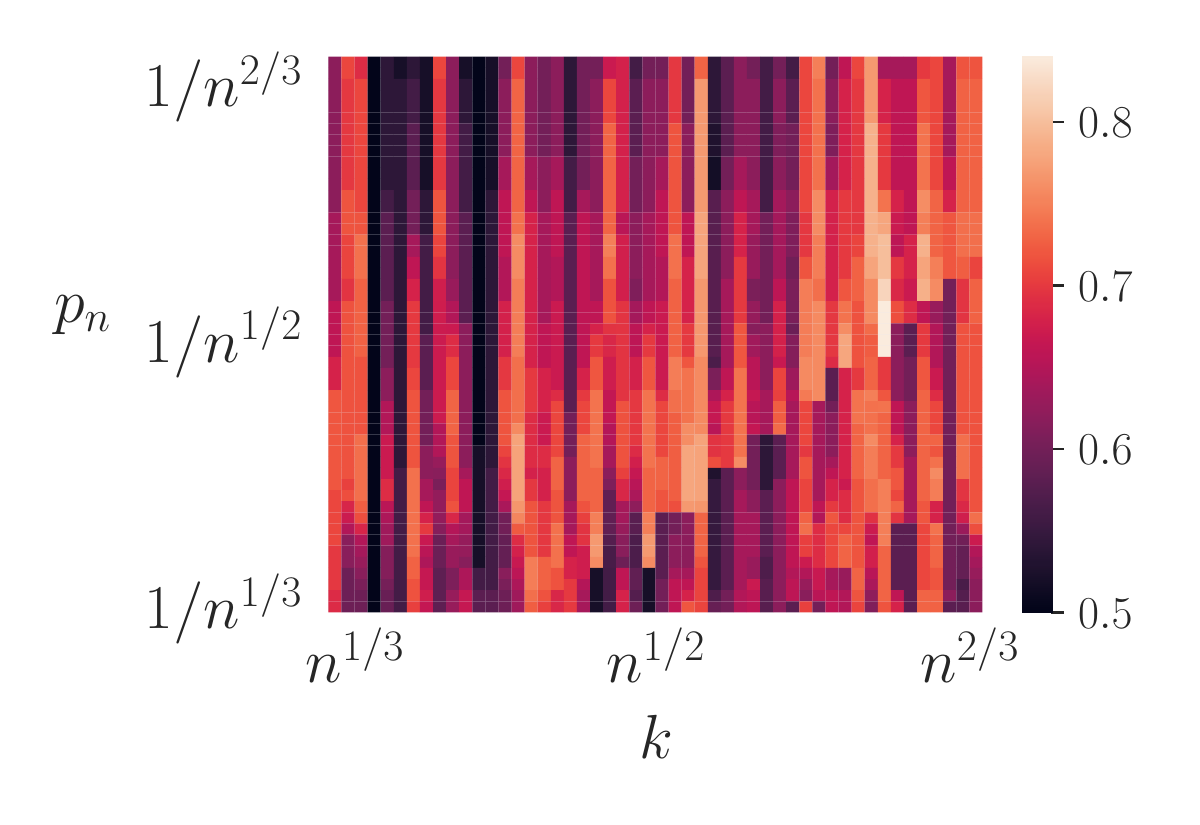}
	\caption{Balanced accuracy heat map for the Ionosphere dataset.}\label{fig:ionosphere}
	\endminipage
\end{figure}
\begin{figure}[!h]
	\minipage{0.45\textwidth}
	\includegraphics[width=\linewidth]{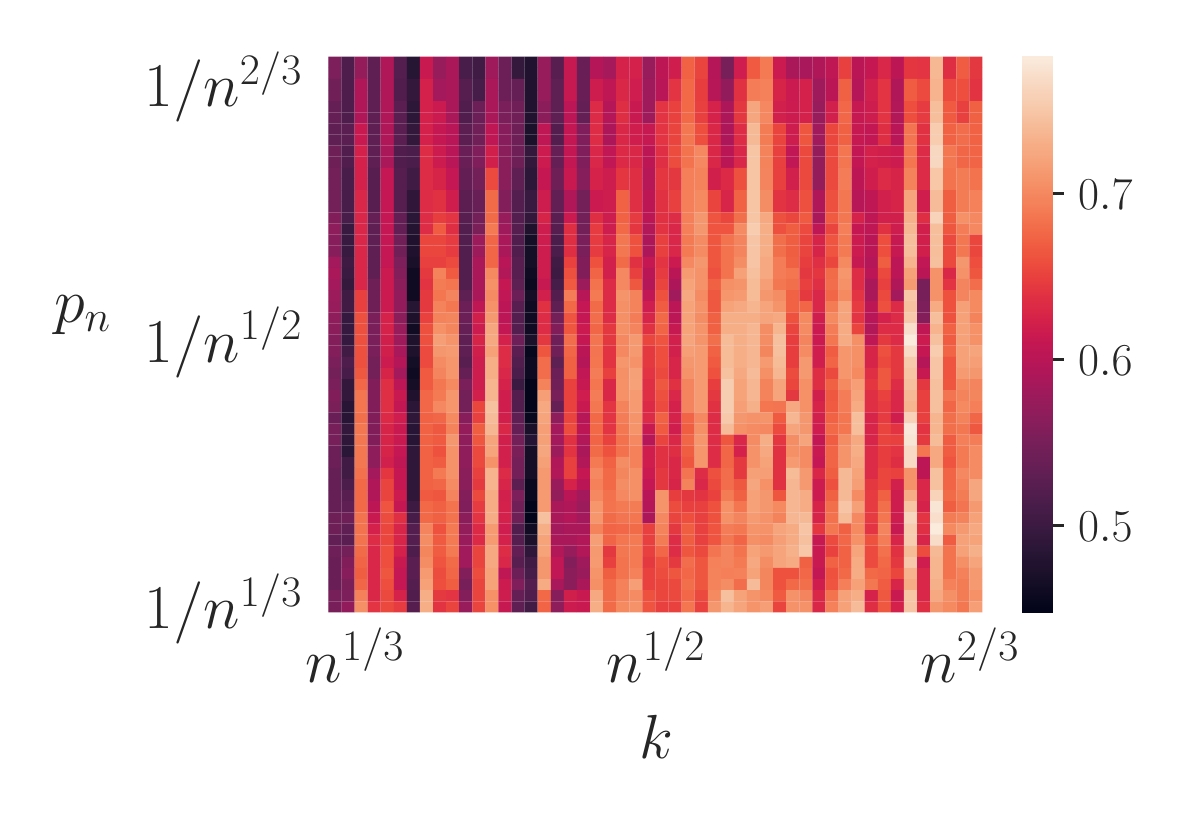}
	\caption{ Balanced accuracy heat map for the Pima dataset.}\label{fig:pima}
	\endminipage
	\hfill
	\minipage{0.45\textwidth}
	\includegraphics[width=\linewidth]{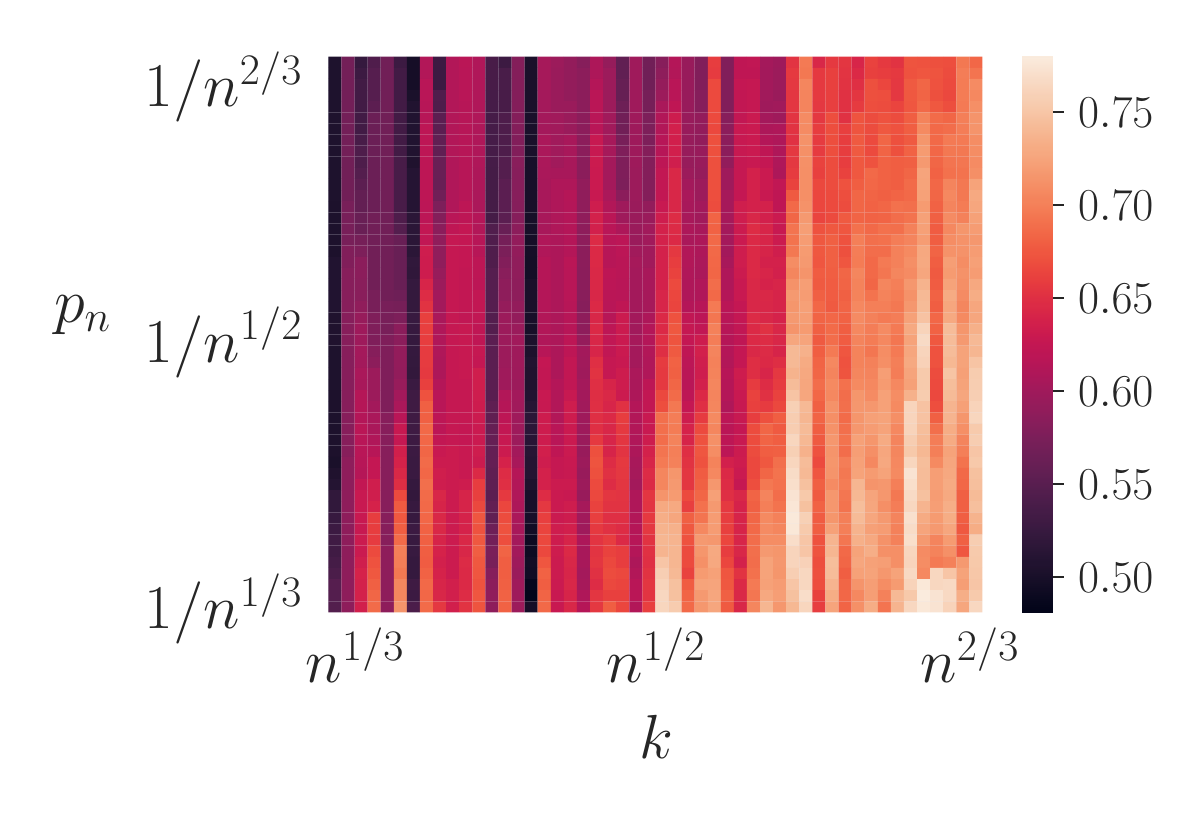}
	\caption{Balanced accuracy  heat map for the Annthyroid dataset.}\label{fig:annthyroid}
	\endminipage
\end{figure}
\clearpage
\begin{figure}
	\vspace{-160mm}
	\minipage{0.45\textwidth}
	\includegraphics[width=\linewidth]{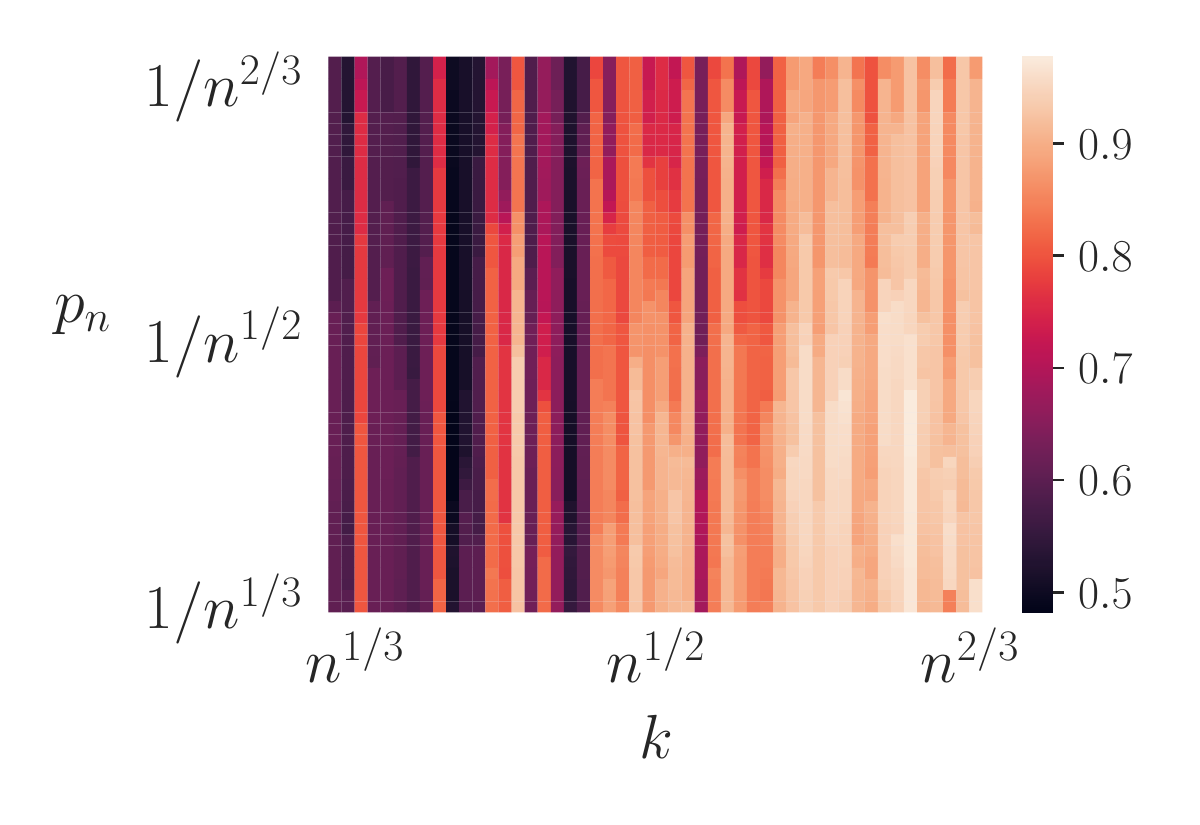}
	\caption{Balanced accuracy  heat map for the Cardio dataset.}\label{fig:cardio}
	\endminipage
	\hfill
	\minipage{0.45\textwidth}
	\includegraphics[width=\linewidth]{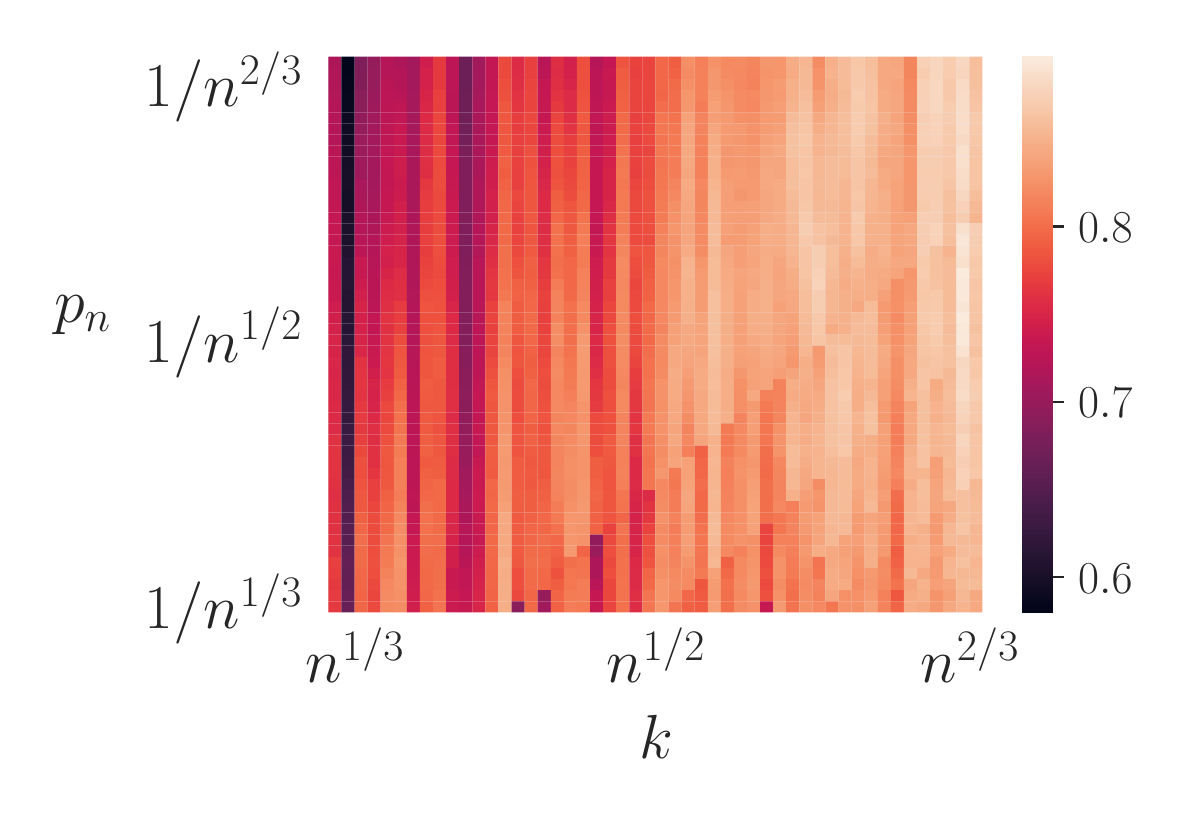}
	\caption{Balanced accuracy   heat map for the Satellite dataset.}\label{fig:satellite}
	\endminipage
\end{figure}
 	
%%% Local Variables:
%%% mode: latex
%%% TeX-master: "main-imbalanced"
%%% End:

\end{document}